\documentclass[11pt]{article}

\usepackage[margin=1in]{geometry}
\usepackage[T1]{fontenc}
\usepackage[utf8]{inputenc}
\usepackage{lmodern}
\usepackage{microtype}
\usepackage{amsmath,amssymb,amsthm,mathtools,bm}
\usepackage{booktabs,tabularx,array,multirow}
\usepackage{siunitx}
\usepackage{enumitem}
\usepackage{graphicx}
\usepackage{float}
\usepackage{xcolor}
\usepackage{hyperref}
\usepackage[nameinlink,capitalise]{cleveref}
\usepackage{url}
\usepackage{tcolorbox}

\definecolor{navy}{RGB}{25,55,95}
\definecolor{lightblue}{RGB}{238,245,252}
\hypersetup{
  colorlinks=true,
  linkcolor=navy,
  citecolor=navy,
  urlcolor=navy,
  pdftitle={Resolution-Aware Experimental Design under Partial Identifiability}
}

\newtheorem{theorem}{Theorem}[section]
\newtheorem{proposition}[theorem]{Proposition}
\newtheorem{lemma}[theorem]{Lemma}
\newtheorem{corollary}[theorem]{Corollary}
\theoremstyle{definition}

\theoremstyle{remark}
\newtheorem{remark}[theorem]{Remark}

\newcommand{\E}{\mathbb{E}}
\newcommand{\Pp}{\mathbb{P}}
\newcommand{\R}{\mathbb{R}}
\newcommand{\ind}{\mathbf{1}}
\newcommand{\TV}{\operatorname{TV}}
\newcommand{\argmin}{\operatorname*{arg\,min}}
\newcommand{\argmax}{\operatorname*{arg\,max}}
\newcommand{\cE}{\mathcal{E}}
\newcommand{\cY}{\mathcal{Y}}
\newcommand{\dd}{\,\mathrm{d}}

\newcommand{\GwaeTailJ}{1.788292}
\newcommand{\GwaeAverageJ}{1.047271}
\newcommand{\GwaeTailMOneWorstFive}{0.013781}
\newcommand{\GwaeAverageMOneWorstFive}{0.433758}
\newcommand{\GwaeTailMTwoWorstFive}{0.002977}
\newcommand{\GwaeAverageMTwoWorstFive}{0.875844}

\newcommand{\MethaneReferenceCost}{0.061716}
\newcommand{\MethaneUcbPL}{0.047649}
\newcommand{\MethaneUcbLH}{0.037982}
\newcommand{\MethaneUcbMVK}{0.047738}
\newcommand{\MethaneFrontierRows}{%
0.00 & 1.86014 & 0.56993 & 0.06250 \\
0.01 & 2.20015 & 0.39993 & 0.00281 \\
0.02 & 1.90511 & 0.54744 & 0.01375 \\
0.05 & 1.75046 & 0.62477 & 0.02475 \\
0.10 & 1.72207 & 0.63897 & 0.02872 \\
0.20 & 1.68874 & 0.65563 & 0.03405 \\
1.00 & 1.56981 & 0.71510 & 0.04343 \\
}

\title{Resolution-Aware Experimental Design under Partial Identifiability}
\author{
Sofianos-Panagiotis Fotias\\
National Technical University of Athens
}
\date{}

\begin{document}
\maketitle

\begin{abstract}
Experimental design is commonly framed as choosing the experiment expected to provide the most information. Under partial identifiability however, persistent nuisance uncertainty can make the same observation carry different structural meanings. We introduce Resolution-Aware Experimental Design (RAED), which selects an experiment by the smallest expected nonempty structural candidate set achievable subject to false-exclusion control. We prove an exact cross-nuisance aliasing separation: an experiment can be preferred by structural and full-latent information gain, average classification, and nuisance-marginalized informativeness while having arbitrarily poorer valid structural resolution. RAED nevertheless preserves the expected ordering under a genuine composite Blackwell comparison. To make this criterion operational, we develop a learned score-based implementation with finite-sample nuisance-average and positive-tail calibration, and characterize a rare-tail sample-complexity obstruction. Under constrained sensing, two subsurface-flow benchmarks exhibit genuine RAED--expected-information-gain (EIG) experiment-selection disagreements, with the clearest and largest held-out resolution differences in WCA. In a fluvial benchmark, tail protection changes the selected physical experiment and replaces hard-region false exclusions primarily with explicit ambiguity. In a mechanistic methane-oxidation benchmark, a prospectively specified 5\% false-exclusion tolerance also yields a nontrivial finite-sample population guarantee for tail-sensitive nuisance risk, with 95\% joint confidence across all three structural families.
\end{abstract}

\section{Introduction}
\label{sec:introduction}

Experimental design asks which intervention, measurement, or control should be chosen before future data are observed. In Bayesian experimental design, a standard principle is to prefer experiments expected to change posterior beliefs about an unknown model or parameter as much as possible. Expected information gain formalizes this idea by measuring the expected divergence between prior and posterior beliefs, equivalently the mutual information between the unknown quantity and the future observation \cite{huan2024,rainforth2024}. This is natural when the scientific objective is posterior concentration. But in many model-discrimination problems, observations are also shaped by persistent nuisance variables whose values can substantially alter the response. In such settings, a more direct objective is to determine which structural explanations can be excluded while controlling the risk of excluding the truth. Suppose the structural target is

$$
M\in\{1,\ldots,K\},
$$

and experiment $e$ produces an observation $Y$. Rather than force a single structural label, the terminal decision may return a nonempty candidate set

$$
\Gamma_e(Y)\subseteq\{1,\ldots,K\}.
$$

A singleton corresponds to complete structural resolution, whereas a larger set records ambiguity that cannot be safely removed. Because the nuisance state $\theta$ is not observed at decision time, the same nuisance-blind rule $\Gamma_e$ must map observations to structural conclusions across all nuisance states covered by the validity requirement. The design objective is therefore to choose the experiment that supports the smallest valid candidate set.

Resolution-Aware Experimental Design (RAED) formalizes this idea. For each structural family $k$, the observation law depends on a persistent nuisance state $\theta\in\Theta_k$. Given a false-exclusion tolerance $\alpha$, RAED controls the probability of excluding the true structural family and evaluates experiment $e$ through the expected candidate-set size
\begin{equation}
J_e=\E|\Gamma_e(Y)|.
\label{eq}
\end{equation}
Validity can be required at every admissible nuisance state, on average across nuisance states, or over the worst $\delta$-fraction of the nuisance distribution, where $\delta\in(0,1]$ specifies the fraction being protected. Smaller values of $\delta$ concentrate the validity requirement on increasingly adverse nuisance regimes, while $\delta=1$ recovers nuisance-average validity. Together, the false-exclusion tolerance $\alpha$ and nuisance-tail level $\delta$ determine the trade-off between structural resolution and the strength of validity protection.

The distinction matters because nuisance averaging can conceal incompatible structural meanings of the same observation. An experiment may be highly informative about $M$ after nuisance averaging, or even identify $M$ perfectly at each fixed nuisance state, while no single nuisance-blind terminal rule can safely translate its observations into a precise structural conclusion. We call this phenomenon cross-nuisance structural aliasing. It is an experiment-level form of partial identifiability: the issue is not simply overlap between averaged predictive distributions, but whether a single structural decision rule can remain valid across the nuisance states that the experiment must tolerate.

Table~\ref{tab:intro-cyclic} gives a concrete four-family example. Let $M\in\{0,1,2,3\}$ denote the structural family and let the nuisance state be $Z\in\{0,1,2,3\}$. Consider an experiment $e_I$ whose observation is
$$
Y=M\oplus Z \pmod 4.
$$
At each fixed $Z$, the observation uniquely determines $M$. However, once $Z$ is unknown this identification breaks down. The same observed value can correspond to different structural families under different nuisance states.

\begin{table}[H]
\centering
\small
\setlength{\tabcolsep}{0.95em}
\renewcommand{\arraystretch}{1.18}
\begin{tabular}{c@{\hspace{0.75em}}|@{\hspace{0.75em}}cccc}
\toprule
$Z\backslash M$ & $0$ & $1$ & $2$ & $3$ \\
\midrule
$0$ & \textbf{0} & $1$ & $2$ & $3$ \\
$1$ & $1$ & $2$ & $3$ & \textbf{0} \\
$2$ & $2$ & $3$ & \textbf{0} & $1$ \\
$3$ & $3$ & \textbf{0} & $1$ & $2$ \\
\bottomrule
\end{tabular}

\vspace{0.35em}
\footnotesize
Entries are observations $Y=M\oplus Z\pmod 4$; bold entries mark the same observation $Y=0$.

\caption{\textbf{Cross-nuisance structural aliasing at $K=4$.}
Each row is a permutation of the structural label, so $e_I$ identifies $M$ perfectly at every fixed $Z$. Across nuisance states, the same observation $Y=0$ is compatible with every $M$. A single nuisance-blind terminal rule therefore cannot assign that observation a unique structural meaning. Theorem~\ref{thm:paper-aliasing} compares this information-favoured experiment with a nuisance-independent experiment $e_R$ that achieves singleton RAED resolution.}
\label{tab:intro-cyclic}
\end{table}

This perspective leads to three main contributions.
\begin{enumerate}[leftmargin=1.5em]
\item \textbf{A resolution-aware design objective.} RAED places a randomized nonempty structural candidate set at the end of the experiment and selects the experiment with the smallest expected candidate-set size subject to a declared false-exclusion tolerance. Its validity criterion can range from protection at every nuisance state to tail-sensitive and nuisance-average guarantees.

\item \textbf{A theory of when information and structural resolution disagree.} We identify cross-nuisance structural aliasing as a mechanism by which an experiment can appear highly informative while still failing to support a precise nuisance-blind structural decision. An exact cyclic construction shows that this disagreement can grow with the number of structural families. We also show that RAED preserves the expected ordering under genuine composite informativeness and that the separation persists beyond exact aliasing.

\item \textbf{A learned implementation with finite-sample calibration.} We develop score-based decision rules with independent calibration for nuisance-average and tail-sensitive validity, turning the population objective into a practical experiment-selection procedure. Across two subsurface-flow benchmarks, RAED and expected information gain select different experiments and produce materially different held-out structural resolution. In a fluvial benchmark, tail protection changes the selected well test and replaces hard-region false exclusions primarily with explicit ambiguity. A mechanistic methane-oxidation study further demonstrates nontrivial finite-sample population certification under a prespecified false-exclusion tolerance.

\end{enumerate}

The remainder of the paper relates RAED to neighboring approaches, develops the formal decision problem and aliasing theory, introduces the learned calibration procedure, and evaluates the resulting experiment choices and finite-sample guarantees.

\section{Related Work}
\label{sec:related}

\paragraph{Bayesian experimental design and model discrimination.}
Bayesian experimental design commonly selects $e$ by maximizing expected information gain about a latent quantity, including structural model indices and continuous parameters \cite{huan2024,rainforth2024}. Model-discrimination design instead uses criteria tailored to separating rival models, including T-optimality, robust discrimination, and classification-based Bayesian design \cite{atkinson1975,atkinson1975multi,dette2009,dette2013,hainy2022}. Expected model-elimination criteria similarly reward experiments that are expected to reject more competing models \cite{alberton2012}. RAED differs in the terminal decision it optimizes: after observing the experiment, it seeks the smallest nonempty structural candidate set that can be reported while controlling false exclusion across persistent nuisance uncertainty.

\paragraph{Set-valued classification and confidence sets.}
Minimum expected set size under coverage is a classical statistical objective. Least-ambiguous set-valued classification minimizes expected prediction-set size subject to classwise coverage constraints \cite{sadinle2019}; related work studies confidence sets and classwise prediction sets with controlled ambiguity \cite{schafer2009,denis2017,shi2024}. At the nuisance-average endpoint, fixed-experiment RAED reduces exactly to this problem when the validity and efficiency laws coincide and the same randomized decision class is used. RAED then places this terminal set-valued decision inside an outer experiment-selection problem and extends the validity requirement to persistent nuisance uncertainty.

\paragraph{Robust and ambiguity-aware design.}
Robust design methods protect experiment utilities against nuisance uncertainty, model misspecification, or distributional ambiguity. Robust expected information gain introduces ambiguity sets around the information objective \cite{go2022}, while maximin Bayesian design frames experiment choice as a game against an adversarial nature \cite{abdulsamad2026}. RAED instead applies robustness to the nuisance-specific false-exclusion loss
$$
L_{k,e}(\theta)=\Pp_{k,\theta}^{e}\{k\notin\Gamma_e(Y)\},
$$
and optimizes structural resolution subject to the resulting validity requirement. Positive values of $\delta$ interpolate between increasingly adverse nuisance tails and nuisance-average protection, while $\delta=0$ is treated separately as the pointwise-validity endpoint.

\paragraph{Comparison of statistical experiments.}
Blackwell's ordering formalizes when one experiment can be simulated from another through a common garbling, while Le Cam--Torgersen deficiency theory quantifies approximate comparisons \cite{blackwell1953,lecam1964,torgersen1991}. These notions provide a natural benchmark for RAED. If the same nuisance-independent garbling maps one experiment to another for every $(k,\theta)$, then the full nuisance-specific decision problem is preserved and RAED respects the resulting informativeness ordering. The separation studied here arises under weaker notions of informativeness, such as comparisons made only after nuisance averaging or through mappings whose interpretation changes with the nuisance state.

\paragraph{Compound channels, confusability, and list decoding.}
Persistent nuisance uncertainty also creates conceptual links to compound channels, symmetrizability, confusability, and list decoding \cite{blackwell1959compound,csiszar1988avc,shannon1956,elias1957}. These literatures study communication rates, adversarial channel states, zero-error confusability, and decoding lists. RAED addresses a different problem: one-step scientific experiment selection with controlled false exclusion and expected candidate-set size as the resolution objective. The common theme is that the same observation may support incompatible interpretations across nuisance states unless a single decoder can resolve them consistently.

\paragraph{Risk control and finite-sample calibration.}
The calibration layer builds on risk-controlling prediction, Learn-then-Test, optimized certainty equivalent (OCE) risk, and concentration bounds for bounded losses \cite{bates2021,angelopoulos2025,huang2026oce,rockafellar2000}. In RAED, these tools provide finite-sample guarantees for the nuisance risk of learned candidate-set rules without changing the underlying population validity criterion. The parameters $\alpha$ and $\delta$ specify the scientific false-exclusion requirement, while $\zeta$ is the allowed probability that the finite-sample certificate itself fails.

Taken together, these connections place RAED at the intersection of experimental design and set-valued decision making under nuisance uncertainty. Its distinguishing feature is the outer design objective: experiments are compared by the smallest expected nonempty structural candidate set they can support subject to a prescribed false-exclusion requirement. The next section formalizes this decision problem and shows how, for a fixed experiment at the nuisance-average endpoint, it reduces to classical set-valued classification under the appropriate conditions.

\section{Resolution-Aware Experimental Design}
\label{sec:formulation}

\subsection{Structural families, nuisance states, and experiments}

Let
\begin{equation}
K\ge2,\qquad 0\le\alpha<1,\qquad 0\le\delta\le1.
\label{eq:global-population-parameters}
\end{equation}

The structural target is
\[ M\in\{1,\ldots,K\}. \]

Conditional on $M=k$, the data also depend on a nuisance state $\theta\in\Theta_k$. The nuisance state is persistent in the sense that, for a realized experiment, the same value of $\theta$ governs the entire observation $Y$ rather than being redrawn across its components.

An experiment $e\in\cE$ induces, for every structural family $k$ and nuisance state $\theta\in\Theta_k$, an observation law
\[ P_{k,\theta}^{e} \]
on an observation space $\cY_e$. We assume these spaces are standard Borel. The experiment and its observation protocol are chosen before $Y$ is observed.

Two nuisance distributions may enter the formulation, and they serve different purposes. The measure $\mu_k$ specifies how nuisance states within family $k$ are weighted when imposing nuisance-average or nuisance-tail validity. The measure $\nu_k$ specifies how nuisance states are weighted when evaluating the expected size of the resulting candidate set. These measures may coincide, but they need not.

Let $\pi_k\ge0$, with $\sum_{k=1}^K\pi_k=1$, denote the structural weights used in the resolution objective. Define
\begin{equation}
P_{k,\nu}^{e}(A) = \int_{\Theta_k} P_{k,\theta}^{e}(A)\,\nu_k(\dd\theta), \qquad Q_e = \sum_{k=1}^K \pi_k P_{k,\nu}^{e}.
\label{eq:efficiency-law}
\end{equation}

Thus $\Theta_k$ specifies which nuisance states are admissible, $\mu_k$ determines how validity is assessed across them, $\nu_k$ determines how resolution is averaged across them, and $\pi_k$ determines how structural families are weighted in the resolution objective.

\subsection{Randomized nonempty candidate-set rules}

After observing $Y=y$, the terminal decision is a nonempty structural candidate set
\[ \Gamma_e(y)\subseteq\{1,\ldots,K\}, \qquad \Gamma_e(y)\neq\varnothing. \]
RAED allows this decision rule to be randomized. Formally, for each observation $y$, a Markov kernel
\[ \Pi_e(S\mid y) \]
assigns probabilities to the nonempty subsets $S\subseteq\{1,\ldots,K\}$. Deterministic candidate-set rules are included as the special case in which $\Pi_e(\cdot\mid y)$ places probability one on a single set.

For each structural family $k$, define the inclusion probability
\begin{equation}
g_{k,e}(y) := \Pr_{\Pi_e}\{k\in\Gamma_e(y)\} = \sum_{S\ni k}\Pi_e(S\mid y).
\label{eq:continuous-inclusion}
\end{equation}
Thus $g_{k,e}(y)$ is the probability, over the rule's internal randomization, that family $k$ is retained after observing $y$. Because the reported candidate set is always nonempty,
\begin{equation}
0\le g_{k,e}(y)\le1, \qquad \sum_{k=1}^K g_{k,e}(y)\ge1.
\label{eq:nonempty-marginals}
\end{equation}

For the expected-cardinality objective and the familywise false-exclusion losses considered below, these inclusion probabilities contain all information needed for optimization. Appendix~\ref{app:marginals} shows that every feasible collection of such marginals can be realized by a randomized rule over nonempty candidate sets.

For structural family $k$ and nuisance state $\theta$, define the retention probability
\begin{equation}
C_{k,e}(\theta;g) := \E_{Y\sim P_{k,\theta}^{e}} \left[g_{k,e}(Y)\right],
\label{eq:coverage}
\end{equation}
and the corresponding false-exclusion loss
\begin{equation}
L_{k,e}(\theta;g) := 1-C_{k,e}(\theta;g).
\label{eq:loss}
\end{equation}
Here $C_{k,e}(\theta;g)$ is the probability that the candidate set retains the true structural family when the data are generated under $(k,\theta)$, while $L_{k,e}(\theta;g)$ is the probability that the true family is excluded.

\subsection{Validity requirements}
\label{sec:validity}

RAED controls false exclusion separately within each structural family. At the pointwise endpoint, validity requires
\begin{equation}
\sup_{\theta\in\Theta_k} L_{k,e}(\theta;g)\le\alpha,\qquad k=1,\ldots,K.
\label{eq:uniform-validity}
\end{equation}
We use $\delta=0$ to denote this pointwise-validity endpoint. No nuisance probability measure is required in this case.

For $0<\delta\le1$, validity is defined through the upper-tail risk of the nuisance-specific false-exclusion loss under $\mu_k$:
\begin{equation}
\rho_{\delta,\mu_k}(L_{k,e}):=\inf_{\eta\in\mathbb R}\left\{\eta+\frac{1}{\delta}\int_{\Theta_k}\bigl(L_{k,e}(\theta;g)-\eta\bigr)_+\,\mu_k(\dd\theta)\right\},
\label{eq:tail-risk}
\end{equation}
where $(x)_+=\max\{x,0\}$. The corresponding Tail-validity requirement is
\begin{equation}
\rho_{\delta,\mu_k}(L_{k,e})\le\alpha,
\qquad k=1,\ldots,K.
\label{eq:tail-validity}
\end{equation}

For $0<\delta<1$, this criterion controls the average false-exclusion loss in the upper $\delta$-tail of the nuisance distribution. Smaller values of $\delta$ therefore place greater emphasis on adverse nuisance states. At $\delta=1$,
\[ \rho_{1,\mu_k}(L_{k,e})= \E_{\theta\sim\mu_k}L_{k,e}(\theta;g),\]
so Tail validity reduces to nuisance-average validity.

For later use, let $\mathcal G_e(\alpha,\delta)$ denote the set of randomized nonempty candidate-set rules satisfying the corresponding familywise validity requirement: \eqref{eq:uniform-validity} when $\delta=0$, and \eqref{eq:tail-validity} when $0<\delta\le1$.

\subsection{Resolution objective and experiment choice}

For experiment $e$ and candidate-set rule $g$, define the expected candidate-set size
\begin{equation}
J_e(g) := \E_{Y\sim Q_e} \left[ \E_{\Pi_e}\!\left( |\Gamma_e(Y)|\mid Y \right) \right] = \E_{Y\sim Q_e} \left[ \sum_{k=1}^K g_{k,e}(Y) \right].
\label{eq:objective}
\end{equation}
The inner expectation averages over any randomization of the terminal rule, while the outer expectation averages over the predictive observation distribution $Q_e$. Thus $J_e(g)$ is the expected number of structural families retained after experiment $e$.

For a fixed experiment, define its optimal valid ambiguity by
\begin{equation}
V_e^\star(\alpha,\delta) := \inf_{g\in\mathcal{G}_e(\alpha,\delta)} J_e(g),
\label{eq:fixed-e-raed}
\end{equation}
where $\mathcal{G}_e(\alpha,\delta)$ denotes the candidate-set rules satisfying the validity requirement defined above. Because every rule returns a nonempty set and the rule that always retains all $K$ families is feasible,
\[ 1\le V_e^\star(\alpha,\delta)\le K. \]
For convenience, we also define the normalized resolution
\begin{equation}
R_e^\star(\alpha,\delta) := \frac{K-V_e^\star(\alpha,\delta)}{K-1} \in[0,1].
\label{eq:normalized-resolution}
\end{equation}
Here $R_e^\star=1$ corresponds to singleton resolution and $R_e^\star=0$ to retaining all $K$ structural families. We use expected candidate-set size as the primary measure and normalized resolution when comparisons across problems with different numbers of structural families are useful.

RAED selects
\begin{equation}
e^\star(\alpha,\delta) \in \argmin_{e\in\cE} V_e^\star(\alpha,\delta).
\label{eq:raed-design}
\end{equation}
Thus experiments are compared by the smallest expected structural ambiguity they can achieve while satisfying the same declared validity requirement.

\subsection{Relationship to least-ambiguous set-valued classification}

At the nuisance-average endpoint, the fixed-experiment RAED problem has a direct connection to classical set-valued classification.

\begin{proposition}[Fixed-experiment nuisance-average reduction]
\label{prop:slw-reduction}
Fix an experiment $e$ and set $\delta=1$. Suppose that $\mu_k=\nu_k$ for every structural family, and define
\[ P_k:=P_{k,\mu}^{e}, \qquad Q_e:=\sum_{k=1}^K \pi_k P_k. \]
If mandatory nonemptiness is removed and the same randomized Markov-kernel decision class is used, then fixed-experiment RAED is exactly the randomized classwise least-ambiguous set-valued classification problem
\begin{equation}
\min_{\Gamma} \E_{Q_e}|\Gamma(Y)| \qquad \text{subject to} \qquad P_k\{k\in\Gamma(Y)\}\ge 1-\alpha, \quad k=1,\ldots,K.
\label{eq:slw-reduction}
\end{equation}
If an optimum of this randomized classical problem can be chosen nonempty $Q_e$-almost surely, then the same optimum value is attainable by RAED. Relative to formulations restricted to deterministic set-valued rules, RAED additionally permits randomization and requires the reported candidate set to be nonempty.
\end{proposition}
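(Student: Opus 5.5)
The proof is a direct unpacking of definitions at $\delta=1$. There are three steps: rewrite the validity constraint, rewrite the objective, and compare the two feasible sets, which differ only in nonemptiness.

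\emph{Constraint.} First I would record that $\rho_{1,\mu_k}(L)=\E_{\mu_k}L$. For $\eta\le 0$ we have $(L-\eta)_+=L-\eta$, since $0\le L_{k,e}\le1$. For $\eta>0$ we have $\eta+\E(L-\eta)_+\ge\E L$. So the infimum equals $\E_{\mu_k}L$ and is attained on $\eta\le0$.

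Next, Fubini/Tonelli applies because $g_{k,e}$ is measurable with values in $[0,1]$ and $\theta\mapsto P^e_{k,\theta}$ is a kernel on a standard Borel space. It gives
\[
\int_{\Theta_k}C_{k,e}(\theta;g)\,\mu_k(\dd\theta)=\E_{Y\sim P_k}[g_{k,e}(Y)]=P_k\{k\in\Gamma(Y)\},
\]
where the last probability is taken jointly over $Y\sim P_k$ and the kernel's internal randomization. Hence \eqref{eq:tail-validity} at $\delta=1$ reads exactly $P_k\{k\in\Gamma(Y)\}\ge1-\alpha$ for every $k$.

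\emph{Objective.} Since $\nu_k=\mu_k$, the efficiency law \eqref{eq:efficiency-law} coincides with the $Q_e$ of the statement. By \eqref{eq:objective}, $J_e(g)=\E_{Q_e}\E_{\Pi}|\Gamma(Y)|$, which is the classical objective for a randomized Markov-kernel set rule. So the two problems have the same objective and the same constraints on the same decision class. The only difference is that RAED restricts kernels to nonempty sets, and the classical problem drops this. That proves the claimed identity once nonemptiness is removed.

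\emph{Attainment.} Suppose a classical optimum $\Pi^\circ$ puts mass one on nonempty sets for $Q_e$-a.e.\ $y$. Let $N$ be the $Q_e$-null exceptional set. I would modify $\Pi^\circ$ on $N$ to output the full set $\{1,\dots,K\}$, or alternatively any fixed nonempty set.

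This modification does not change $J_e$, because $N$ is $Q_e$-null. I would check that it also cannot break a constraint. This is the one place needing care: the constraints use $P_k$, not $Q_e$. If $\pi_k>0$, then $P_k\ll Q_e$, so $N$ is also $P_k$-null and nothing changes. If $\pi_k=0$, outputting the full set on $N$ only raises $g_{k,e}$, so coverage can only improve. Either way the modified kernel is RAED-feasible with the classical optimal value.

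Finally, RAED's value is an infimum over a subset of the classical feasible set, so it is at least the classical optimum. Combined with the modified kernel, the two values are equal. The closing sentence of the proposition, that RAED permits randomization and imposes nonemptiness relative to deterministic formulations, is then a remark about the decision class and needs no proof.

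I expect the main obstacle to be the null-set repair when some $\pi_k=0$, since absolute continuity then fails. Choosing the full set on $N$ is what resolves it, because it can only increase coverage.
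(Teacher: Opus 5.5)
Your proposal is correct and follows the paper's proof essentially step for step: reduce $\delta=1$ Tail validity to the nuisance average, apply Fubini to obtain the classwise retention constraint $\E_{P_k}g_{k,e}\ge1-\alpha$, use $\nu_k=\mu_k$ to identify $J_e$ with $\E_{Q_e}|\Gamma(Y)|$, and observe that nonemptiness is the only extra restriction. You are more careful than the paper, which simply asserts that an almost-surely nonempty classical optimum ``can be represented'' as a RAED rule. Your repair on the $Q_e$-null exceptional set by the full set weakly raises every $g_{k,e}$, and together with the inclusion of feasible sets it fills that gap cleanly. The repair is genuinely needed when some $\pi_k=0$, so that $P_k\not\ll Q_e$; in that case the ``any fixed nonempty set'' alternative you mention could fail, so the full set is the right choice.
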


\begin{proof}
At $\delta=1$, the nuisance-tail validity criterion reduces to nuisance-average false-exclusion control:
\[\E_{\theta\sim\mu_k} L_{k,e}(\theta;g) \le \alpha. \]
Using the definition of $L_{k,e}$ and Fubini's theorem,
\begin{align}
\E_{\theta\sim\mu_k} L_{k,e}(\theta;g) &=
1- \E_{\theta\sim\mu_k} \E_{Y\sim P_{k,\theta}^{e}} g_{k,e}(Y)\\ &=
1- \E_{Y\sim P_k} g_{k,e}(Y).
\end{align}
Hence the RAED validity constraint is equivalent to
\[\E_{Y\sim P_k}g_{k,e}(Y)\ge1-\alpha, \]
which is precisely the classwise retention constraint
\[ P_k\{k\in\Gamma(Y)\}\ge1-\alpha \]
for the randomized candidate-set rule.

Likewise, for every observation $y$,
\[\E_{\Pi_e}\!\left[|\Gamma_e(y)|\right] = \sum_{k=1}^K g_{k,e}(y)\]
so the RAED objective is
\[J_e(g)=\E_{Y\sim Q_e}\left[\sum_{k=1}^K g_{k,e}(Y)\right]= \E_{Q_e}|\Gamma_e(Y)|.\]
Thus the objective and classwise coverage constraints coincide exactly with those of randomized least-ambiguous set-valued classification \cite{sadinle2019}. The only additional RAED restriction is mandatory nonemptiness. If a classical optimum is nonempty $Q_e$-almost surely, it can therefore be represented as a RAED rule with the same objective value.
\end{proof}

\subsection{Scientific validity and statistical certification}

RAED is defined by two scientific parameters: the false-exclusion tolerance $\alpha$ and the nuisance-tail level $\delta$. Together, they specify the validity requirement that a candidate-set rule must satisfy.

When this requirement is verified from finite data, a separate confidence parameter $\zeta\in(0,1)$ is introduced. It controls the allowed probability that the statistical certification procedure fails to establish the desired validity guarantee. Thus $\alpha$ and $\delta$ specify the scientific target, whereas $\zeta$ specifies the confidence with which that target is certified from finite samples.

This distinction becomes important in \Cref{sec:learned}. Calibration is a statistical procedure for certifying a learned rule against the RAED validity requirement; it does not change the validity requirement itself.

\section{Information and Structural Resolution under Nuisance Uncertainty}
\label{sec:aliasing}

Information about the structural label does not necessarily translate into a precise structural conclusion when nuisance uncertainty persists after the experiment. The central obstruction is that the same observation can acquire different structural meanings under different nuisance states.

\subsection{Cross-nuisance structural aliasing}
\label{sec:paper-cross-nuisance}

Cross-nuisance structural aliasing arises when an experiment distinguishes the structural families at each fixed nuisance state, but the mapping from observations to structural labels changes with that state. Since the terminal rule observes $Y$ but not the nuisance state, it must use a single mapping across all nuisance states covered by the validity requirement. The following construction makes this separation exact.

To construct the nuisance-independent reference experiment used below, we use the $K$-ary symmetric channel. For
\[q\in\left[0,\frac{K-1}{K}\right],\]
let $\mathsf{C}_K(q)$ denote the channel
\[\mathsf{C}_K(q)_{kj}=
\begin{cases}1-q, & j=k,\\\dfrac{q}{K-1}, & j\ne k. \end{cases}\]

Under a uniform structural prior, its mutual information is
\begin{equation}
\mathcal{I}_K(q)=\log K-h(q)-q\log(K-1),
\label{eq:paper-kary-mi}
\end{equation}
where
\[ h(q) = -q\log q-(1-q)\log(1-q) \]
is the binary entropy function. The function $\mathcal{I}_K(q)$ is strictly decreasing for
\[ 0\le q<\frac{K-1}{K}.\]
Fix $K\ge2$ and parameters
\[ 0<\varepsilon<\beta\le\alpha<1-\frac{1}{K}. \]
Let
\[ \mathbb Z_K=\{0,\ldots,K-1\}, \qquad M\sim\mathrm{Uniform}(\mathbb Z_K), \]
and let the persistent nuisance state $Z\in\mathbb Z_K$ be independent of $M$. The validity and efficiency nuisance distributions are taken to coincide, with
\[\mu(Z=0)=\nu(Z=0)=1-\varepsilon,\qquad\mu(Z=z)=\nu(Z=z)=\frac{\varepsilon}{K-1},\quad z\ne0.\]
We compare two experiments. Under the aliased experiment $e_I$,
\[Y=M\oplus Z \pmod K,\]
where $\oplus$ denotes addition modulo $K$. Thus the structural mapping is perfectly distinguishable at each fixed nuisance state, but changes with $Z$.
Under the nuisance-independent reference experiment $e_R$,
\[P(Y=j\mid M=k,Z=z,e_R)=\mathsf{C}_K(\beta)_{kj},\]
so the observation law depends on $M$ but not on the nuisance state.
\begin{theorem}[Cross-nuisance aliasing separation and exact Tail frontier]
\label{thm:paper-aliasing}
For the experiments $e_I$ and $e_R$ defined above, the following statements hold.
\begin{enumerate}[label=(\roman*)]
\item
For every fixed nuisance state $z$, experiment $e_I$ identifies the structural label perfectly:
\[I(M;Y\mid Z=z,e_I)=\log K.\]
At each fixed $z$, its structural channel also Blackwell-dominates the channel of $e_R$.
\item
After nuisance averaging, the structural channels of $e_I$ and $e_R$ are respectively
\[\mathsf{C}_K(\varepsilon)\qquad\text{and}\qquad\mathsf{C}_K(\beta).\]
Since $\varepsilon<\beta$, the former strictly Blackwell-dominates the latter. Consequently,
\[I(M;Y\mid e_I)=\mathcal I_K(\varepsilon)>\mathcal I_K(\beta)=
I(M;Y\mid e_R),\]
and both structural EIG and average Bayes classification strictly prefer $e_I$ to $e_R$.
\item
Full-latent EIG gives the same ordering:
\[I((M,Z);Y\mid e_I)=\log K>\mathcal I_K(\beta)=I((M,Z);Y\mid e_R).\]
\item
At the pointwise-validity endpoint,
\begin{equation}
V_{e_I}^\star(\alpha,0) =K(1-\alpha),\qquad V_{e_R}^\star(\alpha,0)=1.
\label{eq:paper-pointwise-alias-values}
\end{equation}

\item
For every $0<\delta\le1$, the exact Tail-RAED value of the aliased experiment is
\begin{equation}
V_{e_I}^\star(\alpha,\delta)=\max\!\left\{1,\,\min\!\left[K(1-\alpha),\,
K-(K-1)\frac{\alpha\delta}{\varepsilon}\right]\right\},
\label{eq:paper-tail-alias-value}
\end{equation}
while
\[V_{e_R}^\star(\alpha,\delta)=1.\]

\end{enumerate}

Equivalently, define
\[\delta_0=\frac{K\varepsilon}{K-1},\qquad\delta_1=\frac{\varepsilon}{\alpha}.\]
Then
\begin{equation}
V_{e_I}^\star(\alpha,\delta)=\begin{cases}K(1-\alpha),& 0<\delta\le\delta_0,\\[1mm]K-(K-1)\dfrac{\alpha\delta}{\varepsilon},& \delta_0<\delta<\delta_1,\\[2mm]1,& \delta\ge\delta_1.\end{cases}
\label{eq:paper-tail-alias-piecewise}
\end{equation}

In particular,
\begin{equation}
V_{e_I}^\star(\alpha,\delta)>1\quad\Longleftrightarrow\quad\varepsilon>
\alpha\delta.
\label{eq:paper-tail-alias-iff}
\end{equation}

Thus the information-based criteria above strictly prefer $e_I$, while RAED strictly prefers the nuisance-independent experiment $e_R$ whenever $\varepsilon>\alpha\delta$, as well as at the pointwise endpoint.
\end{theorem}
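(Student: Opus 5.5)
The plan is to dispatch (i)--(iii) by direct channel calculations and to reduce (iv)--(v) to a small linear program via cyclic symmetry. The symmetrization and the two-parameter CVaR optimization are the real content.

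\textbf{Parts (i)--(iii).}
\begin{itemize}
\item For (i): at fixed $z$ the map $k\mapsto k\oplus z$ is a bijection, so $I(M;Y\mid Z=z,e_I)=H(M)=\log K$. A permutation channel Blackwell-dominates every channel on $\mathbb Z_K$, since one can garble by first inverting the permutation; in particular it dominates $\mathsf C_K(\beta)$.
\item For (ii): averaging over $Z\sim\mu$ gives $P(Y=j\mid M=k)=1-\varepsilon$ if $j=k$ and $\varepsilon/(K-1)$ otherwise, which is $\mathsf C_K(\varepsilon)$.
\item Symmetric channels compose as $\mathsf C_K(a)\mathsf C_K(b)=\mathsf C_K(c)$ with $c=a+b-\tfrac{K}{K-1}ab$. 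For $\varepsilon<\beta<\tfrac{K-1}{K}$ one can solve $\mathsf C_K(\beta)=\mathsf C_K(\varepsilon)\mathsf C_K(b)$ for some $b\in(0,\tfrac{K-1}{K})$, which exhibits the garbling.
\item Strictness of the EIG gap follows from the stated strict monotonicity of $\mathcal I_K$. The Bayes accuracies are $1-\varepsilon>1-\beta$, because the diagonal dominates when $q<\tfrac{K-1}{K}$.
\item For (iii): $Y$ is a deterministic function of $(M,Z)$ and is uniform (since $M$ is uniform and independent of $Z$), so $I((M,Z);Y\mid e_I)=H(Y)=\log K$. Under $e_R$, $Z$ is irrelevant, so the full-latent information equals $\mathcal I_K(\beta)<\log K$.
\end{itemize}

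\textbf{Value for $e_R$.} Use the deterministic singleton rule $\Gamma(y)=\{y\}$. Its loss is $\beta\le\alpha$ at every $(k,\theta)$, so it is pointwise valid. Since $\rho_{\delta,\mu}(L)\le\sup L$, it is valid for every $\delta$, giving $V^\star_{e_R}=1$ (the lower bound $1$ is automatic).

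\textbf{Reduction for $e_I$.} The key step is symmetrization. The shift $k\mapsto k\oplus c$, $y\mapsto y\oplus c$ preserves all laws, $\mu$, $\nu$ and $\pi$. The constraint set (familywise CVaR bounds, which are convex in $g$, together with nonemptiness) and the linear objective are invariant under this shift. Averaging any feasible rule over the $K$ shifts therefore gives a feasible rule with the same objective, and this averaged rule has the form $g_k(y)=h(y\ominus k)$. Under $(k,z)$ we have $Y=k\oplus z$, so:
\begin{itemize}
\item the retention probability is $h(z)$ and the loss is $\ell_z=1-h(z)$;
\item nonemptiness becomes $\sum_d h(d)\ge1$;
\item since $Q_{e_I}$ is uniform, $J=\sum_d h(d)=K-\sum_z\ell_z$.
\end{itemize}
A second symmetrization over permutations of the rare states $z\neq0$ (which preserve $\mu$) lets me set $\ell_z\equiv\ell$ for $z\ne0$, with loss $\ell_0$ at $z=0$. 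The problem becomes: maximize $\ell_0+(K-1)\ell$ subject to $\ell_0,\ell\in[0,1]$, the validity constraint, and $J\ge1$.

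\textbf{Pointwise and Tail values.} At $\delta=0$ the constraint is $\ell_0,\ell\le\alpha$, giving $V=K(1-\alpha)$. This is at least $1$ because $\alpha<1-1/K$, which proves \eqref{eq:paper-pointwise-alias-values}. For $\delta>0$ I will compute the CVaR of the two-atom loss explicitly (atom $\ell$ with mass $\varepsilon$, atom $\ell_0$ with mass $1-\varepsilon$).
\begin{itemize}
\item Case $\ell\ge\ell_0$: the risk is $\ell$ if $\delta\le\varepsilon$, and $[\varepsilon\ell+(\delta-\varepsilon)\ell_0]/\delta$ if $\delta>\varepsilon$.
\item Case $\ell<\ell_0$: the risk is at least $\ell_0$ when $\delta\le 1-\varepsilon$, and I will check that this case yields a smaller objective than the equal-loss point.
\end{itemize}
The feasible region is a polygon and the objective is linear, so it suffices to compare vertices:
\begin{itemize}
\item the equal-loss vertex $\ell_0=\ell=\alpha$, with value $K\alpha$;
\item the vertex $\ell_0=0$, $\ell=\min(1,\alpha\delta/\varepsilon)$, with value $(K-1)\alpha\delta/\varepsilon$.
\end{itemize}
The second vertex beats the first exactly when $\delta>\delta_0=K\varepsilon/(K-1)$. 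The cap $J\ge1$ (equivalently the box constraint $\ell\le1$ combined with $\ell_0\le\alpha$) becomes active when $K-(K-1)\alpha\delta/\varepsilon\le1$, that is, once $\delta$ reaches $\delta_1$. This produces \eqref{eq:paper-tail-alias-value}, its piecewise form \eqref{eq:paper-tail-alias-piecewise}, and the equivalence \eqref{eq:paper-tail-alias-iff}: the value exceeds $1$ iff $K-(K-1)\alpha\delta/\varepsilon>1$ and $K(1-\alpha)>1$, and the latter always holds.

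I expect the main obstacle to be making this vertex analysis rigorous. Specifically, I must verify through the dual representation with $\eta$ that no asymmetric or $\ell<\ell_0$ configuration does better, and handle the boundary regimes $\delta\le\varepsilon$ versus $\varepsilon<\delta\le\delta_0$ cleanly. For the matching lower bound, my first attempt will plug explicit choices of $\eta$ (namely $\eta=\ell_0$ and $\eta=0$) into \eqref{eq:tail-risk}, which yields linear inequalities on $(\ell_0,\ell)$ that directly bound the objective.
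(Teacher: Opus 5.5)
Your proposal is correct and follows essentially the paper's route. Cyclic averaging, then averaging over the $K-1$ alias offsets, reduces $e_I$ to a two-variable problem in $(\ell_0,\ell)$ with the explicit two-atom Tail risk, and $\delta_0$ comes from comparing the per-budget gains $(K-1)/\varepsilon$ and $1/(\delta-\varepsilon)$. Where the paper uses an exchange argument to restrict to $\ell\ge\ell_0$, you bound the region $\ell<\ell_0$ directly against the equal-loss point. That works in every regime: even the mean bound $(1-\varepsilon)\ell_0+\varepsilon\ell\le\alpha$ already forces $\ell_0+(K-1)\ell\le K\alpha$ there, including when $\delta>1-\varepsilon$.

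One small slip: the parenthetical calling the cap $J\ge1$ equivalent to ``$\ell\le1$ together with $\ell_0\le\alpha$'' is wrong. For $\alpha\delta>\varepsilon$ the Tail constraint still allows $\ell_0=(\alpha\delta-\varepsilon)/(\delta-\varepsilon)>0$ at $\ell=1$. It is the nonemptiness constraint $\ell_0+(K-1)\ell\le K-1$, which you did list, that rules this out.
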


\paragraph{Proof sketch of Theorem~\ref{thm:paper-aliasing}.}
Under $e_I$, $Y=M\oplus Z$. For every fixed $z$, this is a bijection in $M$, while nuisance averaging produces the structural channel $\mathsf{C}_K(\varepsilon)$. Under $e_R$, the structural channel is the nuisance-independent $\mathsf{C}_K(\beta)$. Since $\varepsilon<\beta$, $\mathsf{C}_K(\beta)$ is a garbling of $\mathsf{C}_K(\varepsilon)$, giving the information and classification orderings in parts (i)--(iii).

For pointwise validity under $e_I$, every pair $(k,y)$ is generated with probability one by the nuisance state $z=y\ominus k$. Any valid rule must therefore satisfy $g_k(y)\ge1-\alpha$ for every $k,y$, which gives $J\ge K(1-\alpha)$. The bound is attainable by a randomized nonempty rule.

For positive-tail validity, cyclic symmetrization reduces the optimization to two exclusion levels: $\ell_0$ on the aligned nuisance state $Z=0$, of mass $1-\varepsilon$, and $\ell_A$ on the alias states $Z\ne0$, of total mass $\varepsilon$. An exchange argument allows an optimum with $\ell_A\ge \ell_0$, for which
\[\rho_\delta(L)=\begin{cases}\ell_A, & \delta\le\varepsilon,\\[1mm] \{\varepsilon \ell_A+(\delta-\varepsilon)\ell_0\}/\delta, & \delta>\varepsilon. \end{cases}\]
Minimizing expected candidate-set size is then a two-variable linear program. Its optimizer changes at $\delta_0=K\varepsilon/(K-1)$ and reaches singleton resolution at $\delta_1=\varepsilon/\alpha$, yielding \eqref{eq:paper-tail-alias-piecewise}. The complete proof, including attainability by randomized nonempty rules, is given in \Cref{app:proof-cyclic-aliasing}.

\begin{figure}[H]
\centering
\includegraphics[width=0.88\linewidth]{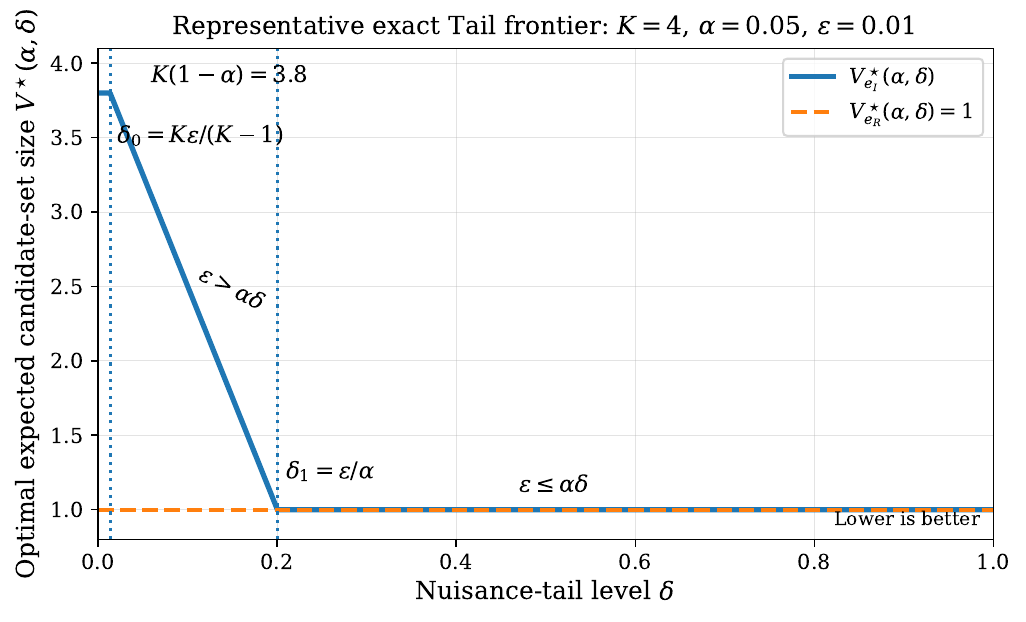}
\caption{\textbf{Exact Tail frontier in the cyclic-aliasing construction.}
For $K=4$, $\alpha=0.05$, and $\varepsilon=0.01$, the aliased experiment $e_I$ remains at $V_{e_I}^\star=3.8$ for the most stringent nuisance tails and then improves with increasing $\delta$, reaching singleton resolution at $\delta_1=\varepsilon/\alpha=0.2$. The nuisance-independent experiment $e_R$ achieves $V_{e_R}^\star=1$ throughout.}
\label{fig:tail-frontier}
\end{figure}

\begin{corollary}[Failure of maximin conditional mutual information]
\label{cor:paper-maximin-conditional-eig}
In the construction of \Cref{thm:paper-aliasing},
\begin{equation}
\inf_{z\in\mathbb Z_K} I(M;Y\mid Z=z,e_I) =\log K >\mathcal I_K(\beta) = \inf_{z\in\mathbb Z_K} I(M;Y\mid Z=z,e_R).
\label{eq:paper-maximin-conditional-eig}
\end{equation}
Hence the maximin criterion $\max_e\inf_z I(M;Y\mid Z=z,e)$ strictly prefers $e_I$, even though RAED strictly prefers $e_R$ whenever $\varepsilon>\alpha\delta$ and at the pointwise endpoint. Replacing nuisance-averaged mutual information by its worst-nuisance conditional value therefore does not resolve the aliasing: every nuisance-specific decoder is perfect under $e_I$, but those decoders are mutually incompatible when $Z$ is unobserved. This statement concerns this particular maximin conditional-MI criterion, not every possible robust information utility.
\end{corollary}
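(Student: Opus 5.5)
The corollary reduces to two conditional mutual-information computations followed by a direct appeal to \Cref{thm:paper-aliasing}. I will compute the infimum over $z$ on each side of \eqref{eq:paper-maximin-conditional-eig} separately, compare the two values, and then import the RAED ordering.

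\emph{Left-hand side.} Part (i) of \Cref{thm:paper-aliasing} states that $I(M;Y\mid Z=z,e_I)=\log K$ for every $z\in\mathbb Z_K$. The reason is that $M\mapsto M\oplus z$ is a bijection of $\mathbb Z_K$. Hence, conditionally on $Z=z$, $H(M\mid Y,Z=z)=0$. Since $M$ is independent of $Z$, the conditional law of $M$ given $Z=z$ is still uniform, so $H(M\mid Z=z)=\log K$. The infimum over $z$ is therefore $\log K$.

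\emph{Right-hand side.} Under $e_R$, the conditional law of $Y$ given $(M,Z=z)$ is $\mathsf C_K(\beta)$ and does not depend on $z$. Independence of $M$ and $Z$ keeps $M$ uniform given $Z=z$. Formula \eqref{eq:paper-kary-mi} then gives $I(M;Y\mid Z=z,e_R)=\mathcal I_K(\beta)$ for every $z$, and the infimum equals $\mathcal I_K(\beta)$.

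\emph{Strict inequality.} Since $\mathcal I_K(0)=\log K$ and $\mathcal I_K$ is strictly decreasing on $[0,(K-1)/K)$, the condition $0<\beta<1-1/K$ yields $\mathcal I_K(\beta)<\log K$. Consequently the maximin criterion $\max_e\inf_z I(M;Y\mid Z=z,e)$ strictly prefers $e_I$.

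\emph{RAED ordering.} Parts (iv) and (v) of \Cref{thm:paper-aliasing} give $V_{e_R}^\star=1<V_{e_I}^\star$ in two cases:
\begin{itemize}
\item at $\delta=0$, using $K(1-\alpha)>1$, which holds because $\alpha<1-1/K$;
\item for $\delta>0$ whenever $\varepsilon>\alpha\delta$, by \eqref{eq:paper-tail-alias-iff}.
\end{itemize}
In both cases RAED strictly prefers $e_R$.

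There is no real obstacle here. The only point needing care is to state explicitly that independence of $M$ and $Z$ preserves the uniform prior after conditioning on $Z=z$, because that is what allows \eqref{eq:paper-kary-mi} to be applied at each fixed nuisance state. The remark that the conditional decoders $y\mapsto y\ominus z$ are mutually incompatible is interpretive and needs no proof.
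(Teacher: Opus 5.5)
Your proposal is correct and follows the paper's own proof essentially step for step. The bijection $M\mapsto M\oplus z$ gives $\log K$ under $e_I$, the nuisance-independent channel $\mathsf C_K(\beta)$ gives $\mathcal I_K(\beta)<\log K$ under $e_R$ because $\beta>0$, and the RAED ordering is imported from \Cref{thm:paper-aliasing}. Your explicit checks that $M$ remains uniform given $Z=z$ and that $K(1-\alpha)>1$ at the pointwise endpoint are small clarifications the paper leaves implicit.
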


\begin{proof}
For $e_I$, conditioning on $Z=z$ gives $Y=M\oplus z$, a bijection in $M$, so $I(M;Y\mid Z=z,e_I)=\log K$ for every $z$. For $e_R$, the channel is nuisance-independent and equals $\mathsf{C}_K(\beta)$, giving \\$I(M;Y\mid Z=z,e_R)=\mathcal I_K(\beta)<\log K$ because $\beta>0$. The RAED ordering follows from \Cref{thm:paper-aliasing}.
\end{proof}

The separation can also grow with the number of structural families. For every fixed $\bar\delta<1$, the construction can be chosen so that the information-based criteria still prefer $e_I$ while
\begin{equation}
\frac{V_{e_I}^\star(\alpha,\bar\delta)}
     {V_{e_R}^\star(\alpha,\bar\delta)}
\ge K c_{\alpha,\bar\delta}, \qquad c_{\alpha,\bar\delta} =\min\!\left\{1-\alpha,\frac{1-\bar\delta}{1+\bar\delta}\right\}>0.
\label{eq:paper-k-scaling-summary}
\end{equation}
Thus the resolution gap can grow linearly with $K$. The formal statement and proof are given in Corollary~\ref{cor:paper-k-scaling}.

\paragraph{Beyond exact cyclic aliasing.}
Exact label permutations are not required for the same mechanism to arise. Suppose there is an efficiency-relevant response law such that, for every structural family, a nuisance region of $\mu_k$-mass at least $m$ has response laws within average total variation $\tau$ of that common law. The general near-alias result in \Cref{thm:app-near-alias-general} then gives the uniform bound
\begin{equation}
V_e^\star(\alpha,\delta) \ge \max\!\left\{1,\;K\bigl[1-\tau-\alpha\kappa_\delta(m)\bigr]_+\right\}, \qquad \kappa_\delta(m)= \begin{cases} 1, & m\ge\delta,\\ \delta/m, & m<\delta. \end{cases}
\label{eq:paper-near-alias-summary}
\end{equation}
The bound is nontrivial whenever $K[1-\tau-\alpha\kappa_\delta(m)]_+>1$. In the small-mass regime $m<\delta$, the controlling term becomes $\alpha\delta/m$, showing explicitly that increasingly rare near-alias regions can be tolerated as $\delta$ grows. The exact cyclic construction sharpens this general mechanism to the threshold $\varepsilon>\alpha\delta$.

\subsection{Composite informativeness that RAED respects}
\label{sec:common-garbling}

The separation in \Cref{thm:paper-aliasing} does not imply that RAED reverses genuine Blackwell informativeness. There, the information advantage of $e_I$ appears only after nuisance averaging or after interpreting the observation with a nuisance-specific decoder. A stronger comparison is available when one experiment can simulate another through the same post-processing rule for every structural family and nuisance state.

\begin{proposition}[Composite Blackwell monotonicity]
\label{prop:paper-common-garbling}
Let $e_1$ and $e_2$ share the same structural weights, nuisance sets, and validity and efficiency measures. Suppose there exists a Markov kernel $G$, independent of both $k$ and $\theta$, such that
\[ P^{e_2}_{k,\theta} = P^{e_1}_{k,\theta}G \qquad \text{for every }k,\theta. \]
Then, for every $(\alpha,\delta)$,
\begin{equation}
V_{e_1}^\star(\alpha,\delta) \le V_{e_2}^\star(\alpha,\delta).
\label{eq:paper-exact-common-garbling}
\end{equation}
\end{proposition}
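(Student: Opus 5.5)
The plan is a pull-back argument. Any feasible rule for $e_2$ can be composed with the common garbling $G$ to give a feasible rule for $e_1$. The composed rule has the same retention probabilities at every $(k,\theta)$ and the same expected candidate-set size. Taking infima then gives \eqref{eq:paper-exact-common-garbling}.

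\textbf{Step 1: define the composed rule.} Fix $(\alpha,\delta)$ and a rule $\Pi_{e_2}$ in $\mathcal G_{e_2}(\alpha,\delta)$, with inclusion marginals $g_{k,e_2}$. On $\cY_{e_1}$ define the composed kernel
\[
\Pi_{e_1}(S\mid y_1):=\int_{\cY_{e_2}}\Pi_{e_2}(S\mid y_2)\,G(y_1,\dd y_2).
\]
Operationally, after observing $y_1$ we draw $y_2\sim G(y_1,\cdot)$ and then report $\Gamma_{e_2}(y_2)$. Three checks are needed:
\begin{itemize}
\item $\Pi_{e_1}$ is a Markov kernel, by composition of kernels on standard Borel spaces.
\item It charges only nonempty sets, since $\Pi_{e_2}$ does.
\item Its inclusion marginals are $g_{k,e_1}=Gg_{k,e_2}$, that is, $g_{k,e_1}(y_1)=\int g_{k,e_2}(y_2)\,G(y_1,\dd y_2)$. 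This follows by summing over $S\ni k$ and exchanging the finite sum with the integral.
\end{itemize}
This matches the remark after \eqref{eq:nonempty-marginals} that marginals suffice.

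\textbf{Step 2: retention is preserved pointwise in $(k,\theta)$.} By Fubini and the hypothesis $P^{e_2}_{k,\theta}=P^{e_1}_{k,\theta}G$,
\[
C_{k,e_1}(\theta;g_{e_1})=\E_{P^{e_1}_{k,\theta}}[Gg_{k,e_2}]=\E_{P^{e_1}_{k,\theta}G}[g_{k,e_2}]=C_{k,e_2}(\theta;g_{e_2}).
\]
Hence the loss functions agree: $L_{k,e_1}(\cdot;g_{e_1})=L_{k,e_2}(\cdot;g_{e_2})$ as functions on $\Theta_k$. This is the step that genuinely uses independence of $G$ from $(k,\theta)$. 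If $G$ depended on $\theta$, the composed rule would need to know $\theta$ and would not be nuisance-blind, which is exactly the failure in \Cref{thm:paper-aliasing}.

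\textbf{Step 3: validity and objective carry over.} The two experiments share $\Theta_k$ and $\mu_k$, and their loss functions are identical. Both the pointwise criterion \eqref{eq:uniform-validity} and the tail functional \eqref{eq:tail-risk} depend only on the loss function and $\mu_k$. So $g_{e_1}\in\mathcal G_{e_1}(\alpha,\delta)$ for every $\delta\in[0,1]$.

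For the objective, the shared $\nu_k$ and $\pi_k$ together with linearity give
\[
Q_{e_2}=\sum_k\pi_k\int P^{e_1}_{k,\theta}G\,\nu_k(\dd\theta)=Q_{e_1}G.
\]
Hence
\[
J_{e_1}(g_{e_1})=\E_{Q_{e_1}}\Bigl[G\textstyle\sum_k g_{k,e_2}\Bigr]=\E_{Q_{e_2}}\Bigl[\textstyle\sum_k g_{k,e_2}\Bigr]=J_{e_2}(g_{e_2}).
\]
Thus every value attained in the $e_2$ problem is also attained in the $e_1$ problem. Taking the infimum over $\mathcal G_{e_2}(\alpha,\delta)$ gives $V^\star_{e_1}(\alpha,\delta)\le V^\star_{e_2}(\alpha,\delta)$.

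\textbf{Main obstacle.} Nothing here is deep. The main point of care is measure-theoretic: the composition must be a valid kernel into nonempty subsets, and the Fubini exchanges in Steps 2 and 3 must be justified. Both follow because all integrands are bounded in $[0,1]$ and the spaces are standard Borel.
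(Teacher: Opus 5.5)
Your proposal is correct and follows essentially the same route as the paper: compose any valid $e_2$ rule with the common kernel $G$, note that this gives a nuisance-blind nonempty rule for $e_1$ with identical nuisance-specific losses and identical expected candidate-set size, and take infima. The paper obtains this as the $\varepsilon=0$ case of its approximate-garbling transport theorem (\Cref{thm:paper-common-garbling}), where the total-variation bounds collapse to equalities, but the underlying argument is the one you give.
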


The condition means that an observation from $e_1$ can be post-processed through one common kernel $G$ to reproduce the observation law of $e_2$ under every composite state $(k,\theta)$. Hence any valid candidate-set rule for $e_2$ can also be implemented after observing $e_1$: first apply $G$, then apply the $e_2$ rule. This preserves the nuisance-specific false-exclusion losses and the expected candidate-set size. Therefore every resolution level achievable with $e_2$ is also achievable with $e_1$, and $e_1$ cannot have a larger optimal RAED value.

Appendix~\ref{app:proof-common-garbling} extends this result to approximate common garbling. If a single kernel simulates $e_2$ from $e_1$ within uniform total-variation error $\varepsilon$ over all $(k,\theta)$, then each nuisance-specific false-exclusion loss changes by at most $\varepsilon$ and expected candidate-set size by at most $(K-1)\varepsilon$. The appendix also gives an exact-$\alpha$ repair and the corresponding deficiency bounds. Thus the separation in \Cref{sec:paper-cross-nuisance} is not a general conflict between information and structural resolution; it arises when the claimed informativeness does not hold for the full nuisance-indexed experiment through one common comparison kernel.

\section{Learned RAED and Finite-Sample Validity}
\label{sec:learned}

The RAED formulation in \Cref{sec:formulation} optimizes over all measurable randomized nonempty candidate-set rules. In simulator-based problems this unrestricted optimum is generally unavailable, so we work with a learned family of score-based rules. Scores are trained first, experiments are ranked on an independent selection sample, and the selected rule is calibrated on separate data. This separation allows finite-sample validity to be certified without assuming that the learned score family attains the oracle RAED frontier.

\subsection{Score-based candidate sets}
\label{sec:learned-scores}

At the nuisance-average endpoint, when $P_{k,\mu}^{e}\ll Q_e$, the optimal classwise retention rule is governed by the density ratio
\begin{equation}
	r_{k,e}(y) =\frac{\dd P_{k,\mu}^{e}}{\dd Q_e}(y),\qquad
	P_{k,\mu}^{e}(A) =
	\int P_{k,\theta}^{e}(A)\,\mu_k(\dd\theta).
	\label{eq:paper-density-ratio-score}
\end{equation}
Large values of $r_{k,e}(y)$ indicate observations that are relatively more characteristic of structural family $k$ than of the overall predictive mixture $Q_e$. This motivates learning family-specific scores that rank observations by evidence for retaining each family. Under positive-tail validity, the corresponding oracle characterization involves a least-favourable reweighting of the nuisance distribution; the full results are given in \Cref{app:score-geometry,app:additional-theory}.

For each experiment $e$, the learned procedure produces scores
\[s_{k,e}:\cY_e\to\R,\qquad k=1,\ldots,K.\]
Let
\[b_e(y)=\argmax_j s_{j,e}(y),\]
with a fixed deterministic rule for resolving ties. Given family-specific thresholds $t_{k,e}$, define
\begin{equation}
\Gamma_{e,t}(y)=\{b_e(y)\}\cup\{k:s_{k,e}(y)\ge t_{k,e}\}.
\label{eq:paper-learned-rule}
\end{equation}
The first term guarantees that the reported candidate set is never empty, while the thresholds determine which additional structural families are retained.

\subsection{Nested finite-sample calibration}
\label{sec:paper-calibration}

The thresholds in \eqref{eq:paper-learned-rule} control the trade-off between structural resolution and false exclusion. More selective thresholds produce smaller candidate sets but increase the risk of excluding the true family. We therefore choose the deployed thresholds using an independent calibration sample.

For each familywise validity claim $j$, organize the candidate-set rules along a one-dimensional ordered path $\{\Gamma_{j,\lambda}:\lambda\in\Lambda_j\}$, where larger values of $\lambda$ produce larger candidate sets:
\[
\lambda_1\le\lambda_2
\quad\Longrightarrow\quad
\Gamma_{j,\lambda_1}(y)\subseteq\Gamma_{j,\lambda_2}(y).
\]
We allow either a continuous interval
$\Lambda_j=[\underline\lambda_j,\overline\lambda_j]$
or a finite ordered grid
$\Lambda_j=\{\lambda_{j,1}<\cdots<\lambda_{j,m_j}\}$.
In either case, the largest value is a safe endpoint at which the family associated with claim $j$ is always retained. Let $L_{j,\lambda}(\theta)\in[0,1]$ denote its false-exclusion loss at nuisance state $\theta$.

To state the finite-sample guarantee, we condition on everything fixed before the final calibration sample, including score fitting, experiment selection, and construction of the nested path. We denote this pre-calibration information by $\mathcal F_0$.

Along the nested path, let $R_j(\lambda)$ denote the true false-exclusion risk for claim $j$. Because larger candidate sets retain more families, $R_j(\lambda)$ is nonincreasing in $\lambda$. For an interval-valued path we additionally assume that $R_j$ is continuous. At the safe endpoint,
\[
R_j(\overline\lambda_j)=0.
\]

The calibration sample is used to place a one-sided upper confidence bound (UCB) on the unknown risk $R_j(\lambda)$. Let $\gamma_j\in(0,1)$ denote the allowed failure probability for claim $j$. For each fixed $\lambda\in\Lambda_j$, the calibration procedure produces an upper bound $U_j(\lambda;\gamma_j)$ satisfying
\begin{equation}
\Pr\!\left\{R_j(\lambda)\le U_j(\lambda;\gamma_j)\,\middle|\,\mathcal F_0
\right\}\ge 1-\gamma_j.
\label{eq:paper-pointwise-ucb}
\end{equation}
Thus, with probability at least $1-\gamma_j$, the true false-exclusion risk at that fixed rule lies below the reported bound. In the empirical implementation, these bounds are obtained from a bounded-mean Bernoulli--KL concentration inequality; the result below only requires that the stated one-sided coverage guarantee holds.

The goal is to retain as much structural resolution as possible while certifying risk below $\alpha$. We therefore enlarge the candidate set only until the upper confidence bound falls below $\alpha$ and remains below $\alpha$ for every more conservative rule. Define
\begin{equation}
\widehat\lambda_j:=\inf\left\{\lambda\in\Lambda_j:U_j(\lambda';\gamma_j)<\alpha\text{ for every }\lambda'\ge\lambda\right\},
\label{eq:paper-nested-selection}
\end{equation}
using the safe endpoint $\overline\lambda_j$ if no earlier value satisfies this condition.

\begin{theorem}[Finite-sample calibration of a nested RAED path]
\label{thm:paper-nested-calibration}
Under the conditions above, and assuming $\widehat\lambda_j$ is a measurable function of the pre-calibration information and the calibration sample, the selected rule satisfies
\begin{equation}\Pr\!\left\{R_j(\widehat\lambda_j)\le\alpha\,\middle|\,\mathcal F_0\right\}\ge1-\gamma_j.
\label{eq:paper-mean-cal-cert}
\end{equation}
\end{theorem}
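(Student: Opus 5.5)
The argument is the fixed-sequence (Learn-then-Test style) reduction to a single pointwise upper confidence bound at one deterministic, $\mathcal F_0$-measurable critical parameter. Throughout we condition on $\mathcal F_0$. Under this conditioning the path $\{\Gamma_{j,\lambda}\}$, the risk function $R_j$, and the level $\alpha$ are all fixed, and only the calibration sample is random. I would define the critical parameter
\[
\lambda^\ast_j:=\inf\{\lambda\in\Lambda_j:\ R_j(\lambda)\le\alpha\}.
\]
This set is nonempty because $R_j(\overline\lambda_j)=0\le\alpha$. Because $R_j$ is nonincreasing, it is the upper set $\{\lambda\ge\lambda^\ast_j\}$. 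On a finite grid the infimum is attained. On an interval, continuity of $R_j$ makes the set closed, so again $R_j(\lambda^\ast_j)\le\alpha$. If $R_j(\underline\lambda_j)\le\alpha$, every $\lambda$ is safe and the claim is trivial, so assume otherwise. Then $R_j(\lambda)>\alpha$ for every $\lambda<\lambda^\ast_j$. In the interval case continuity also gives $R_j(\lambda^\ast_j)=\alpha$; this equality will not be needed.

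The key step is the event inclusion
\[
\{R_j(\widehat\lambda_j)>\alpha\}\subseteq\{U_j(\lambda^\ast_j;\gamma_j)<\alpha\}\subseteq\{U_j(\lambda^\ast_j;\gamma_j)<R_j(\lambda^\ast_j)\}\ \text{ or a variant}.
\]
Suppose $R_j(\widehat\lambda_j)>\alpha$. By monotonicity, $\widehat\lambda_j<\lambda^\ast_j$, so in particular $\widehat\lambda_j\neq\overline\lambda_j$ and the fallback was not used. The definition in \eqref{eq:paper-nested-selection} then requires that $U_j(\lambda';\gamma_j)<\alpha$ for every $\lambda'\ge\widehat\lambda_j$, and $\lambda^\ast_j$ is one such $\lambda'$. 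The infimum itself needs care: $\widehat\lambda_j$ may fail to satisfy the defining condition. In that case there are $\lambda_n\downarrow\widehat\lambda_j$ with the condition holding, and since $\widehat\lambda_j<\lambda^\ast_j$, some $\lambda_n<\lambda^\ast_j$ already gives $U_j(\lambda^\ast_j)<\alpha$. So in every case $U_j(\lambda^\ast_j;\gamma_j)<\alpha$.

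To finish, the inclusion has to be closed off against the coverage guarantee. If $R_j(\lambda^\ast_j)=\alpha$, which holds in the interval case, then $U_j(\lambda^\ast_j)<\alpha=R_j(\lambda^\ast_j)$ is a coverage failure at the fixed point $\lambda^\ast_j$. On a grid $R_j(\lambda^\ast_j)$ may be strictly below $\alpha$, so I would instead use the last grid point $\lambda^-<\lambda^\ast_j$. For that point, $\widehat\lambda_j\le\lambda^-$ (as $\widehat\lambda_j$ lies on the grid), so $U_j(\lambda^-)<\alpha<R_j(\lambda^-)$, again a coverage failure at a single $\mathcal F_0$-measurable point. In both cases \eqref{eq:paper-pointwise-ucb} applied at that one deterministic point bounds the failure probability by $\gamma_j$. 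This yields \eqref{eq:paper-mean-cal-cert}. Measurability of $\widehat\lambda_j$ is exactly what makes the event $\{R_j(\widehat\lambda_j)>\alpha\}$ measurable.

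The main obstacle is choosing the right fixed test point in each case (interval vs.\ grid) and handling the infimum not being attained. No union bound over $\lambda$ is needed, because the "for every $\lambda'\ge\lambda$" clause makes the selection a fixed-sequence test.
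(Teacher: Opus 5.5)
Your proposal is correct and follows essentially the same route as the paper: condition on $\mathcal F_0$, then reduce invalid selection to failure of a single pointwise UCB. On an interval that point is the population crossing point $\lambda_j^\ast$, where continuity gives $R_j(\lambda_j^\ast)=\alpha$; on a finite grid it is the last invalid grid point. You also handle a non-attained infimum the same way the paper does. One small correction: your remark that $R_j(\lambda_j^\ast)=\alpha$ ``will not be needed'' is wrong, since that equality is exactly what turns $U_j(\lambda_j^\ast;\gamma_j)<\alpha$ into a coverage failure in the interval case, as you in fact use later.
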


The nested structure is what allows the threshold to be chosen from the calibration data without applying a separate multiplicity correction to every candidate value of $\lambda$. For a continuous path, invalid selection forces the upper confidence bound to fail at the population crossing point $R_j(\lambda)=\alpha$. For a finite threshold grid, the corresponding witness is the last invalid threshold on that grid. In either case, the argument requires only one population-determined boundary point. The full proof is given in \Cref{app:proof-nested-mean}.

\subsection{Positive-tail calibration and independent certification}
\label{sec:paper-tail-certification}

For $0<\delta<1$, the risk controlled in \Cref{sec:paper-calibration} is the upper-$\delta$ Tail risk of the nuisance-specific false-exclusion losses. For a familywise claim $j$ associated with structural family $k$, this is
\[R_j(\lambda)=\rho_{\delta,\mu_k}\!\left(L_{j,\lambda}\right).\]
Unlike nuisance-average risk, this quantity is not itself an ordinary mean. We therefore use the variational representation introduced in \eqref{eq:tail-risk} to reduce its calibration to that of a bounded mean.

For each $\lambda$, choose an auxiliary value $\eta_j(\lambda)\in[0,1]$ before the final calibration sample is opened. Then
\begin{equation}
\rho_{\delta,\mu_k}\!\left(L_{j,\lambda}\right)\le\eta_j(\lambda)+
\frac{1}{\delta}\E\!\left[\bigl(L_{j,\lambda}(\theta)-\eta_j(\lambda)\bigr)_+\right].
\label{eq:paper-oce-upper}
\end{equation}
Equality is obtained when $\eta_j(\lambda)$ minimizes the variational expression, but any fixed choice gives a valid upper bound. In the implementation, the complete map $\lambda\mapsto\eta_j(\lambda)$ is constructed from the independent selection sample and fixed before final calibration.

The right-hand side of \eqref{eq:paper-oce-upper} can be calibrated using the same bounded-mean concentration machinery as in \Cref{sec:paper-calibration}. In particular, a one-sided upper confidence bound for the mean of
\[\delta\eta_j(\lambda)+\bigl(L_{j,\lambda}(\theta)-\eta_j(\lambda)\bigr)_+\]
gives, after division by $\delta$, a one-sided upper confidence bound for the Tail risk. Applying \Cref{thm:paper-nested-calibration} along the same nested threshold path therefore gives
\begin{equation}
\Pr\!\left\{\rho_{\delta,\mu_k}\bigl(L_{j,\widehat\lambda_j}\bigr)
\le\alpha\,\middle|\,\mathcal F_0\right\}\ge1-\gamma_j.
\label{eq:paper-tail-cal-cert}
\end{equation}
The full argument is given in \Cref{app:proof-nested-tail}. At $\delta=1$, Tail risk reduces to nuisance-average risk and is covered directly by \Cref{thm:paper-nested-calibration}.

Repeated observation-noise draws at the same nuisance state improve the estimate of that state's false-exclusion loss, but they do not provide additional independent nuisance states. The effective sample size for population Tail certification is therefore the number of independently sampled nuisance states. Appendix~\ref{app:nested-mc} shows that estimating each nuisance-specific loss by inner Monte Carlo remains conservative for the Tail certificate.

Because experiment selection is completed before the final calibration sample is opened, certification only needs to cover the familywise validity claims of the selected experiment. At a fixed $\delta$, assigning
\[\gamma_k=\frac{\zeta}{K},\qquad k=1,\ldots,K,\]
gives joint confidence at least $1-\zeta$ across the $K$ structural families. Independence of the selection and calibration roles avoids an additional multiplicity penalty for the original experiment library, while the nested construction of \Cref{thm:paper-nested-calibration} avoids one over the threshold path. The general selection-aware result is given in \Cref{thm:paper-selected-only}.

The resulting workflow is therefore to train the scores and select the experiment first, fix all choices that precede calibration, calibrate the selected rule on independent nuisance states, and only then use the independent evaluation data. These finite-sample guarantees apply for $0<\delta\le1$. At $\delta=0$, validity is required at every admissible nuisance state rather than over a nuisance population, so the iid calibration argument used here no longer applies.

\subsection{The statistical price of small nuisance tails}
\label{sec:paper-rare-tail}

Nested calibration avoids paying separately for every threshold considered, but it cannot compensate for nuisance states that are too rare to appear in the calibration sample. A simple construction gives a necessary sample-size scale. Consider a population in which the false-exclusion loss is zero almost everywhere but equals one on a nuisance region of probability
\[p=\delta(\alpha+\varepsilon),\qquad \varepsilon>0.\]
Its upper-$\delta$ Tail risk is $\alpha+\varepsilon$, and is therefore just above the allowed level $\alpha$. Yet $n$ independent nuisance draws miss the adverse region entirely with probability $(1-p)^n$. Any distribution-free procedure that would certify after observing no losses must therefore make this event sufficiently unlikely.

\begin{proposition}[Rare-tail nuisance-sampling obstruction]
\label{prop:paper-rare-tail}
Fix $0<\alpha<1$, $0<\delta\le1$, and an allowed false-certification probability $0<\beta<1$. Any deterministic distribution-free procedure that certifies Tail risk at most $\alpha$ after an all-zero calibration sample must satisfy, for every $\varepsilon>0$ with $\alpha+\varepsilon\le1$,
\begin{equation}
\bigl(1-\delta(\alpha+\varepsilon)\bigr)^n\le\beta.
\label{eq:paper-rare-tail-bound}
\end{equation}
Consequently, when $\alpha\delta$ is small,
\begin{equation}
n=\Omega\!\left(
\frac{\log(1/\beta)}{\alpha\delta}
\right).
\label{eq:paper-rare-tail-rate}
\end{equation}
\end{proposition}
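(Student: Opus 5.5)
The plan is a two-point (indistinguishability) argument. Fix $\varepsilon>0$ with $\alpha+\varepsilon\le1$ and construct the bad population described just before the statement. Take a nuisance law $\mu$ and a loss $L$ with $L(\theta)=\ind\{\theta\in A\}$, where $\mu(A)=p:=\delta(\alpha+\varepsilon)$. Since $p\le\delta\le1$ this is a legitimate probability, and such a set exists whenever $\mu$ is atomless; otherwise we enlarge the nuisance space.

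The first step is to compute the Tail risk of this population from the variational formula \eqref{eq:tail-risk}. For $p\le\delta$, take $\eta\in[0,1)$. The objective is $\eta+(1-\eta)p/\delta$, which is linear in $\eta$ with slope $1-p/\delta\ge0$, so the infimum is attained at $\eta=0$. For $\eta<0$ the objective is $\eta+(p-\eta)/\delta\ge p/\delta$, and for $\eta\ge1$ it is $\eta\ge1\ge p/\delta$. Hence $\rho_{\delta,\mu}(L)=p/\delta=\alpha+\varepsilon>\alpha$, so certifying this population is a false certification.

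The second step uses the meaning of \emph{distribution-free}. The procedure's false-certification probability must be at most $\beta$ for every population with Tail risk above $\alpha$, and in particular for this one. The calibration sample consists of $n$ iid nuisance states with their recorded losses. The event that all $n$ draws avoid $A$, so that every observed loss is zero, has probability $(1-p)^n$. By hypothesis the deterministic procedure certifies on every all-zero calibration sample. Therefore
\[
\beta\ \ge\ \Pr\{\text{certify}\}\ \ge\ (1-p)^n=\bigl(1-\delta(\alpha+\varepsilon)\bigr)^n ,
\]
which is \eqref{eq:paper-rare-tail-bound}.

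There is one subtlety to address here. The procedure sees only the losses, or at most features that carry no information about $A$, so its output on an all-zero sample cannot depend on which population generated it. Formalizing this is the only real obstacle. I would state explicitly that a deterministic procedure is a function of the observed loss vector $(L(\theta_i))_{i\le n}$, so it is constant on the all-zero sample. If the procedure is allowed to see raw $\theta_i$, the claim applies to the statement's hypothesis, which already says it certifies after any all-zero sample.

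Finally, for the rate, take logarithms: $n\log\bigl(1/(1-p)\bigr)\ge\log(1/\beta)$. Using $\log\bigl(1/(1-p)\bigr)\le p/(1-p)$ gives $n\ge(1-p)\log(1/\beta)/p$. Letting $\varepsilon\downarrow0$ gives $n\ge(1-\alpha\delta)\log(1/\beta)/(\alpha\delta)$, which is $\Omega\bigl(\log(1/\beta)/(\alpha\delta)\bigr)$ when $\alpha\delta$ is small, as in \eqref{eq:paper-rare-tail-rate}.
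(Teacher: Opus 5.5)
Your proposal is correct and follows the paper's two-point argument essentially verbatim: an indicator loss on a nuisance set of mass $p=\delta(\alpha+\varepsilon)$ has Tail risk $p/\delta=\alpha+\varepsilon>\alpha$, and the all-zero event forces false certification with probability at least $(1-p)^n$. Your explicit check of the variational infimum is a minor refinement, as is your use of $\log\bigl(1/(1-p)\bigr)\le p/(1-p)$, which gives the nonasymptotic bound $n\ge(1-\alpha\delta)\log(1/\beta)/(\alpha\delta)$ in place of the paper's asymptotic step $-\log(1-x)\sim x$.
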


The complete two-point argument is given in Appendix~\ref{app:proof-rare-tail}. The result shows why stringent Tail certification can require many independent nuisance states even when the observed calibration losses are small. This limitation motivates using Watt and WCA as computation-limited held-out experiment-selection comparisons, while the separately powered GWAE-Fluvial and methane studies support fixed-positive-$\delta$ population certification.

\section{Benchmark Construction and Experimental Protocol}
\label{sec:protocol}

We evaluate RAED on three reservoir-engineering benchmarks---Watt, WCA, and GWAE-Fluvial---and on a methane-oxidation mechanism-discrimination benchmark. The reservoir studies use active well tests to distinguish competing structural descriptions under persistent geological uncertainty, while methane provides a cheaper mechanistic setting in which finite-sample guarantees for positive-tail false-exclusion risk can be powered directly. \Cref{tab:benchmark-summary} summarizes the structural target, nuisance variables, physical design space, and empirical role of each benchmark. We first describe how the four benchmark problems were constructed and then give the common learned-RAED and comparator protocol.

\begin{table}[t]
\centering
\caption{Overview of the empirical benchmarks.}
\label{tab:benchmark-summary}
\small
\renewcommand{\arraystretch}{1.15}

\begin{tabularx}{\linewidth}{
>{\raggedright\arraybackslash}p{2.0cm}
>{\raggedright\arraybackslash}X
>{\raggedright\arraybackslash}X
>{\raggedright\arraybackslash}p{3.0cm}
}
\hline
Benchmark & Structural target & Experimental design & Main role \\
\hline

Watt &
Fault-model ontologies ($K=3,6,6,12$) &
Active well tests; full or restricted pressure observation &
Ontology and constrained-sensing studies \\
\hline

WCA &
Three depositional architectures &
Directional interference tests with restricted monitoring &
Constrained experiment selection \\
\hline

GWAE-Fluvial &
One versus two spanning channels &
Nine active well tests &
Tail versus nuisance-average design \\
\hline

Methane oxidation &
Three kinetic mechanisms &
81 reactor operating conditions &
Positive-tail certification \\
\hline

\end{tabularx}
\end{table}

\subsection{Watt: distinguishing fault architectures with active well tests}
\label{sec:watt-benchmark}

The Watt field model of Arnold et al.~\cite{arnold2013} is a synthetic reservoir benchmark built around several alternative geological descriptions of the same field. The published benchmark combines three fault models (FM1--FM3), three cutoff alternatives, three grid representations, and three top-structure alternatives, giving $3^4=81$ base scenarios. For the RAED study we retain the three fault-model alternatives while fixing the remaining hierarchy at CO3, G3, and TS2. This gives a common reservoir framework in which the principal structural difference is the interpretation of the fault system.

FM1--FM3 contain the same major named faults, but differ in their traces, spatial extent, segmentation, and therefore the compartment connections they impose on the reservoir. Each fault model is additionally combined with the two supplied property representations, OBJECT and PIXEL, and the two relative-permeability families, RP0 and RP1. Seven transmissibility multipliers further vary the strength of the existing major faults. The resulting ensemble therefore contains several sources of uncertainty around each fault interpretation rather than three deterministic reservoir models.

From this common physical ensemble, we define four alternative model ontologies. For the $K=3$ case, $M\in\{\mathrm{FM1},\mathrm{FM2},\mathrm{FM3}\}$, so that only the fault model forms part of the structural identity. We also consider two six-model descriptions, FM$\times$PROP and FM$\times$RP, and the twelve-model description FM$\times$PROP$\times$RP. Here PROP distinguishes the two supplied property representations, OBJECT and PIXEL, which provide alternative spatial porosity and permeability fields for the same geological scenario, while RP distinguishes the two supplied relative-permeability families, RP0 and RP1. When PROP or RP is not included in the structural identity, it remains part of the within-family nuisance state. In every case, seven continuous fault-transmissibility multipliers also remain nuisance variables. The four ontologies therefore allow the same physical ensemble to be examined at progressively finer distinctions between structural identity and within-model uncertainty.

For each reservoir realization we simulate ten well tests. The experiment library follows established pressure-transient and well-testing practice, combining injector and producer multirate tests, pulse/falloff experiments, periodic rate perturbations, and a producer drawdown/build-up test \cite{bourdet2002,fokker2018,salina2019}. Each test perturbs one designated well while the remaining wells continue under their common background controls. The response records the actuator bottom-hole pressure and realized rate together with pressure at three remote wells. These tests are intended to probe pressure communication through the reservoir and thereby expose differences in fault-controlled connectivity.

The resulting ensemble contains 600 reservoir realizations, each simulated under the ten well tests, giving $6{,}000$ full-physics OPM Flow responses. The simulations use the reconstructed $112\times30\times40$ Watt model and a common equilibrium initial state. Details of the source-model reconstruction, initialization, and ensemble generation are given in \Cref{app:watt-construction}.

The Watt ensemble is used in two complementary ways. First, using the full well response, we analyse all four model ontologies, with $K=3,6,6,$ and $12$. This isolates the effect of model definition: the underlying reservoir realizations and physical experiments are unchanged, while progressively finer distinctions in fault model, property representation, and relative permeability are promoted from nuisance to components of the structural identity.

We then consider a separate constrained-sensing study using the $K=3$ fault-family ontology. Here the structural question is fixed and the available observation is varied instead. The response is truncated at $T\in\{7,14,30,90\}$ days, and pressure is retained from $0$--$3$ remote wells. Actuator pressure and realized rate are always observed. Each combination of observation horizon and number of remote wells therefore defines a common sensing level within which the ten well tests and admissible monitor locations are compared.

The first study examines how achievable resolution changes as the structural distinction becomes finer. The second keeps the structural target fixed and examines how experiment choice changes when observation time and remote pressure measurements are restricted.

\subsection{WCA: depositional-architecture discrimination by pressure interference}
\label{sec:wca-benchmark}

The WCA benchmark is based on the multiple-geological-interpretation study of Park et al.~\cite{park2013}. The source study considers three alternative training images, TI1--TI3, representing different fluvial depositional architectures. TI1 is characterized by relatively thin, low-sinuosity channels, TI2 by thicker low-sinuosity channels, and TI3 by a more sinuous channel system with associated levee architecture. Their overall sand/shale proportions are similar, so the main distinction between the three interpretations lies in the geometry and connectivity of the permeable bodies rather than in gross facies volume.

For each training image we retain ten conditional facies realizations that honor the source hard data. Each realization is a different spatial arrangement of the channel, levee, and shale facies that is compatible with the same parent training image. These realizations are not treated as separate geological hypotheses. Instead, $M\in\{\mathrm{TI1},\mathrm{TI2},\mathrm{TI3}\}$ defines the structural identity. Within each family, the nuisance state consists of the particular conditional facies realization together with its porosity and permeability fields. For a fixed facies geometry, these properties are varied using source-derived distributions and spatially correlated random fields, so reservoirs belonging to the same structural family can differ substantially in both local rock quality and flow response.

Constructing the dynamic benchmark required placing these geological realizations in a common reservoir model. The categorical facies fields were mapped to a $53\times40\times116$ simulation grid chosen to preserve the main geometric and directional connectivity characteristics of the source models. Porosity and permeability were then assigned within the facies using the source-derived property distributions and spatially correlated random fields described above. A common physical scale, fluid model, initial state, and oil--water contact were used across all realizations, together with a fixed $3\times3$ array of nine wells. Wells are completed only in active reservoir cells above the oil--water contact. Full details of the geological reconstruction, petrophysical mapping, and dynamic model are given in \Cref{app:wca-construction}.

The WCA experiment library follows the same well-testing principles as Watt and comprises ten active interference tests on the nine-well layout. Six tests probe reservoir communication along reciprocal east--west, north--south, and diagonal injector--producer directions. The remaining four use multirate excitation, alternating injection, a balanced multiwell interference pattern, and a central pulse/recovery test. Together, the experiments probe different flow paths and timescales through the competing depositional architectures. Each training image contributes 1000 reservoir realizations, giving $3{,}000$ realizations in total. Simulating every realization under the ten well tests produces $30{,}000$ full-physics OPM Flow responses.

As in Watt, the complete responses are used both under rich observation and under a separate constrained-sensing analysis. In the constrained case, the response is truncated at $T\in\{7,14,30,60\}$ days, and pressure is retained from $0$--$3$ remote wells. Actuator pressure and realized rate are always observed. The structural target remains fixed at the three parent training images, so this analysis isolates the effect of reduced observation time and spatial coverage on experiment choice and achievable resolution.

\subsection{GWAE-Fluvial: one- versus two-channel discrimination}
\label{sec:gwae-benchmark}

The GWAE-Fluvial study builds on the single- and double-channel reservoir structures described by Shishaev, Demyanov, and Arnold \cite{shishaev2026} and the associated geological prior information reported in that work. We use the reported channel dimensions and structural ranges to construct a parametric fluvial ensemble on the supplied reservoir model, with one- and two-channel families defined directly in terms of their geological geometry and petrophysical variability.

The structural target is $M=1$ for a reservoir containing one spanning channel and $M=2$ for a reservoir containing two separated channels. Within each family, the nuisance variables describe the particular channel geometry and rock properties. Channels are represented as finite sinusoidal ribbons spanning the reservoir, with nuisance variation in width, thickness, wavelength, phase, lateral and vertical position, and petrophysical quality. In the two-channel family, the two channels share the same broad geometric form but may differ in width, thickness, vertical position, and lateral separation. Exact parameter ranges, geometric constraints, and the porosity/permeability construction are given in \Cref{app:gwae-construction}.

The geological realizations are simulated on the supplied reservoir grid using a common OPM Flow model and well configuration, with west and east injector banks separated by three central producers. Nine well tests are considered, including simultaneous and single-bank injection, cross-reservoir and central-pair tests, staged-rate excitation, pulse/falloff, and alternating-bank injection. Responses are recorded at days $1,3,5,7,$ and $10$ and consist of injector rates and pressures together with producer oil and water production rates.

The ensemble contains $24{,}000$ geological realizations, each simulated under the nine well tests, giving $216{,}000$ full-physics reservoir responses. This population is used to compare nuisance-average RAED at $\delta=1$ with Tail-RAED at $\delta=0.05$, both at $\alpha=0.05$. The earlier nuisance-average analysis in \Cref{sec:gwae-hard-region-results} uses a separate $10{,}000$-realization population, whereas the Tail-versus-average comparison is carried out on the independent $24{,}000$-realization ensemble described here. This allows us to examine whether protecting against a small adverse region of geological uncertainty changes the preferred well test and the resolution obtained after calibration.

\subsection{Methane oxidation: a powered certification benchmark}
\label{sec:methane-protocol}

The methane-oxidation study follows the complete-oxidation discrimination problem of Pankajakshan et al.~\cite{pankajakshan2023}. The structural target consists of three rival kinetic mechanisms, $M\in\{\mathrm{PL},\mathrm{LH},\mathrm{MVK}\}$, corresponding to power-law, Langmuir--Hinshelwood, and Mars--van Krevelen rate models. Within each mechanism, persistent nuisance parameters follow the fitted multivariate Gaussian approximation reported in the source study.

The physical design space is a full $3^4$ factorial over temperature, total feed flow, inlet oxygen-to-methane ratio, and inlet methane fraction, giving 81 candidate reactor conditions. Each experiment observes the outlet mole fractions of methane, oxygen, and carbon dioxide under the source-derived measurement-noise model.

Methane is used primarily to test finite-sample positive-tail calibration. Its steady reactor model is inexpensive enough to generate much larger independent nuisance samples than the reservoir benchmarks. We therefore use $10{,}000$ calibration nuisance states per structural family, together with a separate $5{,}000$-state evaluation set, so that nontrivial positive-tail population guarantees can be assessed directly.

The score-training, experiment-selection, calibration, and evaluation nuisance populations are kept disjoint. At each fixed positive tail level $\delta>0$, the familywise calibration confidence is allocated equally across the three mechanisms, giving $95\%$ joint confidence across PL, LH, and MVK. Exact nuisance distributions, reactor equations, design conditions, and calibration details are given in \Cref{app:methane-construction}.

\subsection{Common learned-RAED protocol}
\label{sec:unified-scorer}

All four benchmarks follow the learned-RAED workflow of \Cref{sec:learned}. Scores are fitted first, experiments are ranked on an independent nuisance sample, the selected experiment is then calibrated on a separate sample, and final performance is measured on independent evaluation states.

Watt, WCA, and GWAE-Fluvial use one-versus-rest logistic scores on a 256-component radial-basis-function (RBF) Nyström representation. Methane uses the same construction with 96 Nystr\"om components. Cross-validation is grouped by nuisance state, so repeated observation-noise realizations of the same physical state remain together.

The nuisance states used for score fitting, experiment selection, calibration, and evaluation are disjoint in every benchmark. Repeated noise draws at a fixed nuisance state are used only to estimate its response distribution and conditional false-exclusion loss, not as additional independent nuisance samples.

After experiment selection, only the selected experiment is calibrated. Watt and WCA use this protocol for held-out empirical comparison. Methane and GWAE-Fluvial use larger calibration populations to support finite-sample positive-tail guarantees. Exact sample sizes, score settings, noise replication, and calibration details are given in Appendix~\ref{app:implementation}.

\subsection{Comparator criteria and downstream evaluation}
\label{sec:comparators}

RAED is compared with model-index expected information gain, full-latent expected information gain, Bayes forced classification, expected model elimination, and minimum ordered-pair predictive discrimination. All comparator criteria are evaluated on the same independent experiment-selection population used for RAED ranking.

The comparison is made at the level of experiment choice. Once a criterion selects an experiment, the selected experiment is evaluated using the same learned candidate-set construction, calibration procedure, and independent evaluation population as the RAED-selected experiment. Differences in the reported structural resolution can therefore be attributed to the selected experiment rather than to a different decision rule.

The EIG criteria are estimated by Monte Carlo and can be numerically indistinguishable near their optimum. We therefore distinguish a different numerical maximizer from a clear selection disagreement: RAED and EIG are counted as selecting differently only when the RAED-selected experiment lies outside the predefined Monte Carlo uncertainty set around the EIG optimum. Exact estimators, sampling counts, and tie rules are given in Appendix~\ref{app:implementation}.

\section{Results}
\label{sec:results}

\subsection{Structural granularity and achievable resolution in Watt}
\label{sec:watt-ontology-results}

\begin{figure}[H]
	\centering
	\includegraphics[width=0.7\linewidth]{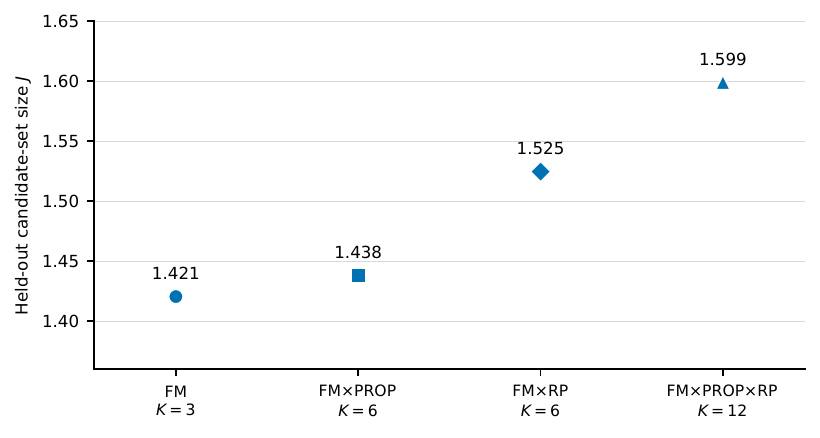}
\caption{\textbf{Achievable structural resolution across Watt model ontologies.}
	Held-out mean candidate-set size $J$ for the four structural targets. The same INJ5 pulse/falloff test is selected in every case at $(\alpha,\delta)=(0.05,1)$.}
	\label{fig:watt-ontology}
\end{figure}

The Watt ensemble provides a direct test of how the definition of the structural target affects achievable resolution, because the same reservoir realizations, well tests, and observations can be evaluated under all four model ontologies. At $(\alpha,\delta)=(0.05,1)$, using the full well response, learned RAED selected the same INJ5 pulse/falloff test for the $K=3$ FM, $K=6$ FM$\times$PROP, $K=6$ FM$\times$RP, and $K=12$ FM$\times$PROP$\times$RP targets. The corresponding held-out mean candidate-set sizes are $J=1.421$, $1.438$, $1.525$, and $1.599$. Thus progressively finer distinctions increase the absolute number of surviving explanations only modestly. Because $K$ changes across the four ontologies, $J$ and the normalized resolution $R$ describe complementary aspects of the result: $J$ gives the expected number of structural explanations that remain, whereas $R$ measures resolution relative to the size of the candidate model set. We emphasize $J$ here because it retains the direct candidate-set interpretation of the RAED objective. Along both refinement paths, normalized resolution also increases. Both quantities are reported in \Cref{tab:watt-ontology-summary}. The corresponding worst-family held-out nuisance-average false-exclusion rates are $0.056$, $0.067$, $0.057$, and $0.061$, respectively. These values are close to, but slightly above, the nominal $\alpha=0.05$ level. The complete resolution and exclusion results are reported in \Cref{tab:watt-ontology-summary}.

When PROP and RP are treated as nuisance in the $K=3$ fault-family problem, RAED must distinguish FM1--FM3 in the presence of the response variability induced by those additional modeling choices. The resulting $J\approx1.42$ therefore represents strong empirical fault-family resolution under substantial within-family variation. When PROP and RP are instead promoted into the structural identity, the candidate model set grows from three to twelve, yet the held-out mean candidate-set size increases only to about $1.60$. The full well response therefore retains substantial discriminatory power even for the finer structural distinctions, although the aggregate value of $J$ alone does not identify which particular model pairs account for the remaining ambiguity.

The fact that the same INJ5 pulse/falloff test is selected across all four ontologies is also informative. Its value is not confined to the coarsest fault-family problem: the same physical experiment remains preferred as property representation and relative-permeability family are progressively incorporated into the structural target. The modest increase in $J$ shows that these additional distinctions do introduce further ambiguity, but much less than might be expected from the fourfold increase in the number of candidate models.

These Watt ontology results serve a different purpose from the constrained-sensing comparisons studied later in this section. Here the physical reservoir ensemble, well tests, and observations are unchanged; only the definition of structural identity is varied. The comparison therefore isolates how achievable resolution changes when attributes that were previously treated as nuisance are instead required to be distinguished explicitly. The reported values are held-out empirical summaries of the nontrivial learned rules rather than finite-sample population certificates. The available Watt calibration sample does not certify these nontrivial rules at the declared $5\%$ level, so the ontology study provides an empirical resolution analysis.

\subsection{Constrained sensing separates RAED and information gain}
\label{sec:restricted-results}

The constrained Watt and WCA studies keep the structural target fixed while progressively reducing the available observation time and number of remote pressure measurements. We focus on the operating point $(\alpha,\delta)=(0.05,0.05)$. Within each sensing level, RAED and model-index EIG rank the same set of admissible well tests and monitor configurations using the same independent selection data. The two criteria therefore differ only in how they select the experiment. Model-index EIG does not itself produce a candidate set: after each criterion selects an experiment, that experiment is evaluated through the same learned-RAED downstream procedure, including score fitting, threshold calibration, and independent held-out evaluation.

In WCA, RAED and model-index EIG are cleanly separated in seven of the sixteen horizon--monitor combinations after accounting for the finite-sample Monte Carlo error in the EIG estimates. In five of these seven cells, the RAED-selected experiment yields a smaller held-out candidate set, with reductions in $J$ between $0.104$ and $0.385$. One cell is essentially tied, with a difference of $-0.004$, and one yields a smaller candidate set for the EIG-selected experiment by $0.118$. Across all sixteen sensing levels, the experiments selected by RAED give a mean held-out candidate-set size of $J=2.520$, compared with $J=2.658$ for those selected by model-index EIG.

These differences in candidate-set size are accompanied by substantial variation in held-out Tail false-exclusion risk. In several WCA cells, the smaller candidate sets obtained after the RAED-selected experiment also have higher worst-family upper-$5\%$-Tail exclusion than those obtained after the EIG-selected experiment. For example, at $(T,B)=(7,2)$ days and remote wells, the candidate-set sizes are $2.378$ and $2.763$ for the RAED- and EIG-selected experiments, respectively, while the corresponding worst-family held-out Tail risks are $0.365$ and $0.111$. The reduction in ambiguity should therefore not be interpreted independently of the associated exclusion risk.

Watt shows the separation more weakly. RAED and model-index EIG are cleanly separated in three of the sixteen sensing levels, and in all three cases the RAED-selected experiment yields a smaller held-out candidate set, with reductions in $J$ between $0.017$ and $0.054$. Averaged over all sixteen sensing levels, mean held-out $J$ is $2.092$ for the RAED selections and $2.111$ for the model-index-EIG selections. In two of the three clean disagreements, the smaller candidate set is accompanied by a lower worst-family held-out Tail risk; in the third, at $(T,B)=(90,0)$, the RAED-selected experiment has the smaller $J$ but the larger Tail risk. Complete resolution and exclusion results for both benchmarks are reported in \Cref{tab:watt-restricted-complete,tab:wca-restricted-complete}.

\begin{figure}[H]
	\centering
	\includegraphics[width=0.96\linewidth]{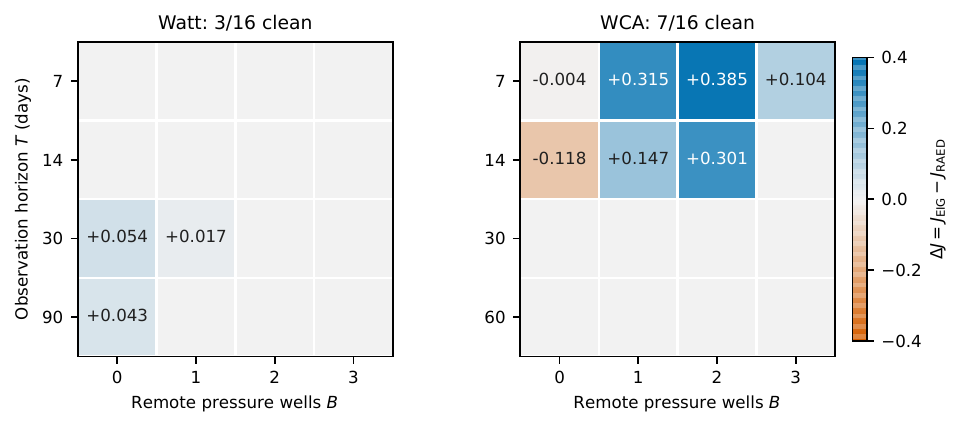}
	\caption{\textbf{Clean RAED--model-index-EIG disagreements under restricted sensing.} Only cells in which the RAED-selected design lies outside the EIG Monte Carlo indistinguishability set are colored and annotated. Positive $\Delta J=J_{\rm EIG}-J_{\rm RAED}$ indicates a smaller held-out candidate set for the RAED-selected experiment. The common color scale makes the stronger WCA separation visible; exact values for all 16 cells per benchmark are reported in \Cref{tab:watt-restricted-complete,tab:wca-restricted-complete}.}
	\label{fig:restricted-deltaJ}
\end{figure}

The Watt and WCA results show that information gain and structural resolution can rank the same set of experiments differently when sensing is restricted. In many cells the two criteria still agree, or are numerically indistinguishable, and WCA also contains a clean case in which the EIG-selected experiment gives the smaller downstream candidate set. The main point is therefore not that RAED dominates EIG, but that the choice of design criterion can change both the selected experiment and the resulting balance between ambiguity and false exclusion.

The learned rules are calibrated on independent data and evaluated on held-out states. They are not accompanied by finite-sample population-validity certificates, so the Tail risks reported here should be interpreted as empirical held-out performance.

The additional comparators give a similar picture in terms of candidate-set size. In WCA, the experiments selected by full-latent EIG, Bayes forced classification, expected elimination, and minimum ordered-pair predictive discrimination all give larger mean held-out $J$ than the RAED selections across the sixteen sensing levels, with ordered-pair discrimination the closest alternative. Several comparator rankings contain numerically indistinguishable optima, so the averages use the representative design defined for each criterion; the full summary is given in \Cref{tab:restricted-comparator-means}.

\subsection{Nuisance-average validity can conceal a geological hard region}
\label{sec:gwae-hard-region-results}

The GWAE-Fluvial benchmark provides a concrete example of a limitation of nuisance-average validity: a small family-average false-exclusion rate need not imply uniformly low risk across the geological nuisance population. In the first GWAE-Fluvial analysis, RAED was applied with $(\alpha,\delta)=(0.05,1)$ and selected the symmetric-bank experiment E01. Selected-only calibration on 1000 nuisance states per family gave risk upper bounds of $0.036541$ for M1 and $0.049826$ for M2, providing 95\% joint confidence across the two familywise population claims. On the independent evaluation population, the corresponding familywise false-exclusion rates were $0.0141$ for M1 and $0.0286$ for M2. The rule also remained highly resolving, with $J=1.1267$. Because $K=2$, approximately $87.3\%$ of observations were resolved to a singleton and $12.7\%$ returned the ambiguous set $\{\mathrm{M1},\mathrm{M2}\}$.

The family averages, however, do not show how the exclusion risk is distributed across geological realizations. For each nuisance state $\theta$, we fixed the deterministic reservoir response and estimated its conditional false-exclusion probability over repeated draws from the assumed observation model. In both structural families, the median and 90th percentile of this statewise risk were zero, indicating that most geological realizations were almost never falsely excluded. The upper tail was very different: the 99th-percentile conditional risk was approximately $0.86$ for M1 and $0.98$ for M2, while the mean risk within the worst $1\%$ of nuisance states was approximately $0.998$ and $0.991$, respectively.

\begin{figure}[H]
	\centering
	\includegraphics[width=0.82\linewidth]{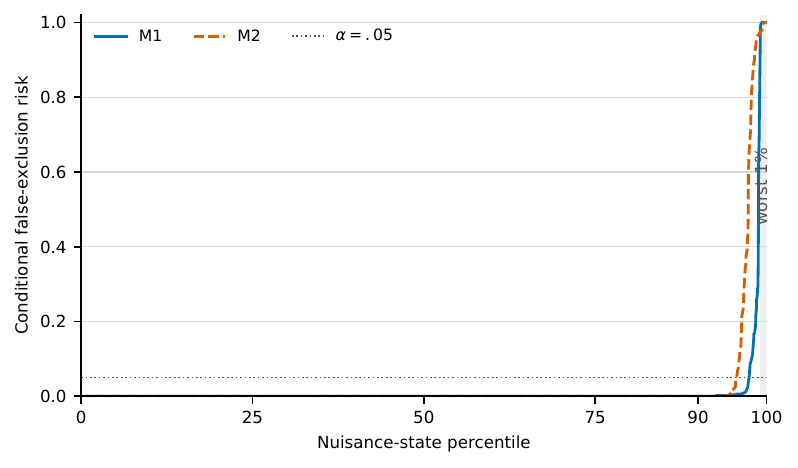}
	\caption{\textbf{Statewise risk concentration in the first GWAE-Fluvial campaign.} Sorted conditional false-exclusion risks from 2000 untouched nuisance states per family show a long zero-risk region and an extreme upper tail. The 99th percentiles are $0.8626$ for M1 and $0.9746$ for M2; within the shaded worst $1\%$, mean risks are $0.9982$ and $0.9907$. These are descriptive evaluations of the independently calibrated nuisance-average rule.}
	\label{fig:gwae-historical-risk}
\end{figure}

The nuisance-average guarantee is therefore compatible with very low exclusion risk over most of the geological population and near-systematic failure on a small subset of hard realizations. Nothing in the $\delta=1$ criterion prevents this concentration, even though the family-average exclusion target is satisfied. The first campaign was designed as a nuisance-average analysis rather than a Tail-RAED sensitivity study; the concentration of risk only became apparent in the subsequent statewise evaluation.

This observation motivated a second independent GWAE-Fluvial study using a new nuisance population, in which nuisance-average RAED was compared directly with Tail-RAED at $\delta=0.05$. Using an independent population separates the discovery of the hard-tail behavior from the subsequent test of whether protecting that tail changes the selected physical experiment and the resulting balance between resolution and false exclusion.

\subsection{Tail protection changes the selected GWAE experiment}
\label{sec:gwae-tail-results}

The second GWAE-Fluvial study compares nuisance-average RAED, $\delta=1$, with Tail-RAED at $\delta=0.05$ on the same newly generated geological population. The two criteria select different physical experiments: nuisance-average RAED selects the symmetric-bank test E01, whereas Tail-RAED narrowly selects the staged-rate test E07. The change nevertheless shows that protecting the adverse nuisance tail can affect experiment selection itself, rather than only the thresholds calibrated after an experiment has been chosen.

\begin{table}[H]
	\centering
	\caption{Deployed GWAE-Fluvial operating points on the common independent evaluation population. Worst-5\% entries are the empirical upper-tail conditional-risk means computed separately for each method.}
	\label{tab:gwae-main-summary}
	\small
	\setlength{\tabcolsep}{5pt}
	\begin{tabular}{llrrr}
		\hline
		Method & Selected well test & $J$ & M1 worst-5\% & M2 worst-5\% \\
		\hline
		Tail, $\delta=.05$ & E07 staged-rate pair & \GwaeTailJ & \GwaeTailMOneWorstFive & \GwaeTailMTwoWorstFive \\
		Average, $\delta=1$ & E01 symmetric banks & \GwaeAverageJ & \GwaeAverageMOneWorstFive & \GwaeAverageMTwoWorstFive \\
		\hline
	\end{tabular}
\end{table}

For both selected experiments, calibration produced nontrivial candidate-set rules with familywise risk upper bounds below $\alpha=0.05$. On the common independent evaluation population, however, the two rules occupy very different resolution--risk operating points. Nuisance-average RAED remains highly resolving, with $J=1.047$, but its empirical upper-tail conditional-risk means are $0.434$ for M1 and $0.876$ for M2. Tail-RAED increases the expected candidate-set size to $J=1.789$, while its corresponding upper-tail risks fall to $0.014$ and $0.003$ respectively. The tail summaries in \Cref{tab:gwae-main-summary} are computed separately for each rule and therefore describe each rule's own hardest $5\%$ of nuisance states.

For $K=2$, the increase in $J$ has a direct interpretation: Tail-RAED returns the ambiguous set $\{\mathrm{M1},\mathrm{M2}\}$ much more often. To compare the two rules on the same difficult geological states, \Cref{fig:gwae-hard-region} defines the hard region using the worst $5\%$ of nuisance states under the nuisance-average rule. Within this common region, Tail-RAED largely replaces false singleton exclusions with explicit ambiguity.

The GWAE comparison therefore shows the cost of tail protection directly. Nuisance-average RAED achieves much smaller candidate sets, but a small subset of geological realizations remains exposed to high false-exclusion risk. Tail-RAED sacrifices part of that resolution in order to reduce exclusion in these difficult states, returning larger candidate sets when the observations do not support a confident structural decision.

\begin{figure}[H]
	\centering
	\includegraphics[width=0.82\linewidth]{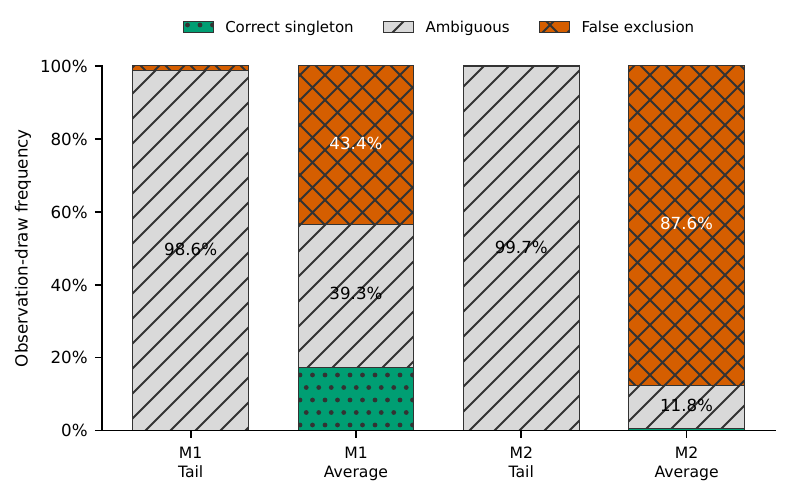}
	\caption{\textbf{Behavior in the nuisance-average rule's worst 5\% geological region.} Frequencies are computed over observation-noise draws within the 250 highest-risk nuisance states per family under the nuisance-average rule. Tail-RAED converts most behavior in this region to explicit ambiguity, whereas nuisance-average RAED produces false singleton exclusions on $43.4\%$ of M1 draws and $87.6\%$ of M2 draws.}
	\label{fig:gwae-hard-region}
\end{figure}

\subsection{Finite-sample Tail certification in methane oxidation}
\label{sec:methane-results}

Methane oxidation provides a case in which the positive-tail validity requirement can be certified with the available nuisance sample. Across the declared design space, learned RAED and model-index EIG select the same reactor condition, so this benchmark is not used to demonstrate disagreement in experiment selection. Instead, it tests whether Tail-RAED can retain nontrivial structural resolution while satisfying the finite-sample validity requirement.

At the reference operating point $\alpha=0.05$ and $\delta=0.05$, calibration uses $10{,}000$ independent nuisance states per mechanism. The resulting rule satisfies the familywise Tail-risk bounds with 95\% joint confidence across PL, LH, and MVK. The three calibration upper bounds are $0.048$, $0.038$, and $0.048$, all below the prescribed false-exclusion level.

The certified rule remains informative, with held-out mean candidate-set size $J=1.750$. On the independent evaluation population, the worst-family empirical 5\% Tail risk is $0.025$. The finite-sample certificate therefore does not force the rule to retain all three mechanisms: positive-tail validity is certified while a nontrivial level of structural resolution is preserved.

\begin{figure}[H]
	\centering
	\includegraphics[width=0.82\linewidth]{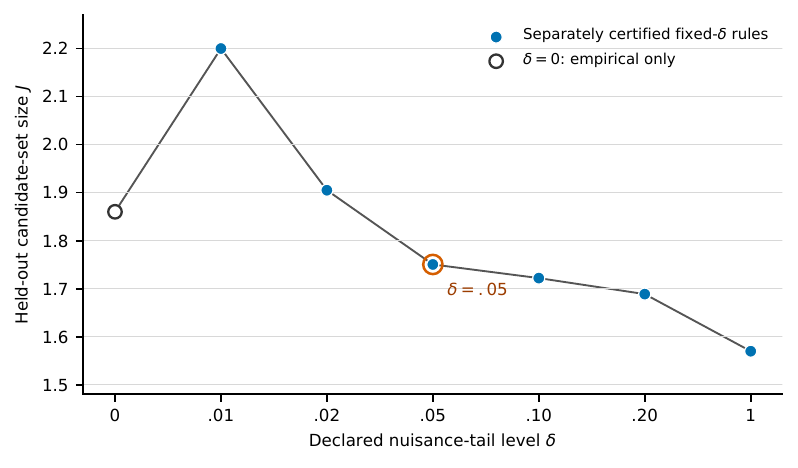}
	\caption{\textbf{Deployed methane-oxidation frontier.} Each filled point is a separately certified fixed-positive-$\delta$ operating point with 95\% joint confidence across PL, LH, and MVK; the guarantees are not simultaneous across the full grid. The open $\delta=0$ point is empirical only. The line is a guide across the ordered declared levels, not a smooth theoretical frontier.}
	\label{fig:methane-deployed-frontier}
\end{figure}

The same pattern holds across the other predeclared positive Tail levels. As $\delta$ decreases, stronger protection of adverse nuisance states generally requires larger candidate sets, whereas larger $\delta$ permits greater resolution. The complete frontier is reported in Appendix~\ref{app:results}.

\section{Discussion}
\label{sec:discussion}

The empirical studies illustrate several different consequences of designing experiments around structural elimination rather than information gain alone. The Watt ontology study shows that the structural question itself matters: when property representation and relative-permeability family are promoted from nuisance variables to components of model identity, the number of candidate models increases substantially while the empirical candidate-set size grows only modestly. The restricted Watt and WCA studies show a different effect. Once observation time and spatial coverage are limited, RAED and information-based criteria can prefer different physical experiments even though they act on the same admissible design set. GWAE then shows that experiment choice can also depend on the form of the validity requirement itself, while methane provides a case in which positive-tail validity can be certified without collapsing to the trivial retain-all rule.

These results also make clear that resolution cannot be interpreted separately from false exclusion. In the Watt ontology study, the nontrivial learned rules remain highly resolving across all four structural targets, but their held-out exclusion rates lie slightly above the nominal level and the available calibration sample does not support a nontrivial population certificate. The restricted-sensing results make the same point more strongly. A RAED-selected experiment can yield a smaller downstream candidate set than an EIG-selected experiment while also producing a larger held-out Tail risk. In other cells the two quantities improve together. The relevant empirical outcome is therefore the pair consisting of remaining ambiguity and false-exclusion behavior, not candidate-set size alone.

GWAE provides the clearest example of why the nuisance distribution matters. Under nuisance-average validity, the first study satisfies its familywise population requirement and achieves high resolution, yet the statewise evaluation reveals a strongly concentrated failure mode: most geological realizations have essentially zero conditional false-exclusion risk, while a small subset is excluded almost systematically. This behavior is entirely compatible with $\delta=1$, because the constraint acts on the nuisance average rather than on the upper tail of the statewise risk distribution. The result gives a concrete geological interpretation to the distinction formalized by the Tail-RAED objective and to the rare-tail argument in Proposition~\ref{prop:paper-rare-tail}.

The second GWAE study asks what changes when that adverse tail is protected explicitly. With $\delta=0.05$, Tail-RAED narrowly selects a different physical experiment and produces a very different operating point. The nuisance-average rule achieves much smaller candidate sets, whereas Tail-RAED returns the ambiguous set far more often and sharply reduces false exclusion in the difficult geological states. This is the intended role of set-valued decisions in RAED: when the observations do not support a reliable exclusion, the method is allowed to retain competing explanations instead of forcing a singleton decision. The cost is substantial, however, and smaller $\delta$ should not be interpreted as automatically preferable. The choice of $(\alpha,\delta)$ encodes how much false exclusion is acceptable and how much of the nuisance population should receive explicit protection.

The change in selected GWAE experiment should also be interpreted with some caution. E07 only narrowly outranks E01 under the Tail criterion, so the ranking itself is sensitive. The stronger result is not the identity of the winning experiment but what happens after calibration and independent evaluation: the nuisance-average and Tail rules occupy very different resolution--risk operating points, and Tail-RAED substantially changes the behavior of the rule on the geological states that were most problematic under nuisance-average validity. More generally, the Watt and WCA $\delta$ sweeps show that the preferred restricted-sensing design can itself depend on the requested degree of nuisance-tail protection.

The four empirical benchmarks carry different statistical weight. Watt and WCA provide independent held-out evaluations of learned rules, but their available nuisance populations are too small to support the stronger positive-tail population certificates used elsewhere in the paper. Their Tail-risk quantities are therefore empirical generalization diagnostics. GWAE and methane use much larger independent calibration populations, allowing fixed-operating-point familywise risk upper bounds to be certified. In GWAE, this supports both the nuisance-average and $\delta=0.05$ Tail analyses; in methane, nontrivial certificates are obtained across the declared positive-$\delta$ operating points. In these cases the formal validity statements come from the calibration upper bounds, while held-out Tail means, statewise quantiles, and hard-region analyses describe how the already calibrated rules behave in practice.

Validity is always relative to the scientific problem that has been declared. The admissible nuisance sets $\Theta_k$ determine which within-family states must be considered, $\mu_k$ determines how positive-tail protection is distributed over those states, and $\nu_k$ determines how resolution is averaged. The GWAE-Fluvial study, for example, uses an explicitly constructed single- and double-channel geological population with controlled geometric and petrophysical variability. Its conclusions therefore apply to that declared geological law. The same is true of the observation models, experiment libraries, and sensing restrictions used throughout the benchmarks: changing any of these changes the decision problem being solved.

Finally, the empirical studies use a restricted learned score family rather than the population oracle. Calibration controls false exclusion for the resulting learned candidate-set rule, while the oracle theory describes the ideal target that this restricted family is attempting to approximate. The present finite-sample population guarantees cover $0<\delta\le1$; the empirical $\delta=0$ points should not be confused with population-uniform pointwise guarantees. The theory is also one-step: one experiment is selected, one response is observed, and one candidate set is returned. A natural extension is to allow the score model, experiment, and exclusion thresholds to be learned more jointly, or to move to sequential designs in which later experiments are chosen specifically to break the aliases that remain after earlier observations.

\section{Conclusion}
\label{sec:conclusion}

Resolution-Aware Experimental Design values an experiment by the structural explanations it can eliminate while controlling false exclusion under persistent nuisance uncertainty. Cross-nuisance aliasing shows that this objective can differ sharply from information gain, while the common-garbling results preserve the expected ordering when one experiment is genuinely less informative than another. The learned formulation makes the same objective usable with restricted score models under nuisance-average and positive-tail validity.

The empirical studies expose different aspects of this problem. In Watt, changing the definition of structural identity alters the attainable resolution while the preferred experiment remains remarkably stable across the declared tail levels. Under constrained sensing, Watt and especially WCA show that RAED and information-based criteria can select different physical experiments, with corresponding changes in both candidate-set size and held-out false-exclusion behavior. GWAE-Fluvial shows why nuisance-average validity can be insufficient when risk is concentrated on a small set of geological realizations: protecting the upper nuisance tail changes the selected experiment and replaces many hard-region false exclusions with explicit ambiguity, at a substantial cost in average resolution. Methane oxidation provides the complementary case in which positive-tail validity can be certified nontrivially with a sufficiently large nuisance-calibration population.

Taken together, the results show that experiment design under partial identifiability is not only a question of acquiring information. It also requires deciding which structural exclusions must remain reliable across nuisance uncertainty and how much unresolved ambiguity is acceptable in exchange. RAED makes that tradeoff explicit through the candidate set and the operating point $(\alpha,\delta)$.

\appendix

\section{Notation and Technical Preliminaries}
\label{app:notation}

For reference, the principal objects used throughout the paper are collected below.

\begin{center}
\small
\begin{tabularx}{\linewidth}{lX}
\toprule
Symbol & Meaning \\
\midrule
$M\in\{1,\ldots,K\}$ & structural family of scientific interest \\
$\theta\in\Theta_k$ & admissible persistent nuisance state within family $k$ \\
$e\in\mathcal E$ & complete precommitted experiment/intervention and observation protocol \\
$P^e_{k,\theta}$ & predictive observation law under $(k,\theta,e)$ \\
$\mu_k$ & validity-mass measure used by Tail-RAED \\
$\nu_k$ & nuisance efficiency measure used in the objective \\
$\pi_k$ & structural efficiency weight, with $\sum_k \pi_k=1$ \\
$\Gamma_e(Y)$ & randomized nonempty structural candidate set returned after observing $Y$ \\
$g_{k,e}(y)$ & marginal probability that family $k$ is included in $\Gamma_e(y)$ \\
$L_{k,e}(\theta)$ & nuisance-specific false-exclusion probability $\Pr\{k\notin\Gamma_e(Y)\mid k,\theta,e\}$ \\
$\alpha$ & scientific false-exclusion tolerance \\
$\delta$ & protected nuisance-tail mass; $0$ is pointwise validity and $1$ nuisance-average validity \\
$\zeta$ & finite-sample certificate failure probability \\
$J_e(g)$ & expected candidate-set size $\E|\Gamma_e(Y)|$ under the efficiency law \\
$V_e^\star(\alpha,\delta)$ & optimal population RAED value for experiment $e$ at the declared validity level \\
$R_e^\star(\alpha,\delta)$ & optimal normalized RAED resolution $(K-V_e^\star(\alpha,\delta))/(K-1)$ \\
\bottomrule
\end{tabularx}
\end{center}

For $0<\delta\le1$, Tail risk is written through the upper-tail/CVaR functional
\[
\rho_{\delta,\mu_k}(L)
=
\inf_{\eta\in\mathbb R}
\left\{
\eta+\frac1\delta\int(L(\theta)-\eta)_+\,\mu_k(\dd\theta)
\right\},
\]
with the equivalent OCE scaling used in \Cref{sec:learned}. At $\delta=0$, validity is defined directly by $\sup_{\theta\in\Theta_k}L_{k,e}(\theta)\le\alpha$ rather than by taking a singular limit of the Tail formula.

\section{Exact Inclusion-Marginal Representation}
\label{app:marginals}

The expected-size objective and all classwise validity constraints depend only on first-order inclusion marginals. This gives an exact compact representation of the current RAED rule problem.

\begin{lemma}[Measurable whole-set lifting]
\label{lem:measurable-whole-set-lifting}
Let $g:\mathcal Y\to\mathcal P_K$ be measurable, where
\[
\mathcal P_K=\left\{u\in[0,1]^K:\sum_{k=1}^K u_k\ge1\right\}.
\]
Then there exists a measurable Markov kernel $\Pi(\cdot\mid y)$ supported on the nonempty subsets of $\{1,\ldots,K\}$ whose inclusion marginals equal $g(y)$ for every $y$.
\end{lemma}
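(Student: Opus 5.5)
The plan is to give an explicit, deterministic formula that maps each $u\in\mathcal P_K$ to a probability distribution $\Pi(\cdot\mid u)$ on nonempty subsets with inclusion marginals $u$. We make the map Borel in $u$ and then compose with $g$. Measurability of $y\mapsto\Pi(S\mid y)$ for each of the finitely many sets $S$ then follows from composition, and this gives a Markov kernel. We want a construction that avoids measurable selection theorems, so we use a threshold (comonotone) coupling driven by a single uniform variable, with one correction that forces nonemptiness.

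\textbf{Construction.} Fix $u\in\mathcal P_K$. Sort the coordinates as $u_{(1)}\ge\cdots\ge u_{(K)}$, breaking ties by index so that the permutation is a Borel function of $u$; it is piecewise constant on finitely many Borel cells. Set $c_j=u_{(1)}+\cdots+u_{(j)}$.

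Since $\sum_k u_k\ge1$, we use a \emph{systematic-sampling} construction. Arrange the families on a circle of circumference $s=\sum_k u_k$, with family $k$ occupying an arc of length $u_k\le1$ placed consecutively. Draw $U\sim\mathrm{Uniform}[0,1)$ and mark the points $U, U+1, \ldots$ up to $s$. Let $\Gamma$ be the set of families whose arc contains at least one marked point.

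Each arc has length at most $1$ and the marks are spaced exactly $1$ apart, so an arc contains at most one mark. Over the uniform draw of $U$, the probability that a given arc contains a mark therefore equals its length $u_k$. Because $s\ge1$, at least one mark $U\in[0,1)$ lands inside $[0,s)$, so $\Gamma$ is nonempty. Thus $\Gamma$ has marginals exactly $u_k$ and is almost surely nonempty.

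\textbf{Measurability.} For each $S$, $\Pi(S\mid u)$ is the Lebesgue measure of the set of $U$ whose marks hit exactly the arcs in $S$. This set is a finite union of intervals whose endpoints are continuous functions of the partial sums $c_j$ on each ordering cell. Each $\Pi(S\mid u)$ is therefore Borel in $u$. Composing with the measurable $g$ yields the kernel $\Pi(\cdot\mid y)=\Pi(\cdot\mid g(y))$.

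The main obstacle is the boundary bookkeeping. We take half-open arcs $[a_k,a_k+u_k)$ so that arcs with $u_k=0$ never count and endpoints do not double-count marks. This makes the hit probability exactly $u_k$ rather than merely bounded by it. We also need to check that fixing the ordering, even the identity ordering, does not affect the marginals; it does not, so sorting can in fact be skipped, which simplifies the measurability step.
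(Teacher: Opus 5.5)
Your proposal is correct, and it takes a genuinely different route from the paper.

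\textbf{The paper's route.} The paper argues non-constructively. It enumerates the $2^K-1$ nonempty subsets and uses the extreme-point characterization of $\mathcal P_K$ from the proof of Proposition~\ref{prop:app-marginal-representation} to show that the fibre $\mathcal W(u)=\{p\ge0:\mathbf 1^\top p=1,\ Ap=u\}$ is nonempty. It then notes that this compact-valued correspondence has closed graph, invokes a measurable-selection theorem to get a Borel selector $p^\star(u)$, and composes that selector with $g$.

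\textbf{Your route.} You write the selector down explicitly by systematic sampling, and the key steps hold. A half-open arc $[a_k,a_k+u_k)$ of length at most one meets $U+\mathbb Z$ in at most one point. Its image modulo one has Lebesgue measure exactly $u_k$, and any point it hits automatically has $m\ge0$ and lies below $s$. Since $U<1\le s$, the mark $U$ itself lies in some arc, so $\Gamma$ is nonempty surely, not merely almost surely. What you call a circle is really just the segment $[0,s)$, and sorting is indeed unnecessary.

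\textbf{What each buys.} Your route is self-contained: it needs neither the polytope lemma nor a selection theorem. It also gives extra structure. Because the arcs partition $[0,s)$ and each holds at most one mark, $|\Gamma|$ equals the number of marks, so it always lies in $\{\lfloor s\rfloor,\lceil s\rceil\}$. Moreover, $\Gamma(\cdot,u)$ changes only at the at most $K$ fractional parts of $a_2,\dots,a_K,s$. Each kernel is therefore supported on at most $K+1$ subsets, which recovers the paper's Carath\'eodory count constructively. The paper's route is shorter, since it needs the polytope argument anyway for the converse direction of that proposition. It would also carry over to other linear marginal constraints for which no explicit coupling is at hand.

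\textbf{One step to tighten.} Your measurability argument (``finite union of intervals whose endpoints are continuous \ldots\ on each ordering cell'') is loose. Which intervals make up $\{\Gamma=S\}$ changes combinatorially with $u$, not only with the ordering. It is cleaner to note that $\mathbf 1\{k\in\Gamma(U,u)\}=\max_{0\le m<K}\mathbf 1\{a_k(u)\le U+m<a_k(u)+u_k\}$ is jointly Borel in $(U,u)$, because $a_k$ is continuous. Hence $\mathbf 1\{\Gamma(U,u)=S\}$ is jointly Borel too, and Tonelli makes $u\mapsto\Pi(S\mid u)$ Borel.
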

\begin{proof}
Enumerate the $2^K-1$ nonempty subsets as $S_1,\ldots,S_m$ and let $A\in\{0,1\}^{K\times m}$ contain their incidence vectors as columns. By the polytope argument below, for every $u\in\mathcal P_K$ the compact set
\[
\mathcal W(u)=\{p\in\mathbb R_+^m:\mathbf 1^\top p=1,\ Ap=u\}
\]
is nonempty. Its graph is closed in the finite-dimensional product $\mathcal P_K\times\mathbb R^m$, so a standard measurable-selection theorem supplies a Borel measurable selector $p^\star(u)\in\mathcal W(u)$. Set $\Pi(S_j\mid y)=p_j^\star(g(y))$. The resulting kernel is measurable, supported on nonempty subsets, and satisfies $\Pr_{\Pi(\cdot\mid y)}\{k\in\Gamma\}=g_k(y)$.
\end{proof}

\begin{proposition}[Exact inclusion-marginal representation]
\label{prop:app-marginal-representation}
Fix an experiment $e$ and a finite measurable observation partition $\{B_{e,r}\}_{r=1}^{R_e}$. Let
\[
g_{k,r}=\Pr\{k\in\Gamma_e(Y)\mid Y\in B_{e,r}\}.
\]
A vector $g_{\cdot,r}$ is realizable by a randomized nonempty whole-set rule if and only if
\begin{equation}
0\le g_{k,r}\le1,
\qquad
\sum_{k=1}^K g_{k,r}\ge1.
\label{eq:app-marginal-polytope}
\end{equation}
Moreover, for the RAED validity regimes used in this paper--pointwise, strict Tail, and nuisance-average--the expected-cardinality objective and all familywise exclusion constraints depend on the rule only through these marginals. Consequently the whole-subset and marginal formulations have identical feasible loss fields and identical optimal values.
\end{proposition}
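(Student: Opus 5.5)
The plan has three parts: realizability of marginals, linearity of the objective and constraints in those marginals, and the equivalence of the two formulations.

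\textbf{Necessity.} Fix a cell $r$ and suppose the vector $g_{\cdot,r}$ comes from a randomized nonempty rule. Each $g_{k,r}$ is a probability, so it lies in $[0,1]$. Summing over $k$ gives
\[
\sum_k g_{k,r}=\E\bigl[\,|\Gamma_e(Y)|\;\big|\;Y\in B_{e,r}\bigr],
\]
and this is at least $1$ because every reported set is nonempty.

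\textbf{Sufficiency.} Here I would show that every $u\in\mathcal P_K$ lies in the convex hull of the incidence vectors $\{\mathbf 1_S:S\neq\varnothing\}$. The polytope $\mathcal P_K$ is the cube $[0,1]^K$ cut by a single half-space, $\{\sum_k u_k\ge1\}$.

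Its vertices are of two kinds. The first kind are cube vertices $\mathbf 1_S$ with $|S|\ge1$, which are admissible incidence vectors. The second kind are points where the hyperplane $\sum_k u_k=1$ meets a cube edge. Such an edge is determined by one free coordinate $i$ and a fixed 0/1 pattern on the others. For the edge point to lie on the hyperplane, the pattern must sum to $0$ or $1$, which forces the vertex to be either $e_i$ or some $e_j$. Both of these are singleton incidence vectors.

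So $\mathcal P_K=\mathrm{conv}\{\mathbf 1_S:S\neq\varnothing\}$, and the converse inclusion is immediate. Hence $\mathcal W(u)\neq\varnothing$, which is exactly the polytope fact used in Lemma~\ref{lem:measurable-whole-set-lifting}. An alternative, more constructive route is a greedy sorting argument: draw a threshold so that the set is nested in $u$-order, and use the excess of $\sum_k u_k$ over $1$ to guarantee nonemptiness.

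To turn a cell-level vector into a rule, I would set $\Pi(\cdot\mid y)$ equal to the resulting mixture for all $y\in B_{e,r}$. This kernel is measurable because the partition is finite. For general measurable $g$, Lemma~\ref{lem:measurable-whole-set-lifting} gives the same conclusion.

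\textbf{Dependence only on marginals.} Next I would check that the objective and every validity regime see the rule only through $g$. The cardinality identity $\E_\Pi|\Gamma|=\sum_k\Pr_\Pi\{k\in\Gamma\}$ makes $J_e$ depend only on $g$, via \eqref{eq:objective}. Similarly, \eqref{eq:coverage}--\eqref{eq:loss} show that each $L_{k,e}(\theta;g)$ is a linear functional of $g_{k,e}$ alone. The three regimes then act on these losses:
\begin{itemize}
\item pointwise validity \eqref{eq:uniform-validity} takes a supremum of the losses,
\item Tail validity \eqref{eq:tail-risk} applies $\rho_{\delta,\mu_k}$ to them,
\item nuisance-average validity takes their $\mu_k$-mean.
\end{itemize}
Each is a functional of the loss field $\theta\mapsto L_{k,e}(\theta;g)$. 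Consequently two rules with the same marginals have the same feasibility status and the same objective value.

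\textbf{Equivalence of formulations.} Every whole-set rule projects to a feasible marginal field in $\mathcal P_K$. Conversely, the lifting step shows that every measurable marginal field is realized by some whole-set rule. Since objective and constraints are invariant under this correspondence, the two formulations have identical loss fields and identical infima.

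\textbf{Main obstacle.} The main obstacle is the vertex enumeration in the sufficiency step, specifically making sure no fractional vertex is created by the cut $\sum_k u_k\ge1$. I would handle it by the edge-intersection argument above: it shows that the only new vertices lie on the hyperplane at coordinate unit vectors, and these are singleton incidence vectors.
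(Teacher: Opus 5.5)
Your proof is correct and follows the paper's route. Necessity comes from $\sum_k g_{k,r}=\E[\,|\Gamma_e(Y)|\mid Y\in B_{e,r}]\ge1$. Sufficiency comes from showing every extreme point of $\mathcal P_K$ is a nonzero binary vector; you do this via the hyperplane--edge description of the vertices of the cut cube, while the paper perturbs fractional coordinates. Dependence only on the marginals comes from linearity of retention in $g_k$.

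One caution on your aside: drop the ``nested in $u$-order'' construction, because it cannot work. Any nested random set with exact marginals $u$ is empty with probability $1-\max_k u_k$; for example, $u=(1/2,1/2)$ forces a mixture of $\{1\}$ and $\{2\}$. The excess $\sum_k u_k-1$ cannot repair this. A constructive route that does work is systematic sampling: place consecutive intervals of lengths $u_1,\ldots,u_K$ on $[0,\sum_k u_k)$, draw $U\sim\mathrm{Uniform}[0,1)$, and retain $k$ when its interval contains one of $U,U+1,U+2,\ldots$.
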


\begin{proof}
Let $x_{r,S}$ be the conditional probability of returning nonempty subset $S$ in response cell $r$. Any whole-set rule induces
\[
g_{k,r}=\sum_{S\ni k}x_{r,S},
\]
so $0\le g_{k,r}\le1$. Since every realized subset is nonempty,
\[
\sum_k g_{k,r}
=
\sum_{S\ne\varnothing}|S|x_{r,S}
\ge1.
\]
The same identity shows that expected set size in the cell is exactly $\sum_k g_{k,r}$. If
\[
p^e_{k,\theta,r}:=P^e_{k,\theta}(B_{e,r}),
\]
then family-$k$ retention under nuisance state $\theta$ is
\[
C_{k,e}(\theta;g)=\sum_{r=1}^{R_e}p^e_{k,\theta,r}g_{k,r},
\]
so every nuisance-specific exclusion loss is a function only of the corresponding marginals. Pointwise constraints, Tail/CVaR functionals of this loss field, and nuisance averages therefore all depend only on $g$.

Conversely, consider the polytope
\[
\mathcal P_K=\left\{g\in[0,1]^K:\sum_k g_k\ge1\right\}.
\]
Every extreme point of $\mathcal P_K$ is a nonzero binary vector. Indeed, two fractional coordinates admit an opposite perturbation preserving feasibility; one fractional coordinate with slack in the sum constraint can be perturbed alone; and if the sum constraint is tight, a single fractional coordinate is impossible because the remaining binary coordinates have integer sum. Hence $\mathcal P_K$ is the convex hull of the incidence vectors of the nonempty subsets of $\{1,\ldots,K\}$. Every feasible marginal vector is therefore a convex combination of nonempty subset incidences and can be realized by a randomized whole-set rule, independently in each response cell. This proves exact equivalence.
\end{proof}

\begin{remark}[Whole-set reconstruction]
The marginal representation fixes expected cardinality and all classwise retention probabilities but generally does not identify a unique distribution over returned subsets. Coupling-sensitive summaries such as singleton frequency, the probability of a particular two-family set, or the variance of $|\Gamma|$ therefore require an explicitly declared whole-set reconstruction. By Carath\'eodory's theorem, at most $K+1$ nonempty subsets are needed in a per-cell reconstruction.
\end{remark}

\section{Proofs for RAED Formulation, Aliasing, and Experiment Comparison}
\label{app:theory-proofs}

\subsection{Common and approximate garbling}
\label{app:proof-common-garbling}
The main text uses only the exact common-garbling monotonicity statement. The quantitative transport and deficiency results are recorded here.

\begin{theorem}[Common and approximate garbling of the RAED frontier]
\label{thm:paper-common-garbling}
Fix the structural weights, admissible nuisance sets, efficiency measures, validity measures when applicable, and a scientific operating point $(\alpha,\delta)$. Let $e_1,e_2$ be two experiments. Suppose a Markov kernel $G$, independent of both $k$ and $\theta$, satisfies
\begin{equation}
\sup_{1\le k\le K}\sup_{\theta\in\Theta_k}
\TV\!\left(P^{e_2}_{k,\theta},P^{e_1}_{k,\theta}G\right)
\le \varepsilon,
\qquad 0\le\varepsilon\le1.
\label{eq:paper-approx-garbling}
\end{equation}
Define
\[
d_k(\theta)=\TV\!\left(P^{e_2}_{k,\theta},P^{e_1}_{k,\theta}G\right),
\qquad
q_G=\TV(Q_{e_2},Q_{e_1}G)\le\varepsilon,
\]
and transport an $e_2$ inclusion rule $g^{(2)}$ to $e_1$ by
\[
(T_Gg^{(2)})_k(y_1)=\int g^{(2)}_k(y_2)G(y_1,\dd y_2).
\]
Then the transported rule is a randomized nonempty rule and, for every $k,\theta$,
\begin{align}
\left|L_{k,e_1}(\theta;T_Gg^{(2)})-L_{k,e_2}(\theta;g^{(2)})\right|
&\le d_k(\theta)\le\varepsilon, \label{eq:paper-loss-transport}\\
\left|J_{e_1}(T_Gg^{(2)})-J_{e_2}(g^{(2)})\right|
&\le (K-1)q_G\le(K-1)\varepsilon. \label{eq:paper-objective-transport}
\end{align}
Consequently, if $\varepsilon\le1-\alpha$, then for every $\delta\in[0,1]$,
\begin{equation}
V_{e_1}^\star(\alpha+\varepsilon,\delta)
\le
V_{e_2}^\star(\alpha,\delta)+(K-1)q_G.
\label{eq:paper-bicriteria-garbling}
\end{equation}
For an exact $\alpha$ comparison at arbitrary $\varepsilon$, set
\[
r_\varepsilon=\min\{1,\alpha+\varepsilon\},
\qquad
\lambda_\varepsilon=\left(1-\frac{\alpha}{r_\varepsilon}\right)_+,
\]
with $\lambda_\varepsilon=0$ when $r_\varepsilon=0$. Mixing the transported rule with the full candidate set with probability $\lambda_\varepsilon$ restores the original validity level and gives
\begin{equation}
V_{e_1}^\star(\alpha,\delta)
\le
(1-\lambda_\varepsilon)
\left[V_{e_2}^\star(\alpha,\delta)+(K-1)q_G\right]
+\lambda_\varepsilon K.
\label{eq:paper-exact-repair-garbling}
\end{equation}
In particular, when $\varepsilon=0$,
\[
V_{e_1}^\star(\alpha,\delta)\le V_{e_2}^\star(\alpha,\delta)
\qquad\text{for every }\delta\in[0,1].
\]
\end{theorem}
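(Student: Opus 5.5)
The plan is to work entirely at the level of inclusion marginals, which \Cref{prop:app-marginal-representation} and \Cref{lem:measurable-whole-set-lifting} license, and then deduce the frontier inequalities from two transport estimates.

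\textbf{Step 1: the transported rule is admissible.} Since $G(y_1,\cdot)$ is a probability measure and $g^{(2)}(y_2)\in\mathcal P_K$ for every $y_2$, the average $T_Gg^{(2)}(y_1)$ lies in $\mathcal P_K$. This holds because $\mathcal P_K$ is convex and closed, so each coordinate stays in $[0,1]$ and the coordinate sum stays at least $1$. Measurability follows from the kernel property. \Cref{lem:measurable-whole-set-lifting} then realizes $T_Gg^{(2)}$ as a randomized nonempty whole-set rule.

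\textbf{Step 2: the loss transport bound \eqref{eq:paper-loss-transport}.} By Fubini, the retention of the transported rule is
\[
C_{k,e_1}(\theta;T_Gg^{(2)})=\int g^{(2)}_k\,\dd(P^{e_1}_{k,\theta}G).
\]
Since $g_k^{(2)}$ takes values in $[0,1]$, comparing this with $\int g^{(2)}_k\,\dd P^{e_2}_{k,\theta}$ gives a difference of at most $d_k(\theta)$. This uses the standard bound $|\int f\,\dd P-\int f\,\dd Q|\le\TV(P,Q)$ for $f\in[0,1]$.

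\textbf{Step 3: the objective transport bound \eqref{eq:paper-objective-transport}.} I first use linearity to get $Q_{e_1}G=\sum_k\pi_kP^{e_1}_{k,\nu}G$, and $J_{e_1}(T_Gg^{(2)})=\int \sum_kg^{(2)}_k\,\dd(Q_{e_1}G)$. The function $f=\sum_kg_k^{(2)}-1$ takes values in $[0,K-1]$. The constant $1$ integrates identically under both laws, so the difference of objectives is at most $(K-1)q_G$.

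For the remaining comparison I use joint convexity of total variation under mixtures. It gives
\[
q_G\le\sum_k\pi_k\int d_k\,\dd\nu_k\le\varepsilon.
\]

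\textbf{Step 4: the bicriteria frontier bound \eqref{eq:paper-bicriteria-garbling}.} Pointwise, the transported loss satisfies $L^{(1)}_k(\theta)\le L^{(2)}_k(\theta)+\varepsilon$. I then check each validity regime separately.

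\begin{itemize}
\item At $\delta=0$, the supremum over $\theta$ increases by at most $\varepsilon$.
\item For $\delta>0$, the functional $\rho_{\delta,\mu_k}$ is monotone and translation-equivariant, since $\rho(L+c)=\rho(L)+c$ via the substitution $\eta\mapsto\eta+c$. Hence Tail risk also increases by at most $\varepsilon$.
\end{itemize}

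So every rule in $\mathcal G_{e_2}(\alpha,\delta)$ transports into $\mathcal G_{e_1}(\alpha+\varepsilon,\delta)$ with objective at most $J_{e_2}+(K-1)q_G$. The condition $\alpha+\varepsilon\le 1$ keeps the level meaningful. Taking the infimum gives \eqref{eq:paper-bicriteria-garbling}.

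\textbf{Step 5: the exact repair \eqref{eq:paper-exact-repair-garbling}.} I mix the transported marginals with the all-ones vector, with weight $\lambda=\lambda_\varepsilon$. This mixture stays in $\mathcal P_K$, and its loss is exactly $(1-\lambda)L^{(1)}$. Because $\rho$ is positively homogeneous, and the pointwise supremum scales as well, the resulting risk is at most $(1-\lambda)r_\varepsilon$.

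To see this is at most $\alpha$, note that $r_\varepsilon\le1$ bounds the loss in every regime. The choice $1-\lambda=\alpha/r_\varepsilon$ then gives risk at most $\alpha$. The objective becomes $(1-\lambda)J+\lambda K$. Taking the infimum over $g^{(2)}$ gives \eqref{eq:paper-exact-repair-garbling}.

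The edge cases are handled as follows.
\begin{itemize}
\item If $\lambda_\varepsilon=0$, that is $\alpha\ge r_\varepsilon$, the transported rule is already valid at level $\alpha$.
\item If $r_\varepsilon=0$, then $\alpha=\varepsilon=0$ and the transported rule is valid with no mixing.
\end{itemize}

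Setting $\varepsilon=0$ gives $\lambda=0$ and $q_G=0$, which is the final claim.

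\textbf{Main obstacle.} I expect the main care to lie in Step 5's case analysis and in verifying homogeneity of $\rho_{\delta,\mu}$. Homogeneity follows from the substitution $\eta\mapsto(1-\lambda)\eta$ in \eqref{eq:tail-risk}. I also need to check that the transported rule's risk, not merely the pointwise shifted bound, is capped by $\min\{1,\alpha+\varepsilon\}$.
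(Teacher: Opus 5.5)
Your proposal is correct and follows the paper's proof essentially step for step. The steps match: the bounded-function total-variation inequality for the losses, the oscillation bound with $f=\sum_k g^{(2)}_k\in[1,K]$, mixture convexity for $q_G$, a sup-shift bound on the validity functional, and positive homogeneity for the mixture with the full set. The only differences are cosmetic: you get nonemptiness by averaging in $\mathcal P_K$ and invoking \Cref{lem:measurable-whole-set-lifting} rather than composing $G$ with $\Pi_{e_2}$ directly, and you derive the Tail shift from monotonicity plus translation equivariance of $\rho_{\delta,\mu_k}$ rather than from its risk-envelope form.
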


The proof is a direct transport argument. Markov-kernel composition preserves the nonempty whole-set decision, bounded expectations move by at most total variation, and conditional candidate-set size lies in $[1,K]$, giving the factor $K-1$ in \eqref{eq:paper-objective-transport}. The final mixture uses the full set as a zero-exclusion safe rule. A complete proof is given in \Cref{app:proof-common-garbling}.

The same comparison can be expressed through one-sided deficiency on the composite parameter space
\[
\Omega=\bigcup_{k=1}^K\{k\}\times\Theta_k,
\qquad
\operatorname{def}(e_1\rightsquigarrow e_2)
=
\inf_G\sup_{(k,\theta)\in\Omega}
\TV\!\left(P^{e_2}_{k,\theta},P^{e_1}_{k,\theta}G\right).
\]

\begin{corollary}[Deficiency continuity of the resolution--validity frontier]
\label{cor:paper-deficiency}
Let $d_{12}=\operatorname{def}(e_1\rightsquigarrow e_2)$. The bounds of \Cref{thm:paper-common-garbling} hold for every $\varepsilon>d_{12}$, and at $\varepsilon=d_{12}$ when the infimum is attained. If $\alpha>0$, the exact-repair comparison \eqref{eq:paper-exact-repair-garbling} also holds at a nonattained $d_{12}$ by passage to the limit. For $\varepsilon>0$, if both directed deficiencies are at most $\varepsilon$, then for every $\delta\in[0,1]$ the symmetric bounds below hold. At the zero-error endpoint $\varepsilon=0$, they also hold when $\alpha>0$; if $\alpha=0$, they require attainment of both zero deficiencies (or another condition that supplies the same zero-error continuity). Under these conditions,
\begin{align}
\left|V_{e_1}^\star(\alpha,\delta)-V_{e_2}^\star(\alpha,\delta)\right|
&\le (K-1)\bigl[(1-\lambda_\varepsilon)\varepsilon+\lambda_\varepsilon\bigr],
\label{eq:paper-deficiency-V}\\
\left|R_{e_1}^\star(\alpha,\delta)-R_{e_2}^\star(\alpha,\delta)\right|
&\le (1-\lambda_\varepsilon)\varepsilon+\lambda_\varepsilon.
\label{eq:paper-deficiency-R}
\end{align}
\end{corollary}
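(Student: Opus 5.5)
The plan is to reduce everything to \Cref{thm:paper-common-garbling} applied in both directions, using near-optimal kernels supplied by the definition of deficiency. Fix $\varepsilon>d_{12}$. By the definition of $\operatorname{def}(e_1\rightsquigarrow e_2)$ as an infimum, there is a kernel $G$ with $\sup_{(k,\theta)}\TV(P^{e_2}_{k,\theta},P^{e_1}_{k,\theta}G)\le\varepsilon$. This is exactly hypothesis \eqref{eq:paper-approx-garbling}, so every conclusion of the theorem holds at this $\varepsilon$. If the infimum is attained, we take $\varepsilon=d_{12}$ directly.

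For a nonattained $d_{12}$ with $\alpha>0$, take a sequence $\varepsilon_n\downarrow d_{12}$ and apply \eqref{eq:paper-exact-repair-garbling} at each $\varepsilon_n$. Use $q_G\le\varepsilon_n$ there. The right-hand side $(1-\lambda_{\varepsilon})[V_{e_2}^\star+(K-1)\varepsilon]+\lambda_\varepsilon K$ is continuous in $\varepsilon$, because $r_\varepsilon=\min\{1,\alpha+\varepsilon\}\ge\alpha>0$ keeps $\lambda_\varepsilon$ continuous. The left-hand side $V_{e_1}^\star(\alpha,\delta)$ does not depend on $\varepsilon$. Passing to the limit therefore gives the bound at $d_{12}$. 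This is the reason for requiring $\alpha>0$. When $\alpha=0$ we have $r_\varepsilon=\varepsilon$, so at $\varepsilon=0$ the convention $\lambda_0=0$ is discontinuous: $\lambda_\varepsilon=1$ for every $\varepsilon>0$. The limit then only yields the trivial bound $K$. Hence at $\alpha=0$ attainment is needed.

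For the symmetric bounds, I apply \eqref{eq:paper-exact-repair-garbling} in each direction, or its limit version at $\varepsilon=0$ when $\alpha>0$. Write $V_i=V_{e_i}^\star(\alpha,\delta)$. Then
\[
V_1\le V_2+(1-\lambda_\varepsilon)(K-1)\varepsilon+\lambda_\varepsilon(K-V_2).
\]
The key step is to bound $\lambda_\varepsilon(K-V_2)\le\lambda_\varepsilon(K-1)$, which uses $V_2\ge1$ from nonemptiness. This gives
\[
V_1-V_2\le (K-1)\bigl[(1-\lambda_\varepsilon)\varepsilon+\lambda_\varepsilon\bigr].
\]
Swapping the roles of $e_1$ and $e_2$ gives the reverse inequality, which proves \eqref{eq:paper-deficiency-V}. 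Dividing by $K-1$ and using $R^\star=(K-V^\star)/(K-1)$, so that $|R_1-R_2|=|V_1-V_2|/(K-1)$, gives \eqref{eq:paper-deficiency-R}.

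I expect the only delicate point to be the limiting argument at nonattained or zero deficiency. It has two parts. First, one must verify that the bound's continuity in $\varepsilon$ comes solely from $\lambda_\varepsilon$ and the factor $(K-1)\varepsilon$. Second, one must check that the theorem's constants are uniform, so that the bound does not depend on the particular kernel $G$ beyond $\varepsilon$. Both follow by inspecting \eqref{eq:paper-exact-repair-garbling} with $q_G\le\varepsilon$.
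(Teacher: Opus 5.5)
Your proof is correct and follows the paper's argument: you take near-optimal kernels from the infimum, pass to the limit using continuity of $\lambda_t$ when $\alpha>0$, and apply the exact-repair bound in both directions, with $V^\star\ge1$ absorbing the $\lambda_\varepsilon(K-V_2)$ term. The one case you leave implicit is $\alpha=0$ with a nonattained directed deficiency exactly equal to some $\varepsilon>0$; it is harmless, because then $\lambda_\varepsilon=1$ and the bound reduces to $K-1$, which always holds.
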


\begin{proof}[Proof of \Cref{thm:paper-common-garbling}]
Let $\Pi_{e_2}$ be a randomized nonempty whole-set rule with inclusion marginals $g^{(2)}$. Under $e_1$, observe $Y_1$, generate $\widetilde Y_2\sim G(Y_1,\cdot)$, and then draw the returned set from $\Pi_{e_2}(\cdot\mid\widetilde Y_2)$. This Markov-kernel composition remains supported on nonempty subsets and has inclusion marginals $T_Gg^{(2)}$.

For each $(k,\theta)$, $g_k^{(2)}$ is bounded in $[0,1]$, so the bounded-expectation total-variation inequality gives
\[
\left|
\E_{P^{e_1}_{k,\theta}G}g_k^{(2)}
-
\E_{P^{e_2}_{k,\theta}}g_k^{(2)}
\right|
\le d_k(\theta).
\]
Since exclusion loss is one minus retention probability, this proves \eqref{eq:paper-loss-transport}.

For the efficiency objective define
\[
f(y_2)=\sum_{k=1}^K g_k^{(2)}(y_2).
\]
Nonemptiness and marginal feasibility imply $1\le f\le K$. The transported efficiency expectation is under $Q_{e_1}G$, whereas the original expectation is under $Q_{e_2}$. Applying the oscillation form of the total-variation inequality therefore yields
\[
\left|
\E_{Q_{e_1}G}f-
\E_{Q_{e_2}}f
\right|
\le(K-1)q_G,
\]
which is \eqref{eq:paper-objective-transport}. Convexity of total variation under the common structural and nuisance mixing used to form $Q_e$ gives $q_G\le\varepsilon$.

If the $e_2$ rule is pointwise valid at level $\alpha$, \eqref{eq:paper-loss-transport} makes its transport pointwise valid at level $\alpha+\varepsilon$. For $0<\delta\le1$, the risk-envelope representation implies
\[
\rho_{\delta,\mu_k}(L_{k,e_1})
\le
\rho_{\delta,\mu_k}(L_{k,e_2})
+
\sup_\theta|L_{k,e_1}(\theta)-L_{k,e_2}(\theta)|
\le\alpha+\varepsilon.
\]
Thus the same transfer holds for strict Tail validity and for the nuisance-average endpoint. Taking an infimizing sequence of $e_2$ rules gives \eqref{eq:paper-bicriteria-garbling}.

For exact repair, every transported exclusion risk is bounded by
$r_\varepsilon=\min\{1,\alpha+\varepsilon\}$. Let $g^{\rm all}\equiv(1,\ldots,1)$ and define
\[
\bar g=(1-\lambda_\varepsilon)T_Gg^{(2)}+\lambda_\varepsilon g^{\rm all}.
\]
All nuisance-specific exclusion losses are multiplied by $1-\lambda_\varepsilon$. Pointwise supremum, mean risk, and the Tail functional are positively homogeneous, and
\[
(1-\lambda_\varepsilon)r_\varepsilon\le\alpha.
\]
Hence $\bar g$ is exactly $\alpha$-valid. Linearity of expected set size gives
\[
J_{e_1}(\bar g)
=(1-\lambda_\varepsilon)J_{e_1}(T_Gg^{(2)})+\lambda_\varepsilon K.
\]
Combining this identity with \eqref{eq:paper-objective-transport} and taking an infimizing sequence proves \eqref{eq:paper-exact-repair-garbling}. The exact common-garbling claim is the case $\varepsilon=0$.
\end{proof}

\begin{proof}[Proof of Corollary~\ref{cor:paper-deficiency}]
The first statement follows from the definition of the infimum: for every $\eta>d_{12}$ there exists a common kernel with uniform error at most $\eta$. If the infimum is attained, use the attaining kernel. If $d_{12}=1$, every kernel has total-variation error at most one.

Suppose $\alpha>0$ and the infimum is not attained. Choose errors $\eta_n\downarrow d_{12}$ and apply the exact-repair bound of \Cref{thm:paper-common-garbling}. Because
\[
r_t=\min\{1,\alpha+t\},
\qquad
\lambda_t=\left(1-\frac{\alpha}{r_t}\right)_+,
\]
are continuous for $t\in[0,1]$ when $\alpha>0$, passage to the limit gives the same exact-repair comparison at $d_{12}$.

For the symmetric bound, first note that for $\alpha>0$ the function
\[
B(t):=(1-\lambda_t)t+\lambda_t
=
\begin{cases}
\dfrac{(1+\alpha)t}{\alpha+t},&0\le t\le1-\alpha,\\[2mm]
\alpha t+1-\alpha,&1-\alpha\le t\le1,
\end{cases}
\]
is continuous and nondecreasing. When $\alpha=0$, $B(t)=1$ for every $t>0$ while the declared exact endpoint has $B(0)=0$; hence no nonattained zero-error continuity is asserted. For $\varepsilon>0$, approximation by errors below any value slightly above the directed deficiency and monotonicity of the positive-error bound suffice; at $\varepsilon=0$ with $\alpha=0$, use the assumed attaining zero-error kernels. Apply the exact-repair comparison in the direction $e_1\rightsquigarrow e_2$ and use $V_{e_2}^\star\ge1$:
\begin{align*}
V_{e_1}^\star-V_{e_2}^\star
&\le
(1-\lambda_\varepsilon)(K-1)\varepsilon
+\lambda_\varepsilon(K-V_{e_2}^\star)\\
&\le
(K-1)\bigl[(1-\lambda_\varepsilon)\varepsilon+\lambda_\varepsilon\bigr].
\end{align*}
The same argument with the experiments reversed bounds the opposite difference, proving \eqref{eq:paper-deficiency-V}. Finally,
\[
R_e^\star=\frac{K-V_e^\star}{K-1},
\]
so division by $K-1$ gives \eqref{eq:paper-deficiency-R}.
\end{proof}

\subsection{Exact cyclic aliasing separation}
\label{app:proof-cyclic-aliasing}
\begin{proof}[Proof of \Cref{thm:paper-aliasing}]
Identify structural and nuisance alphabets with the cyclic group $\mathbb Z_K$. Under $e_I$, define
\begin{equation}
Y=M\oplus Z.
\label{eq:app-cyclic-aliasing}
\end{equation}
For every fixed $Z=z$, the map $M\mapsto M\oplus z$ is a bijection, so $M=Y\ominus z$ is recovered exactly and $I(M;Y\mid Z=z,e_I)=\log K$. A deterministic bijective channel can be garbled into the $e_R$ channel defined below.

After nuisance averaging,
\[
P(Y=j\mid M=k,e_I)
=
\begin{cases}
1-\varepsilon,&j=k,\\
\varepsilon/(K-1),&j\ne k,
\end{cases}
\]
so the structural channel is $\mathsf C_K(\varepsilon)$. Under $e_R$, let the observation law be nuisance-independent:
\[
P(Y=j\mid M=k,Z=z,e_R)
=
\begin{cases}
1-\beta,&j=k,\\
\beta/(K-1),&j\ne k.
\end{cases}
\]
Its marginal channel is $\mathsf C_K(\beta)$. Define
\[
\chi(q)=1-\frac{K}{K-1}q.
\]
Composition of two $K$-ary symmetric channels multiplies their nontrivial eigenvalues $\chi(\cdot)$. Since $0<\chi(\beta)<\chi(\varepsilon)$, choose
\[
\eta=\frac{K-1}{K}\left(1-\frac{\chi(\beta)}{\chi(\varepsilon)}\right)
\in\left(0,\frac{K-1}{K}\right).
\]
Then $\mathsf C_K(\beta)=\mathsf C_K(\varepsilon)\mathsf C_K(\eta)$, proving nuisance-marginal Blackwell dominance. Strict model-index EIG preference follows from the monotonicity of \eqref{eq:paper-kary-mi}, and the average Bayes error is $\varepsilon<\beta$.

Under $e_I$, $Y$ is a deterministic function of $(M,Z)$. Since $M$ is uniform and independent of $Z$, $Y=M\oplus Z$ is uniform on $\mathbb Z_K$, hence
\[
I((M,Z);Y\mid e_I)=H(Y)=\log K.
\]
Under $e_R$, $Y$ is conditionally independent of $Z$ given $M$, so
\[
I((M,Z);Y\mid e_R)=I(M;Y\mid e_R)=\mathcal I_K(\beta)<\log K.
\]

For pointwise validity under $e_I$, fix any $k,y\in\mathbb Z_K$. The nuisance state $z=y\ominus k$ is admissible and makes $Y=y$ with probability one when $M=k$. Thus every pointwise-valid randomized rule must satisfy
\[
g_k(y)\ge1-\alpha\qquad\forall k,y.
\]
Summing over $k$ yields
\[
\sum_{k=1}^K g_k(y)\ge K(1-\alpha),
\]
so $J\ge K(1-\alpha)$ under any efficiency distribution on $Y$. To attain the bound, let $t=K(1-\alpha)$, $m=\lfloor t\rfloor$, and $r=t-m$. Independently of $Y$, return a uniformly random $m$-subset with probability $1-r$ and a uniformly random $(m+1)$-subset with probability $r$. Each label is included with probability
\[
\frac{(1-r)m+r(m+1)}{K}=1-\alpha,
\]
and expected set size is $t$. Hence $V_{e_I}^\star(\alpha,0)=K(1-\alpha)$. Under $e_R$, the singleton rule $\Gamma(Y)=\{Y\}$ has exclusion risk $\beta\le\alpha$ at every nuisance state and achieves $J=1$, proving \eqref{eq:paper-pointwise-alias-values}.

It remains to solve the strict Tail problem under $e_I$. Represent a marginal rule by the matrix $g=(g_k(y))_{k,y\in\mathbb Z_K}$. For $u\in\mathbb Z_K$, define the simultaneous translation
\[
g^{(u)}_k(y)=g_{k\oplus u}(y\oplus u).
\]
The feasible marginal set is convex, and the structural prior, nuisance law, objective, nonemptiness constraint, and each family Tail-risk constraint are invariant under this action. Averaging over $u$ therefore preserves expected size and nonemptiness and, by convexity of $\rho_\delta$, cannot increase Tail risk. The averaged matrix has the form $g_k(k\oplus z)=h_z$. Permuting the $K-1$ nonzero offsets and averaging again yields an optimal rule with a common exclusion probability on all alias states. Write
\[
\ell_0=1-g_k(k),
\qquad
\ell_A=1-g_k(k\oplus z),\quad z\ne0.
\]
Thus $\ell_0$ is the false-exclusion level on the aligned nuisance state and $\ell_A$ is the common false-exclusion level on the alias states. The expected candidate-set size is
\[
K-\bigl[(K-1)\ell_A+\ell_0\bigr],
\]
and mandatory nonemptiness imposes $(K-1)\ell_A+\ell_0\le K-1$.

An optimum may be taken with $\ell_A\ge \ell_0$. Indeed, if $\ell_0>\ell_A$, decrease $\ell_0$ by $t$ and increase each alias exclusion by $t/(K-1)$ until the two values meet. This preserves $(K-1)\ell_A+\ell_0$. While $\ell_0\ge \ell_A$, the aligned atom is the upper loss. If $\delta\le1-\varepsilon$, the upper-tail value decreases at unit rate. If $\delta>1-\varepsilon$, the derivative of the upper-tail numerator is at most
\[
-(1-\varepsilon)+\frac{\varepsilon}{K-1}<0,
\]
because $\varepsilon<(K-1)/K$. Thus the exchange weakly decreases Tail risk without changing the objective.

For $\ell_A\ge \ell_0$, the loss distribution has value $\ell_A$ on total nuisance mass $\varepsilon$ and value $\ell_0$ on mass $1-\varepsilon$, so
\[
\rho_\delta(L)=
\begin{cases}
\ell_A,&\delta\le\varepsilon,\\[1mm]
\dfrac{\varepsilon \ell_A+(\delta-\varepsilon)\ell_0}{\delta},&\delta>\varepsilon.
\end{cases}
\]
When $\delta\le\varepsilon$, validity requires $\ell_A\le\alpha$ and hence $\ell_0\le\alpha$, so total exclusion is at most $K\alpha$ and the pointwise value $K(1-\alpha)$ remains optimal.

For $\delta>\varepsilon$, minimizing candidate-set size is equivalent to maximizing
\[
(K-1)\ell_A+\ell_0
\]
subject to
\[
\varepsilon \ell_A+(\delta-\varepsilon)\ell_0\le\alpha\delta,
\qquad
0\le \ell_0\le \ell_A\le1,
\]
and the nonemptiness cap. Ignoring the order constraint, the objective gained per unit Tail-risk budget is $(K-1)/\varepsilon$ for $\ell_A$ and $1/(\delta-\varepsilon)$ for $\ell_0$. These are equal at
\[
\delta_0=\frac{K\varepsilon}{K-1}.
\]
For $\delta<\delta_0$, the order constraint forces the balanced boundary $\ell_A=\ell_0=\alpha$, giving total exclusion $K\alpha$. For $\delta\ge\delta_0$, exclusion is optimally allocated to the alias states, so $\ell_0=0$ and
\[
\ell_A=\min\!\left(1,\frac{\alpha\delta}{\varepsilon}\right).
\]
The nonemptiness cap produces the floor at one, giving \eqref{eq:paper-tail-alias-value} and \eqref{eq:paper-tail-alias-piecewise}. Singleton resolution is reached exactly when $\alpha\delta/\varepsilon\ge1$, which proves \eqref{eq:paper-tail-alias-iff}. Under $e_R$, the singleton rule has constant exclusion risk $\beta\le\alpha$, so $V_{e_R}^\star(\alpha,\delta)=1$ for all $\delta$.
\end{proof}

\subsection{Scaling consequence}
\begin{corollary}[Linear-in-$K$ separation for every strict Tail level]
\label{cor:paper-k-scaling}
Fix $\alpha\in(0,1)$ and any $\bar\delta\in[0,1)$. For all sufficiently large $K$ satisfying $\alpha<1-1/K$, there exists a two-experiment problem in which nuisance-marginalized structural information, full-latent information, average Bayes classification, and nuisance-marginal Blackwell informativeness all strictly rank $e_I$ above $e_R$, while
\begin{equation}
\frac{V_{e_I}^\star(\alpha,\bar\delta)}
{\min_{e\in\{e_I,e_R\}}V_e^\star(\alpha,\bar\delta)}
\ge
Kc_{\alpha,\bar\delta},
\qquad
c_{\alpha,\bar\delta}
=
\min\!\left\{1-\alpha,\frac{1-\bar\delta}{1+\bar\delta}\right\}>0.
\label{eq:paper-k-scaling}
\end{equation}
\end{corollary}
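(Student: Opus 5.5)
The plan is to instantiate the cyclic construction of \Cref{thm:paper-aliasing} with parameters chosen as functions of $\alpha$ and $\bar\delta$ alone, and then read the ratio off the exact Tail frontier \eqref{eq:paper-tail-alias-value}. Fix $\alpha\in(0,1)$ and $\bar\delta\in[0,1)$, and take $K$ large enough that $\alpha<1-1/K$, as the statement requires. I would set
\[
\varepsilon=\frac{\alpha(1+\bar\delta)}{2},\qquad \beta=\alpha .
\]
Since $\bar\delta<1$, we get $0<\varepsilon<\alpha=\beta$, so the standing hypotheses $0<\varepsilon<\beta\le\alpha<1-1/K$ of \Cref{thm:paper-aliasing} hold.

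The informativeness half is then immediate from parts (ii) and (iii) of that theorem.
\begin{itemize}
\item Part (ii) gives nuisance-marginal Blackwell dominance of $\mathsf C_K(\varepsilon)$ over $\mathsf C_K(\beta)$, strict structural EIG preference, and strict average-Bayes preference, because the Bayes errors are $\varepsilon<\beta$.
\item Part (iii) gives the strict full-latent EIG preference.
\end{itemize}
For the denominator, every RAED value is at least $1$, and $V_{e_R}^\star(\alpha,\bar\delta)=1$ by parts (iv) and (v). Hence $\min_{e\in\{e_I,e_R\}}V_e^\star(\alpha,\bar\delta)=1$, and it suffices to show $V_{e_I}^\star(\alpha,\bar\delta)\ge Kc_{\alpha,\bar\delta}$.

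For the numerator I would split into two cases.

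\emph{Case $\bar\delta=0$.} Part (iv) gives $V_{e_I}^\star=K(1-\alpha)\ge Kc_{\alpha,0}$.

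\emph{Case $\bar\delta\in(0,1)$.} I would use \eqref{eq:paper-tail-alias-value}. Dropping the outer $\max\{1,\cdot\}$ can only decrease the value, so
\[
V_{e_I}^\star\ge\min\Bigl\{K(1-\alpha),\,K-(K-1)\tfrac{\alpha\bar\delta}{\varepsilon}\Bigr\}.
\]
With the chosen $\varepsilon$ we have $\alpha\bar\delta/\varepsilon=2\bar\delta/(1+\bar\delta)$. Therefore
\[
K-(K-1)\frac{2\bar\delta}{1+\bar\delta}\ \ge\ K-K\frac{2\bar\delta}{1+\bar\delta}=K\,\frac{1-\bar\delta}{1+\bar\delta}.
\]
Both branches of the minimum are thus at least $K\min\{1-\alpha,(1-\bar\delta)/(1+\bar\delta)\}=Kc_{\alpha,\bar\delta}$, which gives \eqref{eq:paper-k-scaling}. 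The constant is positive because $\alpha<1$ and $\bar\delta<1$.

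The only real obstacle is the choice of $\varepsilon$. It must be strictly below $\beta\le\alpha$ to preserve the information orderings, yet large relative to $\alpha\bar\delta$ so that the alias mass is not absorbed by the tail budget. The midpoint choice $\varepsilon=\alpha(1+\bar\delta)/2$ meets both requirements with a $K$-independent margin, and this margin is what produces the factor $(1-\bar\delta)/(1+\bar\delta)$. No further analysis beyond \Cref{thm:paper-aliasing} should be needed.
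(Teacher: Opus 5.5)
Your proposal is correct and follows the paper's proof essentially step for step. You instantiate \Cref{thm:paper-aliasing} with $\varepsilon=\alpha(1+\bar\delta)/2$, use $V_{e_R}^\star=1$ for the denominator, and lower-bound the exact Tail formula by $K(1-\bar\delta)/(1+\bar\delta)$. The differences are only in the parameter choices: you take the boundary value $\beta=\alpha$, which the theorem's hypothesis $\beta\le\alpha$ allows since the singleton rule then has exclusion risk exactly $\alpha$, whereas the paper uses $\beta=(\alpha+\varepsilon)/2$, and at $\bar\delta=0$ your unified $\varepsilon$ reduces to the paper's $\varepsilon=\alpha/2$ (where the paper takes $\beta=3\alpha/4$).
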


\begin{proof}[Proof of Corollary~\ref{cor:paper-k-scaling}]
If $\bar\delta=0$, apply the pointwise part of \Cref{thm:paper-aliasing} with, for example, $\varepsilon=\alpha/2$ and $\beta=3\alpha/4$. For all sufficiently large $K$, the theorem's parameter restriction holds and
\[
V_{e_I}^\star(\alpha,0)=K(1-\alpha),
\qquad
V_{e_R}^\star(\alpha,0)=1,
\]
which is \eqref{eq:paper-k-scaling} because $c_{\alpha,0}=1-\alpha$.

Now take $0<\bar\delta<1$ and choose
\[
\varepsilon=\frac{\alpha(1+\bar\delta)}{2},
\qquad
\beta=\frac{\alpha+\varepsilon}{2}.
\]
Then
\[
0<\alpha\bar\delta<\varepsilon<\beta<\alpha.
\]
For sufficiently large $K$, all hypotheses of \Cref{thm:paper-aliasing} hold, all information/classification comparisons in the corollary strictly rank $e_I$ above $e_R$, and $V_{e_R}^\star=1$. The exact Tail formula gives
\[
V_{e_I}^\star(\alpha,\bar\delta)
=
\max\!\left\{1,
\min\!\left[K(1-\alpha),
K-(K-1)\frac{2\bar\delta}{1+\bar\delta}
\right]
\right\}.
\]
Moreover,
\[
K-(K-1)\frac{2\bar\delta}{1+\bar\delta}
\ge
K\frac{1-\bar\delta}{1+\bar\delta}.
\]
Therefore
\[
V_{e_I}^\star(\alpha,\bar\delta)
\ge
K\min\!\left\{1-\alpha,\frac{1-\bar\delta}{1+\bar\delta}\right\},
\]
which proves the result.
\end{proof}

\subsection{Diffuse near-aliases}
\label{app:proof-near-alias}
\begin{theorem}[General family-specific near-alias bound]
\label{thm:app-near-alias-general}
Fix $e$ and $0<\delta\le1$, and write
\[
Q_e=\sum_{r=1}^R\lambda_rQ_r,
\qquad
\lambda_r\ge0,
\qquad
\sum_r\lambda_r=1.
\]
For every family $k$ and component $r$, suppose there is a measurable $A_{k,r}\subseteq\Theta_k$ with
\[
m_{k,r}=\mu_k(A_{k,r})>0,
\qquad
\tau_{k,r}
=\frac1{m_{k,r}}
\int_{A_{k,r}}\TV(P^e_{k,\theta},Q_r)\,\mu_k(\dd\theta).
\]
Then every Tail-RAED feasible rule satisfies
\begin{equation}
\E_{Q_r}|\Gamma_e(Y)|
\ge
\max\!\left\{1,
\sum_{k=1}^K
\left[1-\tau_{k,r}-\alpha\kappa_\delta(m_{k,r})\right]_+
\right\}
\label{eq:app-near-alias-local}
\end{equation}
for every $r$, and hence
\begin{equation}
V_e^\star(\alpha,\delta)
\ge
\sum_{r=1}^R\lambda_r
\max\!\left\{1,
\sum_{k=1}^K
\left[1-\tau_{k,r}-\alpha\kappa_\delta(m_{k,r})\right]_+
\right\}.
\label{eq:app-near-alias-global}
\end{equation}
\end{theorem}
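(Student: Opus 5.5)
The plan is to prove the local bound \eqref{eq:app-near-alias-local} one component $r$ at a time and then average. For the global bound \eqref{eq:app-near-alias-global}, use $J_e(g)=\sum_r\lambda_r\E_{Q_r}\sum_k g_k(Y)$, valid by linearity because $Q_e=\sum_r\lambda_rQ_r$. Apply the local bound inside the sum and take the infimum over feasible $g$. The floor of $1$ is immediate from nonemptiness \eqref{eq:nonempty-marginals}, since $\sum_kg_k(y)\ge1$ pointwise. It therefore suffices to show, for each family $k$, that every Tail-feasible rule satisfies
\[
\E_{Q_r}g_k(Y)\ge 1-\tau_{k,r}-\alpha\kappa_\delta(m_{k,r}).
\]
Because $g_k\ge0$, this automatically improves to the positive part, and summing over $k$ then gives \eqref{eq:app-near-alias-local}.

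The per-family bound has two ingredients. The first is a total-variation transfer. For each $\theta\in A_{k,r}$, since $g_k\in[0,1]$,
\[
\E_{Q_r}g_k\ge C_{k,e}(\theta;g)-\TV(P^e_{k,\theta},Q_r).
\]
Averaging over $\theta$ under the normalized restriction $\mu_k(\cdot\mid A_{k,r})$ gives
\[
\E_{Q_r}g_k\ge 1-\bar L_{k,r}-\tau_{k,r},\qquad \bar L_{k,r}=\frac{1}{m_{k,r}}\int_{A_{k,r}}L_{k,e}(\theta;g)\,\mu_k(\dd\theta).
\]
The second ingredient is to bound $\bar L_{k,r}\le\alpha\kappa_\delta(m_{k,r})$ using the Tail constraint $\rho_{\delta,\mu_k}(L_{k,e})\le\alpha$.

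For this second step I would use the dual (risk-envelope) representation of the CVaR functional \eqref{eq:tail-risk}:
\[
\rho_{\delta,\mu}(L)=\sup\Bigl\{\int L\,\dd\mu\,w:\ 0\le w\le 1/\delta,\ \textstyle\int w\,\dd\mu=1\Bigr\}.
\]
This is the standard Rockafellar--Uryasev duality, which I would either cite or verify directly by weak duality from the variational formula. The idea is to exhibit an admissible density $w$ that places as much weight as possible on $A=A_{k,r}$, and split into two cases.

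\emph{Case $m\ge\delta$.} Take $w=\mathbf 1_{A'}/\delta$ for a subset $A'\subseteq A$ with $\mu_k(A')=\delta$. Then $\int_{A'}L\,\dd\mu\le\alpha\delta$. This only controls the average over a $\delta$-mass subset, not over all of $A$. The clean way around this is the weight $w=\mathbf 1_A/m$, which is admissible since $1/m\le1/\delta$. It gives $\bar L_{k,r}\le\rho_{\delta,\mu_k}(L)\le\alpha$ directly, with no need to choose a subset. If $\mu_k$ has atoms, $w=\mathbf 1_A/m$ still works, so atoms cause no trouble.

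\emph{Case $m<\delta$.} Take $w=\mathbf 1_A/\delta+c\,\mathbf 1_{A^c}$ with $c=(1-m/\delta)/(1-m)\in[0,1/\delta]$. Then
\[
\alpha\ge\rho\ge\frac1\delta\int_AL\,\dd\mu=\frac{m}{\delta}\bar L,
\]
using $L\ge0$ on $A^c$. Hence $\bar L\le\alpha\delta/m$.

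The main obstacle is making sure the dual representation holds in the generality used here: a general measure $\mu_k$ and a measurable loss $L\in[0,1]$. I would check it directly. For any admissible $w$ and any $\eta$,
\[
\int Lw\,\dd\mu\le\eta+\int(L-\eta)_+w\,\dd\mu\le\eta+\frac1\delta\int(L-\eta)_+\,\dd\mu.
\]
Taking the infimum over $\eta$ gives $\int Lw\,\dd\mu\le\rho_{\delta,\mu_k}(L)$, which is exactly the one-sided inequality needed. Full strong duality is therefore unnecessary. The case $\delta=1$ is covered by the same argument, since $\kappa_1(m)=1/m$ and $\kappa_1(1)=1$. The remaining edge case is $\mu_k(A^c)=0$ in the second case, i.e. $m=1$; this cannot occur when $m<\delta\le1$.
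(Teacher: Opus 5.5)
Your proof is correct and follows the paper's argument essentially step for step. It uses the same two envelope weights, $\mathbf 1_A/m$ for $m\ge\delta$ and $\mathbf 1_A/\delta+\frac{\delta-m}{\delta(1-m)}\mathbf 1_{A^c}$ for $m<\delta$, to get $\bar L_{k,r}\le\alpha\kappa_\delta(m_{k,r})$, followed by the same total-variation transfer, positive part, summation over $k$, nonemptiness floor, and mixture averaging. Your one addition is that you derive the one-sided inequality $\int Lw\,\dd\mu\le\rho_{\delta,\mu_k}(L)$ directly from the variational formula, whereas the paper simply asserts that these weights are feasible in the Tail risk envelope.
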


\begin{proof}
Fix $k,r$ and abbreviate $A=A_{k,r}$, $m=m_{k,r}$, and $L(\theta)=L_{k,e}(\theta)$. Tail validity gives $\rho_{\delta,\mu_k}(L)\le\alpha$.

If $m\ge\delta$, the weight $w=\ind_A/m$ is feasible in the Tail risk envelope, because $1/m\le1/\delta$. Hence
\[
\frac1m\int_A L(\theta)\,\mu_k(\dd\theta)\le\alpha.
\]
If $m<\delta$, choose
\[
w(\theta)=\frac1\delta\ind_A(\theta)
+\frac{\delta-m}{\delta(1-m)}\ind_{A^c}(\theta).
\]
This is a feasible envelope weight. Since $L\ge0$,
\[
\alpha\ge\rho_{\delta,\mu_k}(L)
\ge\frac1\delta\int_A L(\theta)\,\mu_k(\dd\theta),
\]
so
\[
\frac1m\int_A L(\theta)\,\mu_k(\dd\theta)
\le\frac{\alpha\delta}{m}.
\]
Both cases are summarized by
\begin{equation}
\frac1m\int_A L(\theta)\,\mu_k(\dd\theta)
\le\alpha\kappa_\delta(m).
\label{eq:app-alias-region-loss}
\end{equation}

Let $g_k(y)=\Pr\{k\in\Gamma_e(Y)\mid Y=y\}\in[0,1]$. Since $1-L(\theta)=\E_{P^e_{k,\theta}}g_k(Y)$, the bounded-expectation total-variation inequality gives
\[
\E_{Q_r}g_k(Y)
\ge
\E_{P^e_{k,\theta}}g_k(Y)-\TV(P^e_{k,\theta},Q_r).
\]
Average over $A$ and use \eqref{eq:app-alias-region-loss} to obtain
\[
\E_{Q_r}g_k(Y)
\ge
1-\alpha\kappa_\delta(m_{k,r})-\tau_{k,r}.
\]
Since $g_k\ge0$, the right-hand side may be replaced by its positive part. Summing over $k$ gives
\[
\E_{Q_r}|\Gamma_e(Y)|
=\sum_k\E_{Q_r}g_k(Y)
\ge
\sum_k\left[1-\tau_{k,r}-\alpha\kappa_\delta(m_{k,r})\right]_+.
\]
Mandatory nonemptiness supplies the independent lower bound one, proving \eqref{eq:app-near-alias-local}. Averaging over the mixture weights $\lambda_r$ proves \eqref{eq:app-near-alias-global}. The uniform specialization reported in \eqref{eq:paper-near-alias-summary} follows because $\kappa_\delta(m)$ is nonincreasing in $m$ and each $\tau_{k,r}\le\tau$.
\end{proof}

\section{Learned RAED and Calibration Proofs}
\label{app:calibration-proofs}

\subsection{Population score geometry supporting the learned family}
\label{app:score-geometry}
Fix $e$ and let $P_{k,\mu}^{e}$ and $Q_e$ be as defined in \eqref{eq:paper-density-ratio-score} and \eqref{eq:efficiency-law}. Assume $P_{k,\mu}^{e}\ll Q_e$ and write $r_{k,e}=\dd P_{k,\mu}^{e}/\dd Q_e$.

\begin{proposition}[Classwise nuisance-average density-ratio rule]
\label{prop:app-density-ratio}
For the classwise relaxation
\begin{equation}
\min_{0\le g_k\le1}\ \E_{Q_e}g_k(Y)
\quad\text{subject to}\quad
\E_{P_{k,\mu}^{e}}g_k(Y)\ge1-\alpha,
\label{eq:app-classwise-np}
\end{equation}
there is a threshold $t_k\in[0,\infty]$ and boundary randomization $\gamma_k(y)\in[0,1]$ supported on $\{r_{k,e}=t_k\}$ such that an optimum is
\begin{equation}
g_k^\star(y)
=\ind\{r_{k,e}(y)>t_k\}
+\gamma_k(y)\ind\{r_{k,e}(y)=t_k\}.
\label{eq:app-np-rule}
\end{equation}
\end{proposition}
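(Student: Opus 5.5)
This is a Neyman--Pearson type statement. We prove it by a Lagrangian (or direct exchange) argument after rewriting both sides of \eqref{eq:app-classwise-np} as integrals against $Q_e$. Since $P_{k,\mu}^{e}\ll Q_e$, the constraint reads $\E_{Q_e}[r_{k,e}(Y)g_k(Y)]\ge1-\alpha$, while the objective is $\E_{Q_e}g_k(Y)$. The problem is therefore a fractional-knapsack problem. We must spend as little $Q_e$-mass as possible to collect at least $1-\alpha$ of $r_{k,e}$-weighted mass, and the greedy solution fills in observations in decreasing order of $r_{k,e}$.

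\textbf{Step 1 (choice of threshold).} Define $F(t)=\E_{Q_e}[r_{k,e}\ind\{r_{k,e}>t\}]=P_{k,\mu}^{e}\{r_{k,e}>t\}$. This function is nonincreasing and right-continuous, with $F(t)\to1$ as $t\downarrow0$ up to the mass of $\{r_{k,e}=0\}$, which is $P_{k,\mu}^{e}$-null. We treat the degenerate case $\alpha\ge1$ separately; it does not arise here since $\alpha<1$.

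If $1-\alpha=0$, take $t_k=\infty$ and $g\equiv0$. Otherwise set $t_k=\inf\{t\ge0:F(t)<1-\alpha\}$, so that $F(t_k)\le1-\alpha\le F(t_k^-)=P_{k,\mu}^{e}\{r_{k,e}\ge t_k\}$. Choose a constant $\gamma_k\in[0,1]$ on the boundary set $\{r_{k,e}=t_k\}$ so that $F(t_k)+\gamma_k P_{k,\mu}^{e}\{r_{k,e}=t_k\}=1-\alpha$. If $t_k=0$ there is slack, and the boundary set is $P_{k,\mu}^{e}$-null, so we take $\gamma_k=0$. This yields a $g_k^\star$ of the form \eqref{eq:app-np-rule} that meets the constraint with equality when $t_k>0$.

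\textbf{Step 2 (optimality).} Let $g$ be any feasible rule. We will show
\[(g_k^\star-g)(r_{k,e}-t_k)\ge0\quad Q_e\text{-a.s.}\]
This holds because $g^\star=1\ge g$ where $r>t_k$, $g^\star=0\le g$ where $r<t_k$, and the product vanishes on the boundary. Integrating against $Q_e$ gives
\[t_k\bigl(\E_{Q_e}g_k^\star-\E_{Q_e}g\bigr)\le\E_{P_{k,\mu}^{e}}g_k^\star-\E_{P_{k,\mu}^{e}}g=(1-\alpha)-\E_{P_{k,\mu}^{e}}g\le0.\]
When $t_k>0$, dividing by $t_k$ gives $\E_{Q_e}g_k^\star\le\E_{Q_e}g$. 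When $t_k=0$, the rule $g^\star=\ind\{r_{k,e}>0\}$ is optimal directly. Any feasible $g$ must retain, up to $P$-null sets, enough of $\{r>0\}$. More precisely, if $F(0)\ge1-\alpha$ yet no smaller threshold works, then $F(t)<1-\alpha$ for every $t>0$. A feasible $g$ can then be compared via the same inequality at every $t>0$, after which we let $t\downarrow0$.

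\textbf{Main obstacle.} The delicate part is the edge bookkeeping rather than the exchange inequality itself. This covers the cases $t_k\in\{0,\infty\}$, atoms of the law of $r_{k,e}$ (which force the boundary randomization), and the $t_k=0$ limiting argument. We will handle these by working with the distribution function $F$ and its left limits, as in the classical Neyman--Pearson lemma. If the limiting argument proves awkward, we will instead appeal to weak-$*$ compactness of $\{0\le g\le1\}$ in $L^\infty(Q_e)$ to obtain existence of an optimum, and then use the exchange inequality to show that some optimum can be modified to threshold form without increasing the objective.
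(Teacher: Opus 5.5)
Your proposal is correct and follows the same randomized Neyman--Pearson route as the paper: your exchange inequality $(g_k^\star-g)(r_{k,e}-t_k)\ge0$ is exactly the paper's pointwise minimization of the Lagrangian integrand $[1-\lambda r_{k,e}(y)]g_k(y)$ with $\lambda=1/t_k$. Your explicit choice of $t_k$ via $F$ and your separate limiting argument at $t_k=0$ cover an edge case, corresponding to $\lambda=\infty$, that the paper's finite-multiplier sketch leaves implicit. That case arises only when $\alpha=0$, where the constraint binds, so your remark that there is slack at $t_k=0$ is inaccurate but harmless.
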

\begin{proof}
The objective is the $Q_e$-mass of the retained region and the constraint requires at least $1-\alpha$ mass under $P_{k,\mu}^{e}$. Introducing a nonnegative multiplier $\lambda$ gives the pointwise Lagrangian integrand
\[
[1-\lambda r_{k,e}(y)]g_k(y)
\]
under $Q_e$. It is minimized by $g_k=1$ where $\lambda r_{k,e}>1$, by $g_k=0$ where $\lambda r_{k,e}<1$, and by arbitrary randomization on equality. The multiplier and boundary randomization are chosen so that the coverage constraint is met at equality unless it is slack at the unconstrained optimum. This is the randomized Neyman--Pearson construction.
\end{proof}

For strict Tail validity, let
\begin{equation}
\mathcal W_{k,\delta}
:=\left\{w\in L^\infty(\mu_k):
0\le w\le1/\delta,\ \int w\,\dd\mu_k=1\right\},
\label{eq:app-tail-envelope-weights}
\end{equation}
and define the reweighted predictive law
\begin{equation}
P_{k,w}^{e}(A)
:=\int w(\theta)P_{k,\theta}^{e}(A)\,\mu_k(\dd\theta).
\label{eq:app-reweighted-law}
\end{equation}

\begin{proposition}[Tail validity as robust family retention]
\label{prop:app-tail-robust-retention}
For every measurable marginal rule and every $0<\delta\le1$,
\begin{equation}
\rho_{\delta,\mu_k}(L_{k,e}(\cdot;g))\le\alpha
\quad\Longleftrightarrow\quad
\E_{P_{k,w}^{e}}g_k(Y)\ge1-\alpha
\quad\forall w\in\mathcal W_{k,\delta}.
\label{eq:app-tail-robust-equivalence}
\end{equation}
\end{proposition}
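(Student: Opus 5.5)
The plan is to reduce the equivalence to the dual (risk-envelope) representation of the upper-tail functional, and then rewrite the weighted loss through Fubini. Throughout, $L(\theta)=L_{k,e}(\theta;g)\in[0,1]$ is bounded and measurable in $\theta$. Measurability follows because $g_k$ is measurable with values in $[0,1]$ and $\theta\mapsto P^e_{k,\theta}$ is a kernel.

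\textbf{Step 1: envelope identity.} I would first prove
\[
\rho_{\delta,\mu_k}(L)=\sup_{w\in\mathcal W_{k,\delta}}\int w\,L\,\dd\mu_k .
\]

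For the upper bound ``$\ge$'', fix $w\in\mathcal W_{k,\delta}$ and $\eta\in\R$. Since $(L-\eta)_+\ge L-\eta$ and $w\le1/\delta$,
\[
\int wL\,\dd\mu_k=\eta+\int w(L-\eta)\,\dd\mu_k\le \eta+\frac1\delta\int (L-\eta)_+\,\dd\mu_k .
\]
Taking the infimum over $\eta$ gives the claim.

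For ``$\le$'', let $\eta^\star$ be a $(1-\delta)$-quantile of $L$ under $\mu_k$. If $\delta=1$, take $w\equiv1$, which gives equality directly with the mean. For $\delta<1$, put $w^\star=1/\delta$ on $\{L>\eta^\star\}$, put $w^\star=c$ on $\{L=\eta^\star\}$, and put $w^\star=0$ elsewhere. The constant $c\in[0,1/\delta]$ is chosen so that $\int w^\star\dd\mu_k=1$; such a $c$ exists by the quantile property $\mu_k(L>\eta^\star)\le\delta\le\mu_k(L\ge\eta^\star)$. Then
\[
\int w^\star L\,\dd\mu_k=\eta^\star+\int w^\star(L-\eta^\star)\,\dd\mu_k=\eta^\star+\frac1\delta\int(L-\eta^\star)_+\,\dd\mu_k\ge\rho_{\delta,\mu_k}(L).
\]
This attainment, in particular handling an atom at the quantile, is the only delicate point. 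I expect it to be the main obstacle. Alternatively, one could cite the standard Rockafellar--Uryasev/CVaR duality, since the variational form \eqref{eq:tail-risk} is exactly that functional.

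\textbf{Step 2: Fubini.} For any $w\in\mathcal W_{k,\delta}$, using $\int w\,\dd\mu_k=1$, $w\ge0$, and Tonelli, I would compute
\[
\int wL\,\dd\mu_k=1-\int w(\theta)\E_{P^e_{k,\theta}}g_k(Y)\,\mu_k(\dd\theta)=1-\E_{P^e_{k,w}}g_k(Y).
\]
Here $P^e_{k,w}$ is a probability measure by \eqref{eq:app-reweighted-law}.

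\textbf{Step 3: conclude.} By Steps 1 and 2, $\rho_{\delta,\mu_k}(L)\le\alpha$ holds if and only if $\sup_w[1-\E_{P^e_{k,w}}g_k]\le\alpha$. That in turn holds if and only if $\E_{P^e_{k,w}}g_k(Y)\ge1-\alpha$ for every $w\in\mathcal W_{k,\delta}$, which is \eqref{eq:app-tail-robust-equivalence}. Note that only the ``$\ge$'' half of Step 1 is needed for the forward direction. The reverse direction needs the supremum to equal $\rho$, though not necessarily to be attained: a supremum bounded by $\alpha$ already suffices once equality of the two sides is known.
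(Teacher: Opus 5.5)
Your proposal is correct and follows the paper's route: the Fubini identity $\int w\,L\,\dd\mu_k = 1-\E_{P^e_{k,w}}g_k(Y)$ combined with the risk-envelope representation $\rho_{\delta,\mu_k}(L)=\sup_{w\in\mathcal W_{k,\delta}}\int w\,L\,\dd\mu_k$. The one difference is that the paper treats this envelope representation as if it were the definition \eqref{eq:tail-risk}, whereas \eqref{eq:tail-risk} is actually the variational form; you derive the envelope identity from it, using the quantile weight $w^\star$ to handle the atom at $\eta^\star$, and this correctly supplies a step the paper leaves implicit.
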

\begin{proof}
For any $w\in\mathcal W_{k,\delta}$,
\[
\int w(\theta)L_{k,e}(\theta;g)\,\mu_k(\dd\theta)
=1-\int g_k(y)P_{k,w}^{e}(\dd y).
\]
Taking the supremum over $w$ on the left is equivalent to taking the infimum of reweighted retention on the right. The risk-envelope definition \eqref{eq:tail-risk} gives the result.
\end{proof}

These two propositions justify the score interpretation used in \Cref{sec:learned-scores}. Under the additional compactness, domination, $L^1(Q_e)$-continuity, $0<\alpha<1$, and strong-duality conditions of \Cref{thm:app-tail-least-favourable-score}, the robust problem attains least-favourable $w_k^\star$ and nonnegative dual multipliers $\lambda_k^\star$, yielding the pointwise coupled oracle
\begin{equation}
g^\star(y)\in
\argmin_{u\in[0,1]^K,\ \sum_k u_k\ge1}
\sum_{k=1}^K\bigl[1-\lambda_k^\star r_{k,w_k^\star}(y)\bigr]u_k,
\label{eq:app-least-favourable-score}
\end{equation}
where $r_{k,w}=\dd P_{k,w}^{e}/\dd Q_e$. The full functional-analytic attainment theorem is retained with its complete conditions in Appendix~\ref{app:additional-theory}; it is supporting oracle geometry rather than a main-paper result.

\subsection{Proof of nested nuisance-average calibration}
\label{app:proof-nested-mean}

\begin{proof}[Proof of the mean clause of
	\Cref{thm:paper-nested-calibration}]
	Condition on $\mathcal F_0$. The trained scores, architecture,
	hyperparameters, nested rule family, threshold domain, and population risk
	curve are then fixed, while the final calibration sample retains its declared
	law.
	
	If every threshold is population-valid, the conclusion is immediate. If the
	calibrated passing set is empty, the procedure deploys the safe endpoint and
	is again valid. It remains to consider the event on which an invalid threshold
	is selected.
	
	First suppose that $\Lambda_j$ is a continuous interval. Define
	\[
	\lambda_j^\star
	:=
	\inf\{\lambda\in\Lambda_j:R_j(\lambda)\le\alpha\}.
	\]
	The safe endpoint makes this set nonempty. Whenever invalid thresholds exist,
	continuity and monotonicity give
	\[
	R_j(\lambda_j^\star)=\alpha.
	\]
	If $\widehat\lambda_j$ were invalid, monotonicity would imply
	$\widehat\lambda_j<\lambda_j^\star$. Let $\mathcal P_j$ denote the passing set
	inside the infimum in \eqref{eq:paper-nested-selection}. Since
	$\widehat\lambda_j=\inf\mathcal P_j<\lambda_j^\star$, there exists
	$\lambda\in\mathcal P_j$ with $\lambda<\lambda_j^\star$. The all-larger
	requirement defining $\mathcal P_j$ therefore forces
	\[
	U_j(\lambda_j^\star;\gamma_j)<\alpha
	=
	R_j(\lambda_j^\star).
	\]
	Thus invalid selection implies failure of the pointwise UCB at the single
	$\mathcal F_0$-measurable population crossing point $\lambda_j^\star$.
	
	Now suppose instead that
	\[
	\Lambda_j=\{\lambda_{j,1}<\cdots<\lambda_{j,m_j}\}
	\]
	is a finite ordered threshold grid. If any grid point is invalid, define the
	last invalid threshold
	\[
	\lambda_j^-
	:=
	\max\{\lambda\in\Lambda_j:R_j(\lambda)>\alpha\}.
	\]
	This quantity is fixed conditional on $\mathcal F_0$. Because the passing set
	is finite, a nonempty passing set has a smallest element and
	$\widehat\lambda_j\in\mathcal P_j$. If $\widehat\lambda_j$ were invalid, then
	monotonicity gives
	\[
	\widehat\lambda_j\le\lambda_j^-.
	\]
	Since membership of $\widehat\lambda_j$ in $\mathcal P_j$ requires the UCB to
	be below $\alpha$ at every grid point at least as conservative as
	$\widehat\lambda_j$, it follows in particular that
	\[
	U_j(\lambda_j^-;\gamma_j)<\alpha
	<
	R_j(\lambda_j^-).
	\]
	Hence invalid selection again implies failure of the pointwise UCB at one
	fixed, population-determined threshold.
	
	In either case,
	\[
	\{R_j(\widehat\lambda_j)>\alpha\}
	\]
	is contained in the failure event of a single pointwise UCB whose conditional
	failure probability is at most $\gamma_j$. Therefore
	\[
	\Pr\!\left\{
	R_j(\widehat\lambda_j)\le\alpha
	\,\middle|\,
	\mathcal F_0
	\right\}
	\ge 1-\gamma_j,
	\]
	which proves \eqref{eq:paper-mean-cal-cert}. For a finite or countable claim
	set, a union bound over $j$ with $\sum_j\gamma_j\le\zeta$ gives the
	simultaneous statement. No union bound over the thresholds along an individual
	nested path is required.
\end{proof}

\subsection{Positive-tail specialization of nested calibration}
\label{app:proof-nested-tail}
\begin{proof}[Proof of the positive-tail specialization]
Define
\[H_{j,\lambda}(\theta):=\delta\eta_j(\lambda)+\bigl(L_{j,\lambda}(\theta)-\eta_j(\lambda)\bigr)_+.\]
Condition again on $\mathcal F_0$. Because the entire map $\lambda\mapsto\eta_j(\lambda)$ is fixed and measurable before the calibration sample is used, $H_{j,\lambda}$ is a fixed bounded loss for every $\mathcal F_0$-measurable threshold. The Rockafellar--Uryasev/OCE representation gives, for every fixed $\lambda$,
\begin{align*}
\rho_{\delta,\mu_k}(L_{j,\lambda})
&=\inf_{\eta\in\R}
\left\{\eta+\frac1\delta\E(L_{j,\lambda}-\eta)_+\right\}\\
&\le
\eta_j(\lambda)
+\frac1\delta\E(L_{j,\lambda}-\eta_j(\lambda))_+\\
&=\frac{\E H_{j,\lambda}}{\delta}.
\end{align*}
Let $U_j^H(\lambda;\gamma_j)$ denote a pointwise upper confidence bound for
$\E H_{j,\lambda}$. Because $0\le L\le1$, $0\le\eta\le1$, and $0<\delta<1$, the transformed loss $H_{j,\lambda}$ lies in $[0,1]$. Hence $U_j^H/\delta$ is a pointwise upper confidence bound for the true Tail risk at every $\mathcal F_0$-measurable fixed threshold. Candidate-set enlargement decreases $L_{j,\lambda}$ pointwise and therefore decreases its Tail risk. By the assumed continuity of that risk path, the proof of the mean clause applies verbatim with $R_j(\lambda)$ replaced by $\rho_{\delta,\mu_k}(L_{j,\lambda})$ and $U_j$ by $U_j^H/\delta$. This proves \eqref{eq:paper-tail-cal-cert}; the simultaneous statement follows from the same countable union bound.
\end{proof}

\subsection{Nested Monte Carlo for nuisance-state loss}
\label{app:nested-mc}
\begin{corollary}[Jensen-conservative nested Monte Carlo]
\label{cor:paper-jensen-mc}
Fix $0<\delta<1$, $\lambda$, and the independently fixed $\eta_j(\lambda)$. Suppose that conditional on nuisance state $\theta$,
\begin{equation}
\widehat L_{j,\lambda}(\theta)
=\frac1R\sum_{r=1}^R I_{r,j,\lambda}(\theta),
\qquad
\E\!\left[\widehat L_{j,\lambda}(\theta)\mid\theta\right]
=L_{j,\lambda}(\theta),
\label{eq:paper-inner-mc}
\end{equation}
where the $I_r\in[0,1]$ are generated from the declared observation-noise law. Define
\begin{equation}
\widehat H_{j,\lambda}(\theta)
:=\delta\eta_j(\lambda)
+\bigl(\widehat L_{j,\lambda}(\theta)-\eta_j(\lambda)\bigr)_+.
\label{eq:paper-inner-h}
\end{equation}
Then
\begin{equation}
\E\!\left[\widehat H_{j,\lambda}(\theta)\mid\theta\right]
\ge H_{j,\lambda}(\theta).
\label{eq:paper-jensen-domination}
\end{equation}
Consequently a valid upper confidence bound on the outer mean of independent nuisance-state observations $\widehat H_i$ is conservative for $\E H$, and therefore for the Tail risk through \eqref{eq:paper-oce-upper}. The nuisance states, rather than the $R$ inner noise replicates, remain the outer calibration sample size.
\end{corollary}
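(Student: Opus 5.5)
The inequality \eqref{eq:paper-jensen-domination} is a one-line conditional Jensen argument, so I will prove it first and then read off the two consequences.

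\textbf{Jensen step.} Fix $\theta$, $\lambda$ and the independently fixed level $\eta_j(\lambda)$. The map $x\mapsto\delta\eta_j(\lambda)+(x-\eta_j(\lambda))_+$ is convex in $x$: it is a constant plus the positive part of an affine function. The inner estimator $\widehat L_{j,\lambda}(\theta)$ is bounded in $[0,1]$, and by \eqref{eq:paper-inner-mc} it is conditionally unbiased for $L_{j,\lambda}(\theta)$. Conditional Jensen given $\theta$ therefore yields
\[\E[\widehat H_{j,\lambda}(\theta)\mid\theta]\ge\delta\eta_j(\lambda)+\bigl(\E[\widehat L_{j,\lambda}(\theta)\mid\theta]-\eta_j(\lambda)\bigr)_+=H_{j,\lambda}(\theta),\]
which is \eqref{eq:paper-jensen-domination}. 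The only point needing care is that $\eta_j(\lambda)$ must not depend on the inner replicates or on the calibration sample. This holds because the whole map $\lambda\mapsto\eta_j(\lambda)$ is fixed from the selection sample before calibration, so conditioning on $\mathcal F_0$ and $\theta$ leaves the convex function deterministic.

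\textbf{Outer-mean consequence.} Integrating over $\theta\sim\mu_k$ gives $\E\widehat H_{j,\lambda}\ge\E H_{j,\lambda}$ by the tower property. The pairs $(\theta_i,\text{inner noise}_i)$ are iid across calibration nuisance states, so the $\widehat H_i$ are iid. They lie in $[0,1]$ because $0\le\widehat L\le1$, $0\le\eta\le1$ and $\delta<1$; this is the same range argument as for $H$ in \Cref{app:proof-nested-tail}. Any valid bounded-mean UCB $U$ for $\E\widehat H$ then satisfies $\Pr\{\E H\le U\}\ge\Pr\{\E\widehat H\le U\}\ge1-\gamma_j$. Dividing by $\delta$ and applying \eqref{eq:paper-oce-upper} gives a UCB for $\rho_{\delta,\mu_k}(L_{j,\lambda})$.

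\textbf{Nested path.} To combine this with \Cref{thm:paper-nested-calibration}, I will note that the proof in \Cref{app:proof-nested-mean} uses only pointwise UCB coverage at a single $\mathcal F_0$-measurable threshold. The conservative UCB supplies exactly that coverage for the true Tail risk, even though it is built from $\widehat H$. Monotonicity and continuity are assumed of the true Tail-risk curve, not of $\E\widehat H$, so the proof carries over unchanged.

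\textbf{Sample size.} The effective sample size is the number of outer nuisance draws, because the concentration inequality is applied to the iid $\widehat H_i$, one per nuisance state. The inner replicates only average within each $\widehat H_i$ and leave $n$ unchanged.

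I expect no real obstacle. The delicate point is the measurability and independence bookkeeping above, which ensures that Jensen applies with a fixed $\eta$ and that the $\widehat H_i$ are iid given $\mathcal F_0$.
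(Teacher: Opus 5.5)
Your proposal is correct and follows essentially the same route as the paper. Both arguments use convexity of $h_\eta(z)=\delta\eta+(z-\eta)_+$ with conditional Jensen and unbiasedness given $\theta$, integrate over $\theta\sim\mu_k$, and conclude $\E H\le\E\widehat H\le U$ on the UCB event before dividing by $\delta$ via \eqref{eq:paper-oce-upper}. Your extra checks (that $\widehat H_i\in[0,1]$, and that pointwise coverage is all the nested-path argument needs) are accurate and make explicit what the paper leaves to \Cref{app:proof-nested-tail}.
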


\begin{proof}[Proof of Corollary~\ref{cor:paper-jensen-mc}]
For fixed $\eta\in[0,1]$, the function
\[
h_\eta(z):=\delta\eta+(z-\eta)_+
\]
is convex in $z$. Conditional on $\theta$, Jensen's inequality and unbiasedness of \eqref{eq:paper-inner-mc} give
\[
\E[h_\eta(\widehat L)\mid\theta]
\ge
h_\eta(\E[\widehat L\mid\theta])
=h_\eta(L(\theta)).
\]
This is \eqref{eq:paper-jensen-domination}. Integrating over $\theta\sim\mu_k$ yields
\[
\E_{\theta,\mathrm{noise}}\widehat H_{j,\lambda}(\theta)
\ge
\E_{\theta}H_{j,\lambda}(\theta).
\]
Therefore any valid upper confidence bound $U$ for the mean of the outer bounded observations $\widehat H_i$ also satisfies, on the same event,
\[
\E H_{j,\lambda}\le \E\widehat H_{j,\lambda}\le U.
\]
Combining this with \eqref{eq:paper-oce-upper} gives
\[
\rho_{\delta,\mu_k}(L_{j,\lambda})\le U/\delta.
\]
Independence is required across the outer nuisance-state observations used by the chosen bounded-mean UCB. Multiple inner draws contribute only to one $\widehat H_i$ and therefore do not increase the outer nuisance sample size.
\end{proof}

\subsection{Selection-aware independent certification}
\begin{theorem}[Certification after an independent shortlist]
\label{thm:paper-selected-only}
Let $\mathcal F_0$ contain all randomness used to train rules, choose unprotected hyperparameters, and construct a nonempty random shortlist $\mathcal S\subseteq\cE$. Let $\mathcal F_{\rm cal}$ be generated by a final calibration sample independent of $\mathcal F_0$, and let $\mathcal F_{\rm rank}$ contain any additional randomness used to choose among certified rules. For each realized $\mathcal S$, let $\mathcal I(\mathcal S)$ be a finite or countable, measurably enumerated set of claims that must hold simultaneously. Suppose each claim $j\in\mathcal I(\mathcal S)$ has a conditional failure event $B_j$ satisfying
\begin{equation}
\Pr(B_j\mid\mathcal F_0)\le\gamma_j(\mathcal S)
\quad\text{a.s.},
\qquad
\sum_{j\in\mathcal I(\mathcal S)}\gamma_j(\mathcal S)\le\zeta.
\label{eq:paper-conditional-claims}
\end{equation}
Then, with probability at least $1-\zeta$, every claim in the realized shortlist is valid simultaneously. Hence any measurable deployed experiment $\widehat e\in\mathcal S$ whose final rule is chosen from those certified rules is population-valid, even if the final choice depends on the calibration and ranking data.
\end{theorem}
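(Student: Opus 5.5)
The proof is a union bound applied conditionally on $\mathcal F_0$, with the shortlist, claim set, and budgets all frozen by that conditioning. Let $B:=\bigcup_{j\in\mathcal I(\mathcal S)}B_j$ be the event that some claim in the realized shortlist fails. I will show $\Pr(B\mid\mathcal F_0)\le\zeta$ almost surely and then take expectations to get $\Pr(B)\le\zeta$. The deployment statement then follows on the complement of $B$.

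The steps, in order, are as follows.

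First, I check that $B$ is a genuine event. Since $\mathcal S$ is $\mathcal F_0$-measurable and $\mathcal I(\mathcal S)$ is measurably enumerated as $j_1(\mathcal S),j_2(\mathcal S),\ldots$, I write
\[
B=\bigcup_{n}\bigl\{n\le|\mathcal I(\mathcal S)|\bigr\}\cap B_{j_n(\mathcal S)}.
\]
This is a countable union of measurable sets. Here I read each $B_j$ as the jointly measurable failure event indexed by the claim, as the enumeration hypothesis permits.

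Second, I apply countable subadditivity of regular conditional probability given $\mathcal F_0$. The regular version exists because the underlying spaces are standard Borel. This gives
\[
\Pr(B\mid\mathcal F_0)\le\sum_{j\in\mathcal I(\mathcal S)}\Pr(B_j\mid\mathcal F_0)\le\sum_{j\in\mathcal I(\mathcal S)}\gamma_j(\mathcal S)\le\zeta\quad\text{a.s.},
\]
using \eqref{eq:paper-conditional-claims}. The indicators $\ind\{n\le|\mathcal I(\mathcal S)|\}$ are $\mathcal F_0$-measurable, so they pass outside the conditional probability. Taking the expectation gives $\Pr(B)\le\zeta$.

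Third, I derive the deployment claim. On $B^c$, every claim attached to every member of $\mathcal S$ holds simultaneously. Any $\widehat e\in\mathcal S$, together with any final rule chosen among the certified ones, is therefore covered, no matter how the choice depends on $\mathcal F_{\rm cal}$ or $\mathcal F_{\rm rank}$. The reason is that $B^c$ does not refer to the choice at all. The selection only picks an element of a set all of whose members are valid, so no measurability of $\widehat e$ beyond membership in $\mathcal S$ is actually used.

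Independence of the calibration sample from $\mathcal F_0$ is what makes the per-claim bounds $\Pr(B_j\mid\mathcal F_0)\le\gamma_j$ available, for instance from \Cref{thm:paper-nested-calibration} applied at each selected experiment. In the present theorem it enters only through that hypothesis.

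The main obstacle is the measurability bookkeeping in the second step. The index set $\mathcal I(\mathcal S)$ and the budgets $\gamma_j(\mathcal S)$ are themselves random, so the union bound has to be carried out conditionally, after they are frozen by $\mathcal F_0$. I would handle this with the explicit enumeration above, writing the sum as $\sum_n \ind\{n\le|\mathcal I(\mathcal S)|\}\Pr(B_{j_n(\mathcal S)}\mid\mathcal F_0)$, and invoking monotone convergence for countable index sets.
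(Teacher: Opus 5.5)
Your proposal is correct and is essentially the paper's proof. The paper uses the same three steps: a union bound conditional on $\mathcal F_0$ (which freezes $\mathcal S$, $\mathcal I(\mathcal S)$ and the budgets $\gamma_j(\mathcal S)$), the tower property to obtain $\Pr\bigl(\bigcap_{j}B_j^c\bigr)\ge1-\zeta$, and the observation that this good event does not depend on how the final rule is chosen using $\mathcal F_{\rm cal}$ or $\mathcal F_{\rm rank}$. Your added measurability bookkeeping is a harmless refinement of what the paper leaves implicit; the only thing to drop is the appeal to a regular conditional probability, which would require standard Borel structure on the whole sample space (not assumed) and is unnecessary, since almost-sure countable subadditivity of $\Pr(\cdot\mid\mathcal F_0)$ already follows from conditional monotone convergence, as your last paragraph indicates.
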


\begin{proof}[Proof of \Cref{thm:paper-selected-only}]
Condition on $\mathcal F_0$. The realized shortlist $\mathcal S$, the fixed rule families, the finite or countable measurable enumeration of $\mathcal I(\mathcal S)$, and the failure allocations are now fixed. Let
\[
\mathcal A_{\mathcal S}
:=\bigcap_{j\in\mathcal I(\mathcal S)}B_j^c.
\]
The conditional union bound gives
\[
\Pr(\mathcal A_{\mathcal S}^c\mid\mathcal F_0)
\le
\sum_{j\in\mathcal I(\mathcal S)}
\Pr(B_j\mid\mathcal F_0)
\le
\sum_{j\in\mathcal I(\mathcal S)}\gamma_j(\mathcal S)
\le\zeta.
\]
Thus $\Pr(\mathcal A_{\mathcal S}\mid\mathcal F_0)\ge1-\zeta$ almost surely. On $\mathcal A_{\mathcal S}$ every rule from which the later selector can choose is population-valid, so any measurable selection among those rules remains valid even if the choice uses $\mathcal F_{\rm cal}$ or $\mathcal F_{\rm rank}$. Finally the tower property gives
\[
\Pr(\mathcal A_{\mathcal S})
=\E\!\left[\Pr(\mathcal A_{\mathcal S}\mid\mathcal F_0)\right]
\ge1-\zeta.
\]
For selected-only calibration, $\mathcal S$ is a singleton already fixed by $\mathcal F_0$, so the claim set contains only the deployed experiment's still-open family/tail claims. This is a validity statement conditional on the independent selection stage, not a guarantee that the selected experiment is population-optimal. Tail-risk monotonicity gives any larger-$\delta$ claims that are logically implied by a certified smaller-$\delta$ statement without another union-bound term.
\end{proof}

\subsection{Proof of the rare-tail obstruction}
\label{app:proof-rare-tail}
\begin{proof}[Proof of Proposition~\ref{prop:paper-rare-tail}]
Compare the null distribution $P_0(Z=0)=1$ with
\[
P_1(Z=1)=p,\qquad P_1(Z=0)=1-p,
\qquad p=\delta(\alpha+\varepsilon).
\]
Because $p\le\delta$, the worst-$\delta$ Tail risk of $P_1$ is
\[
\rho_\delta(P_1)=\frac{p}{\delta}=\alpha+\varepsilon>\alpha.
\]
Under $P_1$, however, an all-zero calibration sample occurs with probability $(1-p)^n$. Any deterministic procedure that certifies the all-zero sample therefore falsely certifies this just-invalid alternative with probability at least $(1-p)^n$. Requiring that probability to be no larger than $\beta$ proves \eqref{eq:paper-rare-tail-bound}. Taking logarithms gives
\[
n\ge
\frac{\log(1/\beta)}{-\log(1-\delta(\alpha+\varepsilon))},
\]
and $-\log(1-x)\sim x$ as $x\downarrow0$ yields \eqref{eq:paper-rare-tail-rate} as $\varepsilon\downarrow0$ in the small-$\alpha\delta$ regime.
\end{proof}

\section{Additional Theoretical Results}
\label{app:additional-theory}

\subsection{Attained least-favourable Tail-RAED score}
The main text uses only the score interpretation of the robust Tail constraint. For completeness, the population attainment result that supports that interpretation is recorded here.

\begin{theorem}[Attained least-favourable Tail-RAED score under dominated compact predictive families]
\label{thm:app-tail-least-favourable-score}
Fix $0<\delta\le1$ and experiment $e$. Assume for each family $k$ that: (i) $\Theta_k$ is a compact metric space and $\mu_k$ is a Borel probability measure; (ii) $\cY_e$ is standard Borel; (iii) $P_{k,\theta}^e\ll Q_e$ for $\mu_k$-almost every $\theta$; (iv), writing
\[
r_{k,\theta}=\frac{\dd P_{k,\theta}^e}{\dd Q_e},
\]
the map $\theta\mapsto r_{k,\theta}$ is Borel measurable and continuous into $L^1(Q_e)$; and (v) $0<\alpha<1$. Let
\begin{equation}
\mathcal W_{k,\delta}
=\left\{w\in L^\infty(\mu_k):0\le w\le1/\delta,\ \int w\,\dd\mu_k=1\right\},
\label{eq:app-lf-weight-set}
\end{equation}
and define
\begin{equation}
r_{k,w}(y)
=\int w(\theta)r_{k,\theta}(y)\,\mu_k(\dd\theta).
\label{eq:app-lf-ratio}
\end{equation}
Then the full nonempty population Tail-RAED problem has an attained primal optimum, zero Lagrange-duality gap, attained nonnegative family multipliers $\lambda_k^\star$, and attained least-favourable weights $w_k^\star\in\mathcal W_{k,\delta}$. An oracle optimum may be chosen so that, for $Q_e$-almost every $y$,
\begin{equation}
\boxed{
g^\star(y)\in
\argmin_{u\in[0,1]^K,\ \sum_k u_k\ge1}
\sum_{k=1}^K
\left[1-\lambda_k^\star r_{k,w_k^\star}(y)\right]u_k.}
\label{eq:app-tail-pointwise-dual-rule}
\end{equation}
Thus every class with $\lambda_k^\star r_{k,w_k^\star}(y)>1$ is retained; if no coefficient is negative, total inclusion mass one is assigned among maximizers of $\lambda_k^\star r_{k,w_k^\star}(y)$; equality sets permit randomization. Whenever $\lambda_k^\star>0$, $w_k^\star$ may be chosen as a nuisance reweighting that is least favourable for the optimal family-$k$ retention constraint.
\end{theorem}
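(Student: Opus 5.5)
The plan is to treat the full nonempty Tail-RAED problem as an infinite-dimensional convex program in the marginal rule $g=(g_1,\ldots,g_K)$. By \Cref{prop:app-marginal-representation} and \Cref{lem:measurable-whole-set-lifting}, I will restrict to measurable $g:\cY_e\to\mathcal P_K$. I will view the feasible marginals as a subset of $L^\infty(Q_e)^K$ with the weak-$*$ topology. The set $\{g\in L^\infty(Q_e)^K: 0\le g_k\le1,\ \sum_k g_k\ge1\}$ is weak-$*$ compact by Banach--Alaoglu, and it is convex.

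By \Cref{prop:app-tail-robust-retention}, each family constraint is equivalent to $\inf_{w\in\mathcal W_{k,\delta}}\int g_k\,r_{k,w}\,\dd Q_e\ge1-\alpha$. Under (iii)--(iv), each $r_{k,w}\in L^1(Q_e)$, so $g\mapsto\int g_k r_{k,w}\dd Q_e$ is weak-$*$ continuous. The constraint is therefore an intersection of closed half-spaces, and the objective $\int\sum_k g_k\dd Q_e$ is weak-$*$ continuous. The retain-all rule is feasible, so primal attainment follows from compactness.

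For duality I will form the Lagrangian
\[
\mathcal L(g;\lambda,w)=\int\sum_k g_k\,\dd Q_e+\sum_k\lambda_k\Bigl(1-\alpha-\int g_k r_{k,w_k}\dd Q_e\Bigr),
\]
which is bilinear in $g$ and in $(\lambda_k w_k)$. The key step is a minimax theorem (Sion) over compact $g$ and the convex cone generated by $\lambda_k\ge0$, $w_k\in\mathcal W_{k,\delta}$.

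Slater's condition follows from $\alpha>0$: the retain-all rule gives retention $1>1-\alpha$ uniformly over $w$. This yields zero duality gap and attained multipliers $\lambda_k^\star$, with a bound $\lambda_k^\star\le K/\alpha$ from the Slater point.

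Attainment of $w_k^\star$ is where compactness of $\Theta_k$ and $L^1$-continuity enter. $\mathcal W_{k,\delta}$ is weak-$*$ compact in $L^\infty(\mu_k)$. I will need the map $w\mapsto r_{k,w}\in L^1(Q_e)$ to be continuous from weak-$*$ to the weak topology, tested against bounded $g$. This holds because $\int g\,r_{k,w}\dd Q_e=\int w(\theta)\phi_g(\theta)\mu_k(\dd\theta)$ with $\phi_g(\theta)=\int g\,r_{k,\theta}\dd Q_e$ continuous and bounded on $\Theta_k$, hence in $L^1(\mu_k)$. Then $\inf_g\mathcal L$ is an upper-semicontinuous concave function of $(\lambda,w)$ on a compact set once $\lambda$ is bounded, so a maximizer $(\lambda^\star,w^\star)$ exists.

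With a saddle point in hand, $g^\star$ minimizes $\mathcal L(\cdot;\lambda^\star,w^\star)=\int\sum_k[1-\lambda_k^\star r_{k,w_k^\star}]g_k\dd Q_e+\text{const}$ over the pointwise-constrained set. A pointwise minimization argument, with a measurable selection of the argmin of a linear program over $\mathcal P_K$, gives the boxed rule \eqref{eq:app-tail-pointwise-dual-rule} $Q_e$-a.s. Replacing $g^\star$ by such a selection on a null set is harmless, and its structure is immediate from the LP: retain all negative coefficients, and otherwise put unit mass on the smallest coefficient.

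The least-favourable interpretation comes from complementary slackness. When $\lambda_k^\star>0$, $w_k^\star$ attains $\inf_w\int g_k^\star r_{k,w}\dd Q_e$, the saddle inequality in $w$ at fixed $g^\star$.

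I expect the main obstacle to be the rigorous minimax step: the dual variables live in a cone that is not compact. I would first prove a priori boundedness of $\lambda$ via Slater so Sion applies on a compact truncation. I would then check that the product $\lambda_k w_k$ linearizes the dual so the concavity hypothesis holds.
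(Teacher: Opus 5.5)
Your proposal is correct and follows essentially the same route as the paper's proof. Both arguments use weak-$*$ compactness of the nonempty marginal set, Slater's condition via the retain-all rule (which also bounds the multipliers; the paper uses $\sum_k\lambda_k\le(K-1)/\alpha$), Sion's theorem to exchange the rule with the nuisance reweightings, pointwise minimization over the nonempty polytope, and complementary slackness for the least-favourable weights. The only difference is packaging. You fold $(\lambda_k,w_k)$ into the linear cone variable $v_k=\lambda_k w_k$ and apply Sion once. The paper instead first obtains $\lambda^\star$ from finite-constraint Slater duality on the concave robust retentions $C_k$, then applies Sion at fixed $\lambda^\star$ over the compact sets $\mathcal W_{k,\delta}$, which avoids the non-compact cone you flag. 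In any case, Sion needs compactness only on the minimizing $g$ side, so your truncation matters only for attaining the dual maximizer, not for the exchange itself.
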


\begin{proof}
Equip
\[
\mathcal G
=\left\{g\in L^\infty(Q_e)^K:
0\le g_k\le1,\ \sum_k g_k\ge1\right\}
\]
with the product weak-* topology. It is convex, weak-* closed and bounded, hence compact by Banach--Alaoglu. The objective
\[
J_e(g)=\sum_k\int g_k\,\dd Q_e
\]
is weak-* continuous. Each $\mathcal W_{k,\delta}$ is convex and weak-* compact in $L^\infty(\mu_k)$ because positivity, the upper bound, and the normalization constraint are weak-* closed.

For fixed $g_k$, write $h_{k,g}(\theta)=\E_{P_{k,\theta}^e}g_k$. The stated $L^1(Q_e)$ continuity of $r_{k,\theta}$ implies continuity of $\theta\mapsto h_{k,g}(\theta)$, while boundedness gives measurability. Consequently
\[
(w,g_k)\longmapsto
\int w(\theta)h_{k,g}(\theta)\,\mu_k(\dd\theta)
=\E_{Q_e}[r_{k,w}(Y)g_k(Y)]
\]
is separately continuous in the relevant weak-* topologies. The robust retention functional
\[
C_k(g_k)
:=\inf_{w\in\mathcal W_{k,\delta}}
\E_{Q_e}[r_{k,w}g_k]
\]
is therefore concave and upper semicontinuous. The sets $\{g:C_k(g_k)\ge1-\alpha\}$ are closed, so their intersection with compact $\mathcal G$ is a nonempty compact feasible set; the full-set rule is feasible. Hence the primal optimum is attained.

Each constraint function $(1-\alpha)-C_k(g_k)$ is convex and $1$-Lipschitz in the sup norm. The full-set rule is strictly feasible because $C_k(1)=1>1-\alpha$. Standard Slater duality for the finite family of continuous convex inequalities therefore gives zero Lagrange-duality gap and nonnegative optimal multipliers. Multiplier attainment may be restricted to a compact set: the dual value at $\lambda=0$ is at least one since every nonempty rule has $J_e(g)\ge1$, whereas evaluating the Lagrangian at the full-set rule gives $K-\alpha\sum_k\lambda_k$. Thus no maximizing multiplier has $\sum_k\lambda_k>(K-1)/\alpha$, and upper semicontinuity gives attainment on the resulting compact multiplier set.

For fixed $\lambda$, the robust constraints yield the saddle expression
\[
\sum_k\lambda_k(1-\alpha)
+\inf_{g\in\mathcal G}
\sup_{w_1\in\mathcal W_{1,\delta},\ldots,w_K\in\mathcal W_{K,\delta}}
\left\{
J_e(g)-\sum_k\lambda_k\E_{Q_e}[r_{k,w_k}g_k]
\right\}.
\]
The rule and reweighting sets are convex compact and the bracketed functional is bilinear and separately continuous, so Sion's minimax theorem permits interchange of infimum and supremum \cite{sion1958}. The resulting upper-semicontinuous function on the compact product reweighting set attains its supremum at $(w_1^\star,\ldots,w_K^\star)$. With $(\lambda^\star,w^\star)$ fixed, the Lagrangian is the integral of the pointwise linear expression in \eqref{eq:app-tail-pointwise-dual-rule}. Pointwise minimization over the nonempty marginal polytope gives exactly the stated rule, with measurable tie-breaking or randomization on equality sets. Complementary slackness identifies active least-favourable reweightings \cite{rudloff2010}.
\end{proof}

This characterization explains the score target used in \Cref{sec:learned-scores}: strict-tail efficiency is governed by a least-favourable reweighted predictive mixture, while independent calibration can certify any prospectively fixed score family against the desired Tail risk.

\section{Statistical and Computational Protocol}
\label{app:implementation}

\subsection{Learned score systems}
Watt and WCA use one-versus-rest logistic scores on a 256-component RBF Nystr\"om map. The common hyperparameter grid is $C\in\{0.1,1,10\}$ and $\gamma_{\mathrm{RBF}}/\gamma_{\mathrm{RBF},0}\in\{0.5,1,2\}$. Five-fold stratified group cross-validation groups on the outer nuisance-state identity. Training uses exact 50/50 positive/negative balancing, with negatives sampled uniformly across rival families and nuisance states. The RBF bandwidth reference $\gamma_{\mathrm{RBF},0}$ is computed from a deterministic pairwise-distance sample capped at 768 rows. Hyperparameter ties are resolved by mean log loss, then by $C$, then by bandwidth multiplier.

GWAE-Fluvial uses the same 256-component RBF-Nystr\"om architecture for its $K=2$ target. Methane oxidation uses 96 Nystr\"om features with fixed $C=1$, a fixed median-bandwidth multiplier, and a grouped three-fold cross validation. In every benchmark, all observation-noise replicas belonging to one outer nuisance state remain in the same statistical fold.

\subsection{Comparator estimators and tie handling}
For Watt and WCA, the final implementation evaluates every comparator on the independent experiment-selection pool. In whitened observation coordinates, let $z_{ki}$ be selection-pool nuisance state $i$ in family $k$, let $m_{ki,e}$ be its stored response under $e$, and define
\[
q_{ki,e}(y)=\phi_d(y;m_{ki,e},I_d),\qquad
p_{k,e}(y)=\frac{1}{n_k}\sum_{i=1}^{n_k}q_{ki,e}(y),\qquad
p_e(y)=\frac{1}{K}\sum_{k=1}^K p_{k,e}(y).
\]
The supports are balanced and the comparator uses the same uniform structural weighting as the paper experiments. Define the finite source-component weight
\[
\omega_{ki}:=\frac{1}{K n_k}.
\]
For each source state the implementation draws $Y_{kir}\sim q_{ki,e}$ using a criterion-specific random seed fixed before comparison. With $R=32$, model-index and full-latent EIG are respectively
\begin{align}
\widehat I_M(e)
&=\sum_{k,i}\omega_{ki}\frac1R\sum_{r=1}^R
  \left\{\log p_{k,e}(Y_{kir})-\log p_e(Y_{kir})\right\},\\
\widehat I_{M,\Theta}(e)
&=\sum_{k,i}\omega_{ki}\frac1R\sum_{r=1}^R
  \left\{\log q_{ki,e}(Y_{kir})-\log p_e(Y_{kir})\right\}.
\end{align}
Thus the full-latent calculation treats each finite selection-pool nuisance state as a latent mixture component. The forced-classification criterion minimizes
\[
\widehat L_{\rm Bayes}(e)=
\sum_{k,i}\omega_{ki}\frac1R\sum_{r=1}^R
\ind\!\left\{\argmax_{\ell}\,p_{\ell,e}(Y_{kir})\ne k\right\}.
\]
For ordered families $k\ne\ell$, the predictive-discrimination estimate is
\[
\widehat D_{k\to\ell}(e)=
\frac1{n_k}\sum_{i=1}^{n_k}\frac1R\sum_{r=1}^R
\left\{\log p_{k,e}(Y_{kir})-\log p_{\ell,e}(Y_{kir})\right\},
\]
and minimum ordered-pair predictive discrimination maximizes $\min_{k\ne\ell}\widehat D_{k\to\ell}(e)$. Expected elimination uses $R_{\rm elim}=16$ fresh draws per selection state. With $c_d=\chi^2_{d,0.95}$, family $\ell$ is rejected for $y$ when
\[
A_{\ell,e}(y)=\ind\!\left\{\min_j\|y-m_{\ell j,e}\|_2^2>c_d\right\},
\]
and the expected-elimination score is
\[
\widehat N_{\rm elim}(e)=
\sum_{k,i}\omega_{ki}\frac1{R_{\rm elim}}\sum_{r=1}^{R_{\rm elim}}
\left\{\sum_{\ell=1}^K A_{\ell,e}(Y_{kir})-A_{k,e}(Y_{kir})\right\}.
\]
The subtraction prevents exclusion of the true family from being counted as elimination of a rival.

The selection support has 40 states per family for Watt and 100 per family for WCA. The EIG standard error is obtained from the per-source sample variances and finite-mixture weights. If $e_{\widehat I}^\star$ is the literal numerical optimum of an information estimate $\widehat I$, the prospectively defined Monte Carlo indistinguishability set is
\[
\mathcal O_{\rm EIG}^{\rm MC}=\left\{e:\ |\widehat I(e)-\widehat I(e_{\widehat I}^\star)|
\le \max\!\left(10^{-12},2\sqrt{\widehat{\rm SE}(e)^2+\widehat{\rm SE}(e_{\widehat I}^\star)^2}\right)\right\}.
\]
For Bayes error, expected elimination, and minimum ordered-pair discrimination the corresponding absolute-score tolerance is $10^{-12}$. A scalar downstream $J(e_{\rm EIG})$ is evaluated at the numerical optimum, with experiment identifier used only to break an exact score tie.

Methane uses the same comparator definitions on 256 selection nuisance states per family, with eight predictive draws per state. The second GWAE-Fluvial campaign compares $\delta=0.05$ Tail-RAED with an independently ranked $\delta=1$ nuisance-average RAED design on the same new nuisance population; the external comparator suite is not used in this comparison.

\subsection{Threshold calibration and Tail evaluation}
To keep the implementation notation compact, the claim index $j$ is suppressed in $\eta(\lambda)$, and both $j$ and the current threshold $\lambda$ are suppressed in $\widehat L_i$ and $\widehat H_i$. For each fixed score system, family thresholds are nested from the all-family safe endpoint toward increasingly selective rules. For $0<\delta<1$, the complete auxiliary OCE map $\eta(\lambda)$ is constructed only from the independent selection role and fixed before final calibration. At each threshold candidate with selection-role nuisance losses $\ell_1,\ldots,\ell_n$, the implementation minimizes
\[
\eta+\frac{1}{\delta n}\sum_{i=1}^n(\ell_i-\eta)_+
\]
over $\{0,1,\ell_1,\ldots,\ell_n\}$; ties are resolved by the smallest $\eta$, and the exact safe endpoint is assigned $\eta=0$.

For methane and the GWAE positive-tail calibration, each independent nuisance state contributes the transformed bounded loss
\[
\widehat H_i=\delta\eta+(\widehat L_i-\eta)_+.
\]
With $\bar H=n^{-1}\sum_i\widehat H_i$, the bounded-mean Bernoulli--KL upper confidence bound is
\begin{equation}
U_{\rm KL}(\bar H,n,\gamma)
=\sup\left\{q\in[\bar H,1]:
\operatorname{kl}(\bar H\|q)\le\frac{\log(1/\gamma)}{n}\right\},
\label{eq:app-kl-ucb}
\end{equation}
where $\operatorname{kl}$ is binary relative entropy. The Jensen result in Corollary~\ref{cor:paper-jensen-mc} makes calibration based on inner Monte Carlo conservative for the underlying nuisance-specific Tail loss. Methane uses $\gamma=0.05/3$; the GWAE Tail protocol uses $\gamma=0.05/2$.

Watt and WCA use the same nested threshold construction for their held-out empirical Tail analyses, but their late-role nuisance counts are not interpreted as supporting a small-tail population certificate.

\subsection{Selection and held-out firewall}
Score fitting and experiment ranking occur before the final calibration pool is accessed. Once an experiment is selected, its score system and nested threshold path are fixed. Calibration then chooses the deployed threshold on an independent nuisance pool, and the untouched evaluation pool is used only after these decisions are complete. This is the selected-only architecture of \Cref{thm:paper-selected-only}.

The principal role sizes are:
\begin{center}
\scriptsize
\setlength{\tabcolsep}{3pt}
\begin{tabular}{lcccc}
\toprule
& Score training & Ranking/selection & Final calibration & Untouched evaluation \\
\midrule
Watt states/family (draws/state) & 80 (12) & 40 (256) & 40 (256) & 40 (512) \\
WCA states/family (draws/state) & 600 (12) & 100 (256) & 100 (256) & 200 (512) \\
GWAE Tail--average campaign states/family (draws/state) & 1000 (12) & 1000 (256) & 5000 (256) & 5000 (512) \\
Methane states/family (draws/state) & 128 (3) & 256 (64) & 10000 (64) & 5000 (64) \\
\bottomrule
\end{tabular}
\end{center}
The GWAE row refers to the independent second campaign used for the Tail-versus-average comparison. The earlier nuisance-average study in \Cref{sec:gwae-hard-region-results} uses separate populations, including 1000 calibration and 2000 evaluation nuisance states per family. The selection-draw entries for Watt, WCA, and GWAE refer to the learned RAED conditional-loss calculation. Comparator-specific Monte Carlo counts are stated above. Pools are disjoint across the four columns; common random numbers are confined to a role.

\section{Reservoir Benchmark Construction and Reproducibility}
\label{app:benchmarks}

\subsection{Watt}
\label{app:watt-construction}
The Watt benchmark is derived from the hierarchical reservoir-characterization case study of Arnold et al.~\cite{arnold2013}. The source hierarchy combines three fault models (FM1--FM3), three cutoff alternatives, three grid representations, and three top-structure alternatives. The RAED benchmark fixes CO3, G3, and TS2 and retains FM1--FM3 together with the two supplied property representations, OBJECT and PIXEL, and the two relative-permeability groups, RP$_0$ and RP$_1$. The three fault models contain the same named major faults but differ in their traces, extents, face counts, and resulting compartment connections.

\paragraph{Model ontologies and nuisance.}
The resulting ensemble supports four definitions of structural identity: FM ($K=3$), FM$\times$PROP ($K=6$), FM$\times$RP ($K=6$), and FM$\times$PROP$\times$RP ($K=12$). PROP distinguishes the OBJECT and PIXEL porosity/permeability representations, while RP distinguishes the two relative-permeability groups. Variables not included in a given structural identity remain part of the within-family nuisance state. In all four ontologies, seven continuous multipliers vary the transmissibility of the existing named faults,
\[
T_2,T_3,T_5,T_6,T_7,T_8,T_9\sim U[0,1].
\]
These multipliers change fault transmissibility without modifying the underlying fault geometry.

Fifty shared seven-dimensional Latin-hypercube coordinates are crossed with the three fault models, two property representations, and two relative-permeability groups, yielding
\[
50\times3\times2\times2=600
\]
reservoir realizations. The same physical ensemble is therefore reused across the four model ontologies; only the definition of which attributes belong to $M$ and which remain in $\theta$ changes.

\paragraph{Grid, properties, and initialization.}
For G3, the source deck and property arrays define a $112\times30\times40$ model, whereas the geometry files include an additional 41st layer. We therefore retain the common 40-layer domain, giving 134,400 cells. The omitted layer contains only ten active cells, represents approximately $2.7\times10^{-5}$ of the active geometric volume, and contains no wells or retained fault content. The OBJECT permeability field, which is absent from the standard scenario directory, was recovered from the official Watt builder archive.

SATNUM is used as the saturation-function region index, with zero-valued entries assigned to the first valid region. Each relative-permeability group retains its three source SWOF tables. Reservoir realizations are initialized directly from EQUIL using a datum depth of 5200 ft, a datum pressure of 2800 psia, an oil--water contact at 5326 ft, and the source constant dissolved-gas relation. Fault segments extending beyond the retained 40-layer grid are truncated at the model boundary.

\paragraph{Wells and physical controls.}
The common well system contains INJ1--INJ6, B, WELL1, WELL4, WELL6, and WELL9. Ten active well tests are defined in \Cref{tab:watt-control-library}. All tests begin from the same equilibrium state and perturb one designated actuator while the remaining wells continue under their common background controls.

\begin{table}[H]
\centering
\caption{Watt active well-test library.}
\label{tab:watt-control-library}
\scriptsize
\setlength{\tabcolsep}{4pt}
\renewcommand{\arraystretch}{1.12}

\begin{tabularx}{\textwidth}{l l l X l}
\hline
Test & Actuator & Baseline control & Perturbation & Remote pressure wells \\
\hline
INJ1 multirate
& INJ1
& RATE 12500 m$^3$/d
& alternating $\pm15\%$ in four 7-d stages
& WELL1, WELL4, WELL9 \\

INJ4 multirate
& INJ4
& RATE 15000 m$^3$/d
& alternating $\pm15\%$ in four 7-d stages
& WELL1, WELL4, WELL6 \\

INJ4 pulse/falloff
& INJ4
& RATE 15000 m$^3$/d
& $+20\%$ for 10 d; shut 5 d; baseline thereafter
& WELL1, WELL4, WELL6 \\

INJ5 pulse/falloff
& INJ5
& RATE 15000 m$^3$/d
& $+20\%$ for 10 d; shut 5 d; baseline thereafter
& WELL1, WELL4, WELL9 \\

INJ6 periodic
& INJ6
& RATE 15000 m$^3$/d
& alternating $\pm15\%$ in four 7-d stages
& WELL6, WELL4, WELL9 \\

B periodic
& B
& RATE 10000 m$^3$/d
& alternating $\pm15\%$ in four 7-d stages
& WELL4, WELL6, WELL9 \\

WELL1 multirate
& WELL1
& LRAT 8000 m$^3$/d
& alternating $\pm15\%$ in four 7-d stages
& WELL4, WELL6, INJ1 \\

WELL4 multirate
& WELL4
& LRAT 8000 m$^3$/d
& alternating $\pm15\%$ in four 7-d stages
& WELL1, WELL6, B \\

WELL6 multirate
& WELL6
& LRAT 8000 m$^3$/d
& alternating $\pm15\%$ in four 7-d stages
& WELL4, WELL9, INJ6 \\

WELL9 drawdown/build-up
& WELL9
& LRAT 6500 m$^3$/d
& $+15\%$ for 7 d; shut 7 d; baseline thereafter
& WELL4, WELL1, INJ5 \\
\hline
\end{tabularx}
\end{table}

\paragraph{Response and observation noise.}
Each well test produces actuator BHP, realized actuator rate, and BHP at three remote wells at days
\[
1,2,3,5,7,10,14,21,30,45,60,75,90,
\]
giving a 65-component response vector. The constrained-observation studies are obtained from these complete responses by truncating the time horizon and retaining only the selected remote pressure traces.

Observation noise is modeled as independent Gaussian perturbations. For pressure, the standard deviation is the larger of $0.5\%$ of the median absolute response and 5 psia. For rate, it is the larger of $1\%$ of the median absolute response and 50 STB/day. The full ensemble comprises 600 reservoir realizations under ten well tests, giving 6000 OPM Flow responses to day 90.

\subsection{WCA}
\label{app:wca-construction}

The WCA benchmark is based on the multiple-geological-interpretation study of Park et al.~\cite{park2013}. The source study considers three parent training images, TI1--TI3, representing alternative fluvial depositional architectures. We take the parent training image as the structural identity,
$M\in\{\mathrm{TI1},\mathrm{TI2},\mathrm{TI3}\}$.
Each parent defines a family of conditional geological realizations, which are treated as within-family nuisance rather than as separate structural hypotheses.

\paragraph{Conditional geometry and petrophysics.}
Each training image contributes ten conditional facies realizations that honor the source hard data. These categorical models are mapped to a $53\times40\times116$ simulation grid, preserving the full native vertical resolution and the main facies geometry and directional connectivity of the source realizations. The mapping uses overlap-volume categorical remapping. The selected grid was chosen from a set of reduced candidates based on preservation of the facies masks and anisotropy.

For each facies realization, porosity and permeability are varied through spatially correlated random fields mapped to source-derived property distributions. The permeability field is coupled to the porosity field through
\[
Z_K=0.85Z_\phi+\sqrt{1-0.85^2}\,\epsilon,
\]
after which both fields are mapped through their corresponding empirical distributions. Horizontal permeability is isotropic, $K_x=K_y$, with vertical permeability set to $K_z=0.1K_x$. Each of the ten conditional geometries is combined with 100 matched petrophysical realizations, giving 1000 nuisance states per parent training image.

\paragraph{Physical model and wells.}
The simulation domain spans 1609.344 m by 804.672 m and 243.84 m vertically, with the top of the model at 1000 m depth. Facies code 0 is treated as inactive, while codes 1--3 form the active reservoir. All realizations share the same oil--water fluid model and an oil--water contact at 1058 m. Nine wells are placed on a common $3\times3$ lattice; their grid coordinates $(I,J)$ are

\begin{center}
\small
\begin{tabular}{lccc}
\hline
 & West & Centre & East \\
\hline
North  & W01 $(11,8)$  & W02 $(27,8)$  & W03 $(43,8)$ \\
Middle & W04 $(11,20)$ & W05 $(27,20)$ & W06 $(43,20)$ \\
South  & W07 $(11,32)$ & W08 $(27,32)$ & W09 $(43,32)$ \\
\hline
\end{tabular}
\end{center}

Wells are completed only in active cells above the oil--water contact, with every well retaining at least one active completion in every reservoir realization.

\paragraph{Physical controls.}
The ten well tests are listed in \Cref{tab:wca-control-library}. The library includes reciprocal injector--producer pairs along the principal directions, diagonal interference tests, multirate excitation, alternating injection, a balanced multiwell pattern, and a central pulse/recovery test. Unless stated otherwise, the pair tests use an injector rate of 1000 m$^3$/day and a producer controlled at 90 bar BHP.

\begin{table}[H]
\centering
\caption{WCA active well-test library.}
\label{tab:wca-control-library}
\scriptsize
\setlength{\tabcolsep}{4pt}
\renewcommand{\arraystretch}{1.12}

\begin{tabularx}{\textwidth}{l l X}
\hline
ID & Test & Well-test schedule \\
\hline

E01 & West--east
& W04 injects at 1000 m$^3$/d; W06 produces at 90 bar BHP \\

E02 & East--west
& W06 injects at 1000 m$^3$/d; W04 produces at 90 bar BHP \\

E03 & North--south
& W02 injects at 1000 m$^3$/d; W08 produces at 90 bar BHP \\

E04 & South--north
& W08 injects at 1000 m$^3$/d; W02 produces at 90 bar BHP \\

E05 & NW--SE diagonal
& W01 injects at 1000 m$^3$/d; W09 produces at 90 bar BHP \\

E06 & SW--NE diagonal
& W07 injects at 1000 m$^3$/d; W03 produces at 90 bar BHP \\

E07 & West--east multirate
& W04 injects at 500/1000/500 m$^3$/d over three 20-d stages; W06 remains at 90 bar BHP \\

E08 & Alternating interference
& W04 and W06 alternate as 1000 m$^3$/d injectors every 10 d; W05 remains at 90 bar BHP \\

E09 & Multiwell interference
& W01/W07 inject 500 m$^3$/d each; W03/W09 produce 500 m$^3$/d each \\

E10 & Central pulse/recovery
& W05 injects 1000 m$^3$/d for 20 d against four 250 m$^3$/d corner producers, followed by near-shut-in monitoring \\
\hline
\end{tabularx}
\end{table}

\paragraph{Observation and simulation ensemble.}
Each well test is represented by actuator BHP, realized actuator rate, and BHP at three remote wells, sampled at days
\[
1,2,3,5,7,10,14,21,30,40,50,60,
\]
giving a 60-component response vector. The constrained-sensing analysis truncates these responses at 7, 14, 30, or 60 days and retains pressure from $0$--$3$ remote wells. Observation noise is modeled as independent Gaussian perturbations with standard deviations of 0.25 bar for pressure and 20 m$^3$/day for realized rate.

The ensemble contains 3000 reservoir realizations, each simulated under the ten well tests, giving 30,000 OPM Flow responses to day 60. Different observation horizons and monitor selections are obtained directly from these complete responses without rerunning the flow simulations.

\subsection{GWAE-Fluvial}
\label{app:gwae-construction}

The GWAE-Fluvial benchmark builds on the single- and double-channel scenarios of Shishaev, Demyanov, and Arnold~\cite{shishaev2026} and the associated geological prior information reported in that work. We use the supplied reservoir grid and forward-model assets together with the reported channel dimensions and structural ranges to construct a prospective parametric fluvial ensemble. Geological realizations are generated directly from interpretable channel-geometry and petrophysical parameters.

\paragraph{Structural prior and geometry.}
The $K=2$ target distinguishes a reservoir containing one connected spanning channel from one containing two separated spanning channels. In local centerline coordinates, each channel follows
\[
v(u)=v_0+\frac{A}{2}\sin\!\left(\frac{2\pi u}{\lambda}+\phi\right),
\]
where $A$ denotes the total peak-to-trough excursion. The geometric parameter ranges are

\begin{center}
\small
\begin{tabular}{lcc}
\hline
Parameter & M1 & M2 \\
\hline
Width & 300--500 m & 300--500 m per channel \\
Thickness & 10--20 m & 10--20 m per channel \\
Wavelength & 1000--2000 m & 500--1000 m \\
Amplitude & 0--300 m & 500--900 m \\
Orientation & $90^\circ$ & $120^\circ$ \\
\hline
\end{tabular}
\end{center}

For M2, the two companion channels share wavelength, amplitude, orientation, and phase, while their width, thickness, vertical placement, and lateral position may differ. Their relative placement is constrained so that the two channel bodies remain spatially distinct.

Each channel body is required to span at least 80\% of the longitudinal domain, have an elongation of at least 1.5, form a single connected component, contain no closed hole in plan view, and occupy at least 1\% of the reservoir bulk volume. For M2, the two channels must remain separate, with no plan-view overlap and at least 300 m separation between occupied cell centers. The resulting ensemble is therefore the parametric proposal restricted to geometrically feasible one- and two-channel realizations.

\paragraph{Petrophysics and forward model.}
The supplied reservoir grid contains $16\times12\times10=1920$ cells over approximately 2400 m by 1800 m, with depths of about 2416--2478 m. Within active channel cells, porosity and permeability are varied together through a rock-quality variable $q\sim U[0,1]$, mapped to source-derived quantiles of the supplied property fields. The same petrophysical law is used for M1 and M2, while cells outside the channel bodies are inactive.

The geological realizations are simulated with OPM Flow 2026.04 using the supplied grid, fluid model, and well configuration. Horizontal permeability is isotropic, $K_y=K_x$, with $K_z=0.15K_x$. All realizations start from the same equilibrium state. The well layout consists of west and east injector banks separated by three central producers, retaining the supplied well locations and completions.

\paragraph{Physical experiment library.}
All nine well tests last ten days and are observed at the same five report times. The experiment library is listed in \Cref{tab:gwae-control-library}.

\begin{table}[H]
\centering
\caption{GWAE-Fluvial active well-test library. BHP values are absolute pressure; RATE values are surface-volume rates.}
\label{tab:gwae-control-library}
\scriptsize
\setlength{\tabcolsep}{4pt}
\renewcommand{\arraystretch}{1.12}

\begin{tabularx}{\textwidth}{l l X}
\hline
ID & Test & Well-test schedule \\
\hline

E01 & Symmetric banks
& I1--I6 at 260 bar; P1--P3 at 220 bar for 10 d \\

E02 & West bank
& I1--I3 at 260 bar; P1--P3 at 220 bar for 10 d \\

E03 & East bank
& I4--I6 at 260 bar; P1--P3 at 220 bar for 10 d \\

E04 & Cross I1--I6
& I1 and I6 at 260 bar; P1 and P3 at 220 bar \\

E05 & Cross I3--I4
& I3 and I4 at 260 bar; P1 and P3 at 220 bar \\

E06 & Midline pair
& I2 and I5 at 260 bar; P2 at 220 bar \\

E07 & Staged-rate pair
& I2/I5 at RATE 6 for days 0--5 and RATE 12 for days 5--10, with 270-bar limit; P1--P3 at 220 bar \\

E08 & Pulse/falloff
& I2/I5 at 270 bar for 3 d, then shut; P1--P3 remain at 220 bar \\

E09 & Alternating banks
& I1--I3 at 260 bar for days 0--5, followed by I4--I6 at 260 bar for days 5--10; P1--P3 at 220 bar \\
\hline
\end{tabularx}
\end{table}

\paragraph{Response, noise, and statistical roles.}
Each well test produces ten signals at days 1, 3, 5, 7, and 10, giving a 50-component response vector. The recorded quantities are total injection rate for the west and east banks, BHP at I2 and I5, and oil- and water-production rates at P1--P3. Independent Gaussian observation noise with standard deviation 0.5 in the corresponding pressure or rate units is added to each component, and responses are whitened by these fixed standard deviations.

The ensemble contains 12,000 nuisance states per structural family, each simulated under all nine well tests, giving 216,000 OPM Flow responses. Within each family, 1000 states are used for score training, 1000 for independent experiment ranking, 5000 for selected-only calibration, and 5000 for untouched evaluation. The same newly generated population is used to compare Tail-RAED at $\delta=0.05$ with nuisance-average RAED at $\delta=1$. The two criteria are ranked independently, calibrated on their respective selected experiments, and evaluated on the common untouched population. Results are reported in \Cref{sec:gwae-tail-results,app:gwae-results}.

\section{Methane-Oxidation Benchmark Construction and Certification Protocol}
\label{app:methane-construction}

The methane benchmark follows the complete-oxidation model-discrimination study of Pankajakshan et al.~\cite{pankajakshan2023}. The three competing mechanisms are power law (PL), Langmuir--Hinshelwood (LH), and Mars--van Krevelen (MVK). Within each mechanism, the kinetic parameters are sampled from the multivariate Gaussian approximation defined by the fitted parameter vector and covariance matrix reported in the source study, without truncation.

\paragraph{Mechanistic models.}
The three candidate rate laws are
\[
r_{\rm PL}=k_1P\,y_{\rm CH_4},
\]
\[
r_{\rm LH}=
\frac{k_rK_{\rm CH_4}(P y_{\rm CH_4})\sqrt{K_{\rm O_2}P y_{\rm O_2}}}
{\left(1+K_{\rm CH_4}P y_{\rm CH_4}+\sqrt{K_{\rm O_2}P y_{\rm O_2}}\right)^2},
\]
and
\[
r_{\rm MVK}=
\frac{k_1k_2P^2y_{\rm CH_4}y_{\rm O_2}}
{k_1Py_{\rm O_2}+2k_2Py_{\rm CH_4}+(k_1k_2/k_3)P^2y_{\rm CH_4}y_{\rm O_2}}.
\]
The forward model is a steady isothermal plug-flow reactor for
$\mathrm{CH_4}+2\mathrm{O_2}\rightarrow\mathrm{CO_2}+2\mathrm{H_2O}$,
using the pressure-drop, Arrhenius, and adsorption relations of the source model.

\paragraph{Design and observation model.}
The candidate design space is a full $3^4$ factorial over four reactor operating variables:

\begin{center}
\small
\begin{tabular}{ll}
\hline
Variable & Levels \\
\hline
Temperature & 250, 300, 350$^\circ$C \\
Normal feed flow & 20, 25, 30 Nml/min \\
$\mathrm{O_2}/\mathrm{CH_4}$ ratio & 2, 3, 4 \\
Inlet methane fraction & 0.005, 0.015, 0.025 \\
\hline
\end{tabular}
\end{center}

This gives 81 candidate reactor conditions. The observation is the outlet mole-fraction vector
$(y_{\rm CH_4},y_{\rm O_2},y_{\rm CO_2})$
with independent Gaussian noise,
\[
\Sigma_y=
\operatorname{diag}
(0.00043^2,0.00202^2,0.00051^2).
\]

\paragraph{Statistical roles and certification.}
For each structural family, 128 nuisance states are used for score training, 256 for independent experiment selection, 10,000 for selected-only calibration, and 5000 for untouched evaluation. The corresponding numbers of observation-noise draws per nuisance state are 3, 64, 64, and 64. Nuisance states and noise draws are independent across the four statistical roles.

For $0<\delta<1$, the selection sample is also used to fix the auxiliary map $\eta(\lambda)$. For each threshold, $\eta$ is chosen from the observed conditional-loss values together with 0 and 1 to minimize
\[
\eta+\frac{1}{\delta}\operatorname{mean}(L-\eta)_+.
\]
Final calibration then uses the transformed loss
\[
H=\delta\eta+(\widehat L-\eta)_+
\]
together with the Bernoulli--KL upper bound in \eqref{eq:app-kl-ucb}. The familywise failure probability is set to
\[
\gamma_k=\frac{0.05}{3},
\]
giving 95\% joint confidence across PL, LH, and MVK at each fixed positive $\delta$.

The declared operating points are
\[
\delta\in\{0,0.01,0.02,0.05,0.10,0.20,1\}.
\]
The $\delta=0$ analysis is empirical. The guarantees at positive $\delta$ apply separately at each fixed operating point rather than simultaneously across the full $\delta$ grid.

\section{Additional Empirical Results}
\label{app:results}

\subsection{Watt ontology summary}

The Watt ontology analysis was evaluated at
$\alpha=0.05$ over the full declared grid
\[
\delta\in\{0,0.01,0.02,0.05,0.10,0.20,1\}.
\]
The same INJ5 pulse/falloff experiment is selected for all four structural
ontologies at every value of $\delta$. The tables below report the held-out
mean candidate-set size $J$, normalized resolution
$R=(K-J)/(K-1)$, and worst-family held-out exclusion risk
$\widehat\rho_{\delta,\max}$.

For $\delta=0$, $\widehat\rho_{\delta,\max}$ is the largest conditional
false-exclusion rate over the held-out nuisance states. For
$0<\delta<1$, it is the largest familywise empirical upper-$\delta$ Tail
false-exclusion risk, and for $\delta=1$ it is the largest familywise
nuisance-average false-exclusion rate. These are held-out empirical summaries
of the nontrivial learned rules.

\begin{table}[H]
	\centering
	\caption{Held-out Watt ontology results at $(\alpha,\delta)=(0.05,0)$.}
	\label{tab:watt-ontology-delta0}
	\small
	\setlength{\tabcolsep}{7pt}
	\renewcommand{\arraystretch}{1.08}
	\begin{tabular}{lrrrr}
		\hline
		Structural target & $K$ & $J$ & $R$ & $\widehat\rho_{0,\max}$ \\
		\hline
		FM & 3 & 1.80783 & 0.59609 & 0.28125 \\
		FM$\times$PROP & 6 & 1.75653 & 0.84869 & 0.16797 \\
		FM$\times$RP & 6 & 1.84614 & 0.83077 & 0.09766 \\
		FM$\times$PROP$\times$RP & 12 & 1.82743 & 0.92478 & 0.09375 \\
		\hline
	\end{tabular}
\end{table}

\begin{table}[H]
	\centering
	\caption{Held-out Watt ontology results at $(\alpha,\delta)=(0.05,0.01)$.}
	\label{tab:watt-ontology-delta001}
	\small
	\setlength{\tabcolsep}{7pt}
	\renewcommand{\arraystretch}{1.08}
	\begin{tabular}{lrrrr}
		\hline
		Structural target & $K$ & $J$ & $R$ & $\widehat\rho_{0.01,\max}$ \\
		\hline
		FM & 3 & 1.80783 & 0.59609 & 0.28125 \\
		FM$\times$PROP & 6 & 1.75653 & 0.84869 & 0.16797 \\
		FM$\times$RP & 6 & 1.84614 & 0.83077 & 0.09766 \\
		FM$\times$PROP$\times$RP & 12 & 1.82743 & 0.92478 & 0.09375 \\
		\hline
	\end{tabular}
\end{table}

\begin{table}[H]
	\centering
	\caption{Held-out Watt ontology results at $(\alpha,\delta)=(0.05,0.02)$.}
	\label{tab:watt-ontology-delta002}
	\small
	\setlength{\tabcolsep}{7pt}
	\renewcommand{\arraystretch}{1.08}
	\begin{tabular}{lrrrr}
		\hline
		Structural target & $K$ & $J$ & $R$ & $\widehat\rho_{0.02,\max}$ \\
		\hline
		FM & 3 & 1.80783 & 0.59609 & 0.28125 \\
		FM$\times$PROP & 6 & 1.75653 & 0.84869 & 0.16797 \\
		FM$\times$RP & 6 & 1.84614 & 0.83077 & 0.09766 \\
		FM$\times$PROP$\times$RP & 12 & 1.82743 & 0.92478 & 0.09375 \\
		\hline
	\end{tabular}
\end{table}

\begin{table}[H]
	\centering
	\caption{Held-out Watt ontology results at $(\alpha,\delta)=(0.05,0.05)$.}
	\label{tab:watt-ontology-delta005}
	\small
	\setlength{\tabcolsep}{7pt}
	\renewcommand{\arraystretch}{1.08}
	\begin{tabular}{lrrrr}
		\hline
		Structural target & $K$ & $J$ & $R$ & $\widehat\rho_{0.05,\max}$ \\
		\hline
		FM & 3 & 1.76025 & 0.61987 & 0.17969 \\
		FM$\times$PROP & 6 & 1.75653 & 0.84869 & 0.16797 \\
		FM$\times$RP & 6 & 1.84614 & 0.83077 & 0.09766 \\
		FM$\times$PROP$\times$RP & 12 & 1.82743 & 0.92478 & 0.09375 \\
		\hline
	\end{tabular}
\end{table}

\begin{table}[H]
	\centering
	\caption{Held-out Watt ontology results at $(\alpha,\delta)=(0.05,0.10)$.}
	\label{tab:watt-ontology-delta010}
	\small
	\setlength{\tabcolsep}{7pt}
	\renewcommand{\arraystretch}{1.08}
	\begin{tabular}{lrrrr}
		\hline
		Structural target & $K$ & $J$ & $R$ & $\widehat\rho_{0.10,\max}$ \\
		\hline
		FM & 3 & 1.73477 & 0.63262 & 0.11230 \\
		FM$\times$PROP & 6 & 1.72793 & 0.85441 & 0.13672 \\
		FM$\times$RP & 6 & 1.82606 & 0.83479 & 0.08789 \\
		FM$\times$PROP$\times$RP & 12 & 1.82743 & 0.92478 & 0.09375 \\
		\hline
	\end{tabular}
\end{table}

\begin{table}[H]
	\centering
	\caption{Held-out Watt ontology results at $(\alpha,\delta)=(0.05,0.20)$.}
	\label{tab:watt-ontology-delta020}
	\small
	\setlength{\tabcolsep}{7pt}
	\renewcommand{\arraystretch}{1.08}
	\begin{tabular}{lrrrr}
		\hline
		Structural target & $K$ & $J$ & $R$ & $\widehat\rho_{0.20,\max}$ \\
		\hline
		FM & 3 & 1.68870 & 0.65565 & 0.08032 \\
		FM$\times$PROP & 6 & 1.67109 & 0.86578 & 0.09473 \\
		FM$\times$RP & 6 & 1.77010 & 0.84598 & 0.07568 \\
		FM$\times$PROP$\times$RP & 12 & 1.77288 & 0.92974 & 0.07520 \\
		\hline
	\end{tabular}
\end{table}

\begin{table}[H]
	\centering
	\caption{Held-out Watt ontology results at $(\alpha,\delta)=(0.05,1)$.}
	\label{tab:watt-ontology-summary}
	\small
	\setlength{\tabcolsep}{7pt}
	\renewcommand{\arraystretch}{1.08}
	\begin{tabular}{lrrrr}
		\hline
		Structural target & $K$ & $J$ & $R$ & $\widehat\rho_{1,\max}$ \\
		\hline
		FM & 3 & 1.42052 & 0.78974 & 0.05586 \\
		FM$\times$PROP & 6 & 1.43800 & 0.91240 & 0.06709 \\
		FM$\times$RP & 6 & 1.52469 & 0.89506 & 0.05723 \\
		FM$\times$PROP$\times$RP & 12 & 1.59907 & 0.94554 & 0.06055 \\
		\hline
	\end{tabular}
\end{table}

For every positive $\delta$, the available Watt calibration sample was
insufficient to certify a nontrivial population-valid rule, so these tables
describe the held-out empirical frontier rather than the final safe-endpoint
deployment. The $\delta=0$ results are empirical pointwise diagnostics.
Across the full empirical frontier, INJ5 pulse/falloff remains the selected
experiment and the same ontology ordering in normalized resolution is retained:
FM$\times$PROP$\times$RP has the highest $R$, followed by FM$\times$PROP,
FM$\times$RP, and FM. Tightening the nuisance-validity requirement reduces
or leaves unchanged the empirical resolution, but the finer ontologies retain
high normalized resolution throughout the declared $\delta$ range.

\subsection{Complete restricted-observation results at \texorpdfstring{$\delta=0.05$}{delta=0.05}}

\Cref{tab:watt-restricted-complete,tab:wca-restricted-complete} report all 16 observation-horizon and remote-monitor combinations used in the constrained-sensing studies at $\alpha=\delta=0.05$. For each sensing level, $|\mathcal O_{\rm EIG}^{\rm MC}|$ denotes the size of the Monte Carlo uncertainty set around the model-index-EIG optimum. A clean disagreement is recorded when the RAED-selected experiment lies outside this set. The reported EIG result corresponds to the deterministic representative defined in Appendix~\ref{app:implementation}.

For each sensing level, the RAED- and EIG-selected experiments are evaluated using the same score-fitting, threshold-calibration, and independent held-out evaluation procedure. We report
\[
\Delta J = J_{\rm EIG}-J_{\rm RAED},
\]
so that positive values indicate a smaller held-out candidate set for the RAED-selected experiment. We also report the worst-family held-out upper-$5\%$-Tail false-exclusion risks, denoted $\widehat\rho_{\rm R}$ and $\widehat\rho_{\rm E}$ for the RAED- and EIG-selected experiments, respectively. These are empirical held-out risks rather than population-validity certificates.

\begin{table}[H]
	\centering
	\caption{Complete Watt restricted-observation results at $\alpha=\delta=0.05$. Positive $\Delta J$ indicates a smaller held-out candidate set for the RAED-selected experiment. The final two columns report the worst-family held-out upper-$5\%$-Tail false-exclusion risk for the RAED- and EIG-selected experiments.}
	\label{tab:watt-restricted-complete}
	\scriptsize
	\setlength{\tabcolsep}{3.5pt}
	\renewcommand{\arraystretch}{1.08}
	
	\begin{tabular}{rrrcrrrrr}
		\hline
		$T$ & Remote wells & $|\mathcal O_{\rm EIG}^{\rm MC}|$
		& Clean & $J_{\rm RAED}$ & $J_{\rm EIG}$ & $\Delta J$
		& $\widehat\rho_{\rm R}$ & $\widehat\rho_{\rm E}$ \\
		\hline
		7  & 0 & 1 & --  & 2.34551 & 2.34551 &  0.00000 & 0.05957 & 0.05957 \\
		7  & 1 & 2 & --  & 2.26751 & 2.24946 & -0.01805 & 0.05176 & 0.04785 \\
		7  & 2 & 1 & --  & 2.22135 & 2.22135 &  0.00000 & 0.04590 & 0.04590 \\
		7  & 3 & 2 & --  & 2.26528 & 2.28561 &  0.02033 & 0.05371 & 0.06055 \\
		14 & 0 & 1 & --  & 2.35125 & 2.35125 &  0.00000 & 0.05957 & 0.05957 \\
		14 & 1 & 1 & --  & 2.16966 & 2.16966 &  0.00000 & 0.04395 & 0.04395 \\
		14 & 2 & 3 & --  & 2.02651 & 2.02651 &  0.00000 & 0.05176 & 0.05176 \\
		14 & 3 & 2 & --  & 2.13758 & 2.13758 &  0.00000 & 0.05469 & 0.05469 \\
		30 & 0 & 1 & yes & 2.31328 & 2.36720 &  0.05392 & 0.05664 & 0.06445 \\
		30 & 1 & 1 & yes & 2.02734 & 2.04469 &  0.01735 & 0.06934 & 0.08105 \\
		30 & 2 & 2 & --  & 1.90103 & 1.90103 &  0.00000 & 0.05762 & 0.05762 \\
		30 & 3 & 1 & --  & 1.99173 & 1.99173 &  0.00000 & 0.08301 & 0.08301 \\
		90 & 0 & 2 & yes & 2.21790 & 2.26066 &  0.04276 & 0.17383 & 0.08398 \\
		90 & 1 & 3 & --  & 1.78530 & 1.97607 &  0.19077 & 0.09180 & 0.12402 \\
		90 & 2 & 1 & --  & 1.69167 & 1.69167 &  0.00000 & 0.15820 & 0.15820 \\
		90 & 3 & 1 & --  & 1.75649 & 1.75649 &  0.00000 & 0.08594 & 0.08594 \\
		\hline
	\end{tabular}
\end{table}

\begin{table}[H]
	\centering
	\caption{Complete WCA restricted-observation results at $\alpha=\delta=0.05$. Positive $\Delta J$ indicates a smaller held-out candidate set for the RAED-selected experiment. The final two columns report the worst-family held-out upper-$5\%$-Tail false-exclusion risk for the RAED- and EIG-selected experiments.}
	\label{tab:wca-restricted-complete}
	\scriptsize
	\setlength{\tabcolsep}{3.5pt}
	\renewcommand{\arraystretch}{1.08}
	
	\begin{tabular}{rrrcrrrrr}
		\hline
		$T$ & Remote wells & $|\mathcal O_{\rm EIG}^{\rm MC}|$
		& Clean & $J_{\rm RAED}$ & $J_{\rm EIG}$ & $\Delta J$
		& $\widehat\rho_{\rm R}$ & $\widehat\rho_{\rm E}$ \\
		\hline
		7  & 0 & 2  & yes & 2.81695 & 2.81291 & -0.00405 & 0.28770 & 0.16934 \\
		7  & 1 & 4  & yes & 2.56748 & 2.88200 &  0.31452 & 0.20586 & 0.04941 \\
		7  & 2 & 5  & yes & 2.37828 & 2.76330 &  0.38503 & 0.36484 & 0.11133 \\
		7  & 3 & 3  & yes & 2.36770 & 2.47161 &  0.10391 & 0.28301 & 0.21719 \\
		14 & 0 & 5  & yes & 2.79314 & 2.67491 & -0.11823 & 0.20176 & 0.23594 \\
		14 & 1 & 16 & yes & 2.60173 & 2.74852 &  0.14680 & 0.21016 & 0.20898 \\
		14 & 2 & 16 & yes & 2.30893 & 2.60945 &  0.30052 & 0.48906 & 0.21758 \\
		14 & 3 & 6  & --  & 2.34177 & 2.61700 &  0.27522 & 0.25176 & 0.16523 \\
		30 & 0 & 5  & --  & 2.77794 & 2.74348 & -0.03446 & 0.36934 & 0.54375 \\
		30 & 1 & 19 & --  & 2.57873 & 2.59372 &  0.01499 & 0.38984 & 0.14863 \\
		30 & 2 & 21 & --  & 2.52367 & 2.59551 &  0.07184 & 0.24688 & 0.16719 \\
		30 & 3 & 9  & --  & 2.26447 & 2.56853 &  0.30406 & 0.31504 & 0.16367 \\
		60 & 0 & 7  & --  & 2.77431 & 2.83377 &  0.05946 & 0.53281 & 0.13105 \\
		60 & 1 & 26 & --  & 2.54613 & 2.71471 &  0.16858 & 0.43008 & 0.11172 \\
		60 & 2 & 26 & --  & 2.35621 & 2.55004 &  0.19383 & 0.17773 & 0.24863 \\
		60 & 3 & 9  & --  & 2.32793 & 2.34423 &  0.01630 & 0.20723 & 0.27480 \\
		\hline
	\end{tabular}
\end{table}

The held-out Tail risks show that differences in candidate-set size should not be interpreted independently of false exclusion. In Watt, the two clean disagreements at $(T,B)=(30,0)$ and $(30,1)$ combine smaller candidate sets under RAED with lower worst-family held-out Tail risk, whereas the clean disagreement at $(90,0)$ combines a smaller candidate set with higher Tail risk. WCA exhibits a wider range of resolution--risk tradeoffs: several RAED-selected experiments produce substantially smaller candidate sets but also higher held-out Tail exclusion than the corresponding EIG-selected experiment. The restricted-sensing comparison therefore reports both quantities rather than interpreting $\Delta J$ alone as a measure of overall superiority.

None of the restricted-sensing rules in these tables carries a finite-sample population-validity certificate. The thresholds are independently empirically calibrated and the reported risks are measured on the held-out evaluation population.

\begin{table}[H]
	\centering
	\caption{Mean held-out candidate-set size across the 16 restricted-sensing cells at $\alpha=\delta=0.05$. Each comparator entry evaluates the deterministic representative of each criterion under the common learned-RAED downstream procedure.}
	\label{tab:restricted-comparator-means}
	\small
	\setlength{\tabcolsep}{5pt}
	\begin{tabular}{lrr}
		\hline
		Selection criterion & Watt mean $J$ & WCA mean $J$ \\
		\hline
		RAED & 2.09184 & 2.52034 \\
		Model-index EIG & 2.11103 & 2.65773 \\
		Full-latent EIG & 2.32000 & 2.70372 \\
		Bayes forced classification & 2.10709 & 2.69110 \\
		Expected elimination & 2.13803 & 2.68784 \\
		Minimum ordered-pair discrimination & 2.29097 & 2.58491 \\
		\hline
	\end{tabular}
\end{table}

We do not report paired 95\% confidence intervals for $\Delta J$ because the evaluation outputs contain aggregate candidate-set means rather than the nuisance-state-level paired contributions required for such an interval. Recovering those pairs would require rerunning the relevant model-fitting and evaluation procedure. Repeated observation-noise draws within a nuisance state are not independent nuisance samples and therefore cannot substitute for the missing outer-state pairs.

\subsection{Dependence of the selected restricted design on \texorpdfstring{$\delta$}{delta}}

Changing the Tail level $\delta$ can affect not only the calibrated candidate-set rule but also the physical experiment selected by RAED. Across the seven declared operating points
$\delta\in\{0,0.01,0.02,0.05,0.10,0.20,1\}$,
the selected design changes in three of the sixteen Watt sensing levels and ten of the sixteen WCA sensing levels. The affected horizon--monitor combinations are listed below.

\begin{center}
	\small
	\begin{tabularx}{0.96\linewidth}{lX}
		\hline
		Benchmark & Sensing levels with more than one selected design across $\delta$ \\
		\hline
		Watt &
		$(7,1)$, $(7,3)$, $(90,0)$ \\
		
		WCA &
		$(7,1)$, $(7,2)$, $(7,3)$,
		$(14,1)$, $(14,2)$, $(14,3)$,
		$(30,1)$, $(30,2)$,
		$(60,1)$, $(60,2)$ \\
		\hline
	\end{tabularx}
\end{center}

Each pair gives the observation horizon in days and the number of retained remote pressure wells. The stronger dependence in WCA indicates that the preferred sensing strategy is more sensitive there to the amount of nuisance-tail protection imposed by the design criterion. A change in selected design may reflect a different physical well test, a different monitor subset, or both.

\subsection{GWAE-Fluvial Tail-RAED comparison}
\label{app:gwae-results}

The GWAE-Fluvial study compares Tail-RAED at $\delta=0.05$ with nuisance-average RAED at $\delta=1$ on the same independently generated geological population. Both criteria use the same score-training and experiment-ranking populations. Each selected experiment is then calibrated separately and evaluated on the same independent evaluation population.

\paragraph{Independent experiment ranking.}
\Cref{tab:gwae-ranking-complete} reports the ranking of all nine well tests. The displayed $J$ values are computed on the independent ranking population before final selected-only calibration and therefore differ from the deployed evaluation values reported later in \Cref{tab:gwae-evaluation-complete}. Tail-RAED selects E07, the staged-rate pair, by a narrow margin over E01, while nuisance-average RAED selects E01, the symmetric-bank test.

\begin{table}[H]
\centering
\caption{GWAE-Fluvial experiment ranking for Tail-RAED at $\delta=0.05$ and nuisance-average RAED at $\delta=1$.}
\label{tab:gwae-ranking-complete}
\small
\setlength{\tabcolsep}{4pt}
\renewcommand{\arraystretch}{1.08}

\begin{tabular}{llrrrr}
\hline
ID & Test & Tail rank & Tail $J$ & Average rank & Average $J$ \\
\hline
E01 & Symmetric banks    & 2 & 1.557934 & 1 & 1.019932 \\
E02 & West bank          & 5 & 1.631516 & 6 & 1.187480 \\
E03 & East bank          & 3 & 1.612088 & 5 & 1.152689 \\
E04 & Cross I1--I6       & 8 & 1.803926 & 7 & 1.242840 \\
E05 & Cross I3--I4       & 7 & 1.751137 & 8 & 1.255127 \\
E06 & Midline pair       & 9 & 1.969910 & 9 & 1.381551 \\
E07 & Staged-rate pair   & 1 & 1.557348 & 4 & 1.048691 \\
E08 & Pulse/falloff      & 6 & 1.634051 & 3 & 1.047578 \\
E09 & Alternating banks  & 4 & 1.625998 & 2 & 1.025992 \\
\hline
\end{tabular}
\end{table}

\paragraph{Selected-experiment calibration.}
Calibration uses 5000 independent nuisance states per structural family for the experiment selected by each criterion. With familywise failure probability $\gamma_k=0.025$, the resulting fixed-operating-point statement has 95\% joint confidence across M1 and M2. For both selected experiments, calibration produces a nontrivial candidate-set rule:

\begin{center}
\small
\begin{tabular}{llcc}
\hline
Method & Selected experiment & M1 UCB & M2 UCB \\
\hline
Tail-RAED, $\delta=0.05$
& E07 staged-rate pair
& 0.048330 & 0.049103 \\

Nuisance-average, $\delta=1$
& E01 symmetric banks
& 0.025999 & 0.049341 \\
\hline
\end{tabular}
\end{center}

For Tail-RAED, the corresponding empirical calibration Tail losses are $0.015922$ for M1 and $0.016391$ for M2; the certification statement uses the upper confidence bounds reported above.

\paragraph{Common evaluation population.}
\Cref{tab:gwae-evaluation-complete} reports the principal diagnostics on the common 5000-state-per-family evaluation population. Conditional false-exclusion risk is estimated separately for each nuisance state using repeated observation-noise draws, and the worst-5\% statistic averages the 250 highest-risk states within each family.

\begin{table}[H]
\centering
\caption{GWAE-Fluvial evaluation results on the common untouched nuisance population. Worst-5\% is the empirical mean conditional false-exclusion risk among the 5\% highest-risk nuisance states in each family.}
\label{tab:gwae-evaluation-complete}
\small
\setlength{\tabcolsep}{4pt}
\renewcommand{\arraystretch}{1.08}

\begin{tabular}{llrrrrr}
\hline
Family & Method & Mean risk & Worst-5\% & P99 & Correct singleton & Ambiguous \\
\hline
M1 & Tail    & 0.000689 & 0.013781 & 0.000000 & 0.093231 & 0.906080 \\
M1 & Average & 0.022152 & 0.433758 & 0.994160 & 0.907827 & 0.070021 \\
M2 & Tail    & 0.000149 & 0.002977 & 0.000000 & 0.329347 & 0.670504 \\
M2 & Average & 0.046046 & 0.875844 & 1.000000 & 0.929433 & 0.024521 \\
\hline
\end{tabular}
\end{table}

With equal weighting of the two structural families, Tail-RAED gives
$J=1.788292$ and $R=0.211708$, compared with $J=1.047271$ and
$R=0.952729$ for nuisance-average RAED. Because $K=2$, the corresponding
overall ambiguity frequencies are $0.788292$ and $0.047271$. The pooled
empirical false-exclusion risks are $0.000419$ and $0.034099$, respectively.
Tail protection therefore increases expected candidate-set size by
$\Delta J=0.741021$ while sharply reducing false exclusion in the adverse
geological tail.

\paragraph{Hard-region behavior.}
As a separate statewise diagnostic, we first call a nuisance state difficult for a rule when its conditional false-exclusion risk satisfies $r_k(\theta)>0.05$. Under this definition, the two rules divide the evaluation population as follows:

\begin{center}
\small
\setlength{\tabcolsep}{3.5pt}
\begin{tabular}{lrrrr}
\hline
Family & Easy for both & Average hard, Tail protected & Tail hard, Average protected & Hard for both \\
\hline
M1 & 4825 & 170 & 0 & 5 \\
M2 & 4686 & 310 & 0 & 4 \\
\hline
\end{tabular}
\end{center}

Thus almost every state that is difficult for Tail-RAED is also difficult for the nuisance-average rule, whereas a substantially larger set of states is difficult only for the nuisance-average rule.

Within the nuisance-average rule's own worst 5\% of geological states, the difference is especially pronounced. For M1, Tail-RAED returns ambiguity on $98.62\%$ of observation draws and falsely excludes the true family on $1.38\%$, compared with $39.28\%$ ambiguity and $43.38\%$ false exclusion for nuisance-average RAED. For M2, Tail-RAED returns ambiguity on $99.71\%$ of draws and falsely excludes on $0.29\%$, compared with $11.75\%$ ambiguity and $87.58\%$ false exclusion for nuisance-average RAED. This is the behavior summarized in Figure~\ref{fig:gwae-hard-region}.

\subsection{Methane frontier}

The same reactor condition, \texttt{MOX048}, is selected at every declared value of $\delta$. The $\delta=0$ point is an empirical diagnostic only. For each fixed positive $\delta$, calibration gives a familywise positive-tail population guarantee with 95\% joint confidence across PL, LH, and MVK. These guarantees apply separately at each operating point and are not simultaneous across the full $\delta$ grid.

\begin{center}
	\small
	\begin{tabular}{cccc}
		\hline
		$\delta$ & $J$ & $R$ & Worst-family held-out $\widehat{\rho}_{\delta}$ \\
		\hline
		\MethaneFrontierRows
		\hline
	\end{tabular}
\end{center}

Here $R=(K-J)/(K-1)$ is the normalized resolution of the resulting learned rule. At the reference operating point $\delta=0.05$, the familywise calibration upper bounds are $\MethaneUcbPL$, $\MethaneUcbLH$, and $\MethaneUcbMVK$ for PL, LH, and MVK, respectively. Requiring the finite-sample certificate increases the held-out mean candidate-set size by $\MethaneReferenceCost$ relative to the corresponding empirically calibrated rule.

\bibliographystyle{unsrt}
\bibliography{raed_references}

\end{document}